\documentclass[10pt]{article}
\usepackage[accepted]{tmlr}

\usepackage{amsmath,amsfonts,bm}









\def\eqref#1{equation~\ref{#1}}









\def\1{\bm{1}}










\DeclareMathAlphabet{\mathsfit}{\encodingdefault}{\sfdefault}{m}{sl}
\SetMathAlphabet{\mathsfit}{bold}{\encodingdefault}{\sfdefault}{bx}{n}













\usepackage{hyperref}
\usepackage{url}
\usepackage[utf8]{inputenc} 
\usepackage[T1]{fontenc}    
\usepackage{url}            
\usepackage{booktabs}       
\usepackage{amsfonts}       
\usepackage{nicefrac}       
\usepackage{microtype}      
\usepackage{amsmath,amsthm,amssymb}
\usepackage{bm,float,graphicx}
\usepackage{booktabs} 
\usepackage{multirow}
\usepackage{tikz}
\usepackage{multicol}
\usepackage{subcaption}
\usepackage{dsfont}
\usepackage{colortbl}
\usepackage{wrapfig}

\theoremstyle{plain}
\newtheorem{theorem}{Theorem}[section]
\newtheorem{proposition}[theorem]{Proposition}
\newtheorem{lemma}[theorem]{Lemma}
\newtheorem{corollary}[theorem]{Corollary}

\newtheorem{definition}[theorem]{Definition}

\usepackage{url}
\usepackage{wrapfig}
\definecolor{mydarkred}{rgb}{0.6,0,0}
\definecolor{mydarkgreen}{rgb}{0,0.6,0}
\usepackage{hyperref}[colorlinks,
linkcolor=mydarkred,
citecolor=mydarkgreen]

\newcommand{\uptriangle}{%
  \begin{tikzpicture}
    \fill[teal] (0,0) -- (0.25,0) -- (0.125,0.25) -- cycle;
  \end{tikzpicture}%
}
\newcommand{\downtriangle}{%
  \begin{tikzpicture}
    \fill[olive] (0,0.25) -- (0.25,0.25) -- (0.125,0) -- cycle;
  \end{tikzpicture}%
}

\title{Does confidence calibration improve conformal prediction?}

\author{\name Huajun Xi \thanks{Equal Contribution} \email 12112806@mail.sustech.edu.cn \\
      \addr Department of Statistics and Data Science\\
      Southern University of Science and Technology
      \AND
      \name Jianguo Huang \footnotemark[1] \email jianguo.huang@ntu.edu.sg \\
      \addr College of Computing and Data Science \\
      Nanyang Technological University
      \AND
      \name Kangdao Liu \email kangdaoliu@gmail.com \\
      \addr Department of Computer and Information Science \\ 
      University of Macau
      \AND 
      \name Lei Feng \email feng\_lei@sutd.edu.sg \\
      \addr Information Systems Technology and Design Pillar \\ 
      Singapore University of Technology and Design
      \AND
      \name Hongxin Wei \thanks{Correspond to \texttt{weihx@sustech.edu.cn}.} \email weihx@sustech.edu.cn \\
      \addr Department of Statistics and Data Science \\ 
      Southern University of Science and Technology
      }

\begin{document}
\maketitle
\begin{abstract}
Conformal prediction is an emerging technique for uncertainty quantification that constructs prediction sets guaranteed to contain the true label with a predefined probability. Previous works often employ temperature scaling to calibrate classifiers, assuming that confidence calibration benefits conformal prediction. However, the specific impact of confidence calibration on conformal prediction remains underexplored. In this work, we make two key discoveries about the impact of confidence calibration methods on adaptive conformal prediction. Firstly, we empirically show that current confidence calibration methods (e.g., temperature scaling) typically lead to larger prediction sets in adaptive conformal prediction. Secondly, by investigating the role of temperature value, we observe that high-confidence predictions can enhance the efficiency of adaptive conformal prediction. Theoretically, we prove that predictions with higher confidence result in smaller prediction sets on expectation. This finding implies that the rescaling parameters in these calibration methods, when optimized with cross-entropy loss, might counteract the goal of generating efficient prediction sets. To address this issue, we propose \textbf{Conformal Temperature Scaling} (ConfTS), a variant of temperature scaling with a novel loss function designed to enhance the efficiency of prediction sets. This approach can be extended to optimize the parameters of other post-hoc methods of confidence calibration.
Extensive experiments demonstrate that our method improves existing adaptive conformal prediction methods in classification tasks, especially with LLMs.
\end{abstract}

\section{Introduction}
Ensuring the reliability of model predictions is crucial for the safe deployment of machine learning such as autonomous driving \citep{bojarski2016end} and medical diagnostics \citep{caruana2015intelligible}. Numerous methods have been developed to estimate uncertainty and incorporate it into predictive models, including confidence calibration \citep{guo2017calibration} and Bayesian neural networks \citep{smith2013uncertainty}. However, these approaches lack theoretical guarantees of model performance. \textit{Conformal prediction}, on the other hand, offers a systematic approach to construct prediction sets that contain ground-truth labels with a desired coverage guarantee \citep{vovk2005algorithmic,shafer2008tutorial,balasubramanian2014conformal,angelopoulos2021gentle}. This framework thus provides trustworthiness in real-world scenarios where wrong predictions are dangerous.

In the literature, conformal prediction is frequently associated with \textit{confidence calibration}, which expects the model to predict softmax probabilities that faithfully estimate the true correctness \citep{wei2022mitigating,yuksekgonul2023beyond,wang2023calibration,DBLP:conf/icml/Wang0W0ZW24}. For example, existing conformal prediction methods usually employ temperature scaling \citep{guo2017calibration}, a post-hoc method that rescales the logits with a scalar temperature, for a better calibration performance \citep{angelopoulos2020uncertainty,lu2022estimating,lu2023federated,gibbs2023conformal}. The underlying hypothesis is that well-calibrated models could yield precise probability estimates, thus enhancing the reliability of generated prediction sets. However, the rigorous impacts of current confidence calibration techniques on conformal prediction remain ambiguous in the literature, which motivates our analysis of the connection between conformal prediction and confidence calibration.

In this paper, we empirically show that existing methods of confidence calibration increase the size of prediction sets generated by adaptive conformal prediction methods. Moreover, we find that predictions with high confidence (rescaled with a small temperature value) tend to produce efficient prediction sets while maintaining the desired coverage guarantees.  However, simply adopting an extremely small temperature value may result in meaningless prediction sets, as some tail probabilities can be truncated to zero due to the finite-precision issue. Theoretically, we prove that a smaller temperature value leads to larger non-conformity scores, resulting in more efficient prediction sets on expectation. This highlights that rescaling parameters of post-hoc calibration methods, optimized by the cross-entropy loss, might counteract the goal of generating efficient prediction sets.

To validate our theoretical findings, we propose a variant of temperature scaling, \textit{Conformal Temperature Scaling} (ConfTS), which rectifies the optimization objective through the efficiency gap, i.e., the deviation between the threshold and the non-conformity score of the ground truth. In particular, ConfTS optimizes the temperature value by minimizing the efficiency gap. This approach can be extended to optimize the parameters of other post-hoc methods of confidence calibration, e.g., Vector Scaling and Platt Scaling. Extensive experiments show that ConfTS can effectively enhance the efficiency of existing adaptive conformal prediction techniques, APS \citep{romano2020classification} and RAPS \citep{angelopoulos2020uncertainty}. Notably, we empirically show that post-hoc calibration methods optimized by our loss function can also improve the efficiency of prediction sets in both image and text classification (including large language models), which demonstrates the generality of our method. In addition, we provide an ablation study of loss functions to show that the proposed loss function can outperform the ConfTr loss \citep{stutz2021learning}. In practice, our approach is straightforward to implement within deep learning frameworks, requiring no hyperparameter tuning and additional computational costs compared to standard temperature scaling.

We summarize our contributions as follows:
\begin{itemize}
\item We discover that current confidence calibration methods typically lead to larger prediction sets in adaptive conformal prediction, while high-confidence predictions (using small temperatures) can enhance the efficiency of prediction sets. We further identify a practical limitation where extremely small temperature values cause numerical precision issues.
\item We provide a theoretical analysis by proving that applying smaller temperature values in temperature scaling results in more efficient prediction sets on expectation. This theoretical insight explains the relationship between confidence calibration and conformal prediction.
\item We validate our theoretical findings by developing Conformal Temperature Scaling (ConfTS), a variant of temperature scaling that exploits the relationship between temperature and set efficiency. Extensive experiments demonstrate that ConfTS enhances the efficiency of prediction sets in adaptive conformal prediction and can be extended to other post-hoc methods of confidence calibration.
\end{itemize}

\section{Preliminary}\label{section:preliminary}
In this work, we consider the multi-class classification task with $K$ classes. Let $\mathcal{X}\subset\mathbb{R}^{d}$ be the input space and $\mathcal{Y}:=\{1,2,\cdots,K\}$ be the label space. We represent a pre-trained classification model by $f:\mathcal{X} \rightarrow \mathbb{R}^{K}$. Let $(X,Y)\sim \mathcal{P}_{\mathcal{X}\mathcal{Y}}$ denote a random data pair sampled from a joint data distribution $\mathcal{P}_{\mathcal{X}\mathcal{Y}}$, and $\bm{f}_y(\bm{x})$ denote the $y$-th element of logits vector $\bm{f}(\bm{x})$ with a instance $\bm{x}$. Normally, the conditional probability of class $y$ is approximated by the softmax probability output $\bm{\pi}(\bm{x})$ defined as:
\begin{align}
\label{eq:prediction}
\mathbb{P}\{Y=y|X=x\}\approx \pi_y(\bm{x};t)=\sigma (f(\bm{x});t)_y =  \frac{e^{f_y(\boldsymbol{x})/t}}{\sum_{i=1}^K e^{f_i(\boldsymbol{x})/t}} ,
\end{align}
where $\sigma$ is the softmax function and $t$ denotes the temperature parameter \citep{guo2017calibration}. The temperature softens the output probability with $t > 1$ and sharpens the probability with $t < 1$. After training the model, the temperature can be tuned on a held-out validation set by optimization methods.

\textbf{Conformal prediction.} To provide theoretical guarantees for model predictions, conformal prediction \citep{vovk2005algorithmic} is designated for producing prediction sets that contain ground-truth labels with a desired probability rather than predicting one-hot labels. In particular, the goal of conformal prediction is to construct a set-valued mapping $\mathcal{C}:\mathcal{X}\rightarrow 2^\mathcal{Y}$ that satisfies the \textit{marginal coverage}: 
\begin{equation}
\label{eq:validity}
\mathbb{P}(Y\in \mathcal{C}(\bm{x}))\geq 1-\alpha,  
\end{equation}
where $\alpha\in(0,1)$ denotes a user-specified error rate, and $\mathcal{C}(\bm{x})\subset\mathcal{Y}$ is the generated prediction set.

Before deployment, conformal prediction begins with a calibration step, using a held-out calibration set $\mathcal{D}_{cal}:=\{(\boldsymbol{x}_i,y_i)\}_{i=1}^n$. We calculate the non-conformity score $s_i=\mathcal{S}(\boldsymbol{x}_i,y_i)$ for each example $(\boldsymbol{x}_i,y_i)$, where $s_i$ is a measure of deviation between an example and the training data, which we will specify later. Then, we determine the $1-\alpha$ quantile of the non-conformity scores as a threshold:
\begin{equation}\label{eq:threshold}
    \tau=\inf\left\{s:\frac{|\{i:\mathcal{S}(\bm{x}_{i},y_{i})\leq s\}|}{n}\geq\frac{\lceil(n+1)(1-\alpha)\rceil}{n}\right\}.
\end{equation}

For a test instance $\boldsymbol{x}_{n+1}$, we first calculate the non-conformity score for each label in $\mathcal{Y}$, and then construct the prediction set $\mathcal{C}(\boldsymbol{x}_{n+1})$ by including labels whose non-conformity score falls within $\tau$:
\begin{equation} \label{eq:cp_set}
   \mathcal{C}(\bm{x}_{n+1})=\{y \in \mathcal{Y}: \mathcal{S}(\bm{x}_{n+1},y)\leq\tau\}.
\end{equation}

In this paper, we focus on \textit{adaptive} conformal prediction methods, which are designed to improve conditional coverage \citep{romano2020classification}. However, they usually suffer from inefficiency in practice: these methods commonly produce large prediction sets \citep{angelopoulos2020uncertainty}. In particular, we take the two representative methods: APS \citep{romano2020classification} and RAPS \citep{angelopoulos2020uncertainty}. 

\textbf{Adaptive Prediction Set (APS). \citep{romano2020classification}} In the APS method, the non-conformity score of a data pair $(\bm{x},y)$ is calculated by accumulating the sorted softmax probability, defined as:
\begin{equation}\label{eq:aps_score}
\begin{aligned}
    \mathcal{S}_{APS}(\bm{x},y)=\pi_{(1)}(\bm{x})+\cdots+ u \cdot \pi_{o(y,\pi(\bm{x}))}(\bm{x}),
\end{aligned}
\end{equation}
where $\pi_{(1)}(\bm{x}),\pi_{(2)}(\bm{x}),\cdots,\pi_{(K)}(\bm{x})$ are the sorted softmax probabilities in descending order, and $o(y,\pi(\bm{x}))$ denotes the order of $\pi_{y}(\bm{x})$, i.e., the softmax probability for the ground-truth label $y$. In addition, the term $u$ is an independent random variable that follows a uniform distribution on $[0,1]$.

\textbf{Regularized Adaptive Prediction Set (RAPS). \citep{angelopoulos2020uncertainty}} The non-conformity score function of RAPS encourages a small set size by adding a penalty, as formally defined below:
\begin{equation}\label{eq:raps_score}
\begin{aligned}
    \mathcal{S}_{RAPS}(\bm{x},y)=\pi_{(1)}(\bm{x})+\cdots+u \cdot\pi_{o(y,\pi(\bm{x}))}(\bm{x})+\lambda\cdot(o(y,\pi(\bm{x}))-k_{reg})^{+},
\end{aligned}
\end{equation}
where $(z)^{+}=\max\{0,z\}$, $k_{reg}$ controls the number of penalized classes, and $\lambda$ is the penalty term.

Notably, both methods incorporate a uniform random variable $u$ to achieve exact $1-\alpha$ coverage \citep{angelopoulos2020uncertainty}. Moreover, we use \textit{coverage} and \textit{average size} to evaluate the prediction sets. A detailed description of the metrics is provided in Appendix~\ref{appendix:conformal_prediction}.

\section{Motivation}\label{section:motivation}

\subsection{Adaptive conformal prediction with calibrated prediction}\label{subection:calibration} 
\textit{Confidence calibration}~\citep{guo2017calibration} expects the model to predict softmax probabilities that faithfully estimate the true correctness: $\forall p\in [0,1]$, $\mathbb{P}\{Y=y|\pi_y(\bm{x})=p\} = p$. To measure the miscalibration, \textit{Expected Calibration Error} (ECE) \citep{naeini2015obtaining} averages the difference between the accuracy $\mathrm{acc}(\cdot)$ and confidence $\mathrm{conf}(\cdot)$ in $M$ bins: $\text{ECE}=\sum_{m=1}^{M}\frac{|B_m|}{|\mathcal{I}_{test}|}\left|acc(B_m)-conf(B_m)\right|$, where $B_m$ denotes the $m$-th bin. In conformal prediction, previous work claims that deep learning models are often badly miscalibrated, leading to large prediction sets that do not faithfully articulate the uncertainty of the model \citep{angelopoulos2020uncertainty}. To address the issue, researchers usually employ temperature scaling \citep{guo2017calibration} to process the model outputs for better calibration performance. However, the precise impacts of current confidence calibration techniques on adaptive conformal prediction remain unexplored, which motivates our investigation into this connection. 

To figure out the correlation between confidence calibration and adaptive conformal prediction, we incorporate various confidence calibration methods to adaptive conformal predictors for a ResNet50 model \citep{he2016deep} on CIFAR-100 dataset \citep{krizhevsky2009learning}. Specifically, we use six calibration methods, including four post-hoc methods --  \textit{vector scaling} \citep{guo2017calibration}, \textit{Platt scaling} \citep{platt1999probabilistic}, \textit{temperature scaling} \citep{guo2017calibration}, \textit{Bayesian methods} \citep{daxberger2021laplace}, and two training methods -- \textit{label smoothing} \citep{szegedy2016rethinking}, \textit{mixup} \citep{zhang2017mixup}. More details of calibration methods and setups are presented in Appendix~\ref{appendix:calibration_methods} and  Appendix~\ref{appendix:motivation_experiment_setup}. 

\begin{table*}[t]
\centering 
\caption{The performance of APS and RAPS on CIFAR-100 dataset with ResNet50 model, using various calibration methods. In particular, we apply label smoothing (LS), Mixup (Mixup), Bayesian methods (Bayesian), vector scaling (VS), Platt scaling (PS), and temperature scaling (TS). We do not employ calibration techniques in the baseline (Base). We repeat each experiment for 20 times. ``$\downarrow$'' indicates smaller values are better. ``$\uptriangle$'' and ``$\downtriangle$'' indicate whether the performance is superior/inferior to the baseline. The results show that existing confidence calibration methods deteriorate the efficiency of APS and RAPS.}
\label{table:calibration}
\resizebox{0.9\textwidth}{!}{
\begin{tabular}{@{}cccccccccc@{}}
\toprule
\multicolumn{3}{c}{Method}                                     & Base & LS & Mixup & Bayesian & TS & PS & VS \\ \midrule
\multicolumn{3}{c}{Accuracy}                                   & 0.77     & 0.78           & 0.78  & 0.77     & 0.77        & 0.77  & 0.77   \\ \midrule
\multicolumn{3}{c}{ECE $\downarrow$}                                        & 8.79     & 4.39 $\uptriangle$           & 2.96 $\uptriangle$  & 4.30 $\uptriangle$      & 3.62 $\uptriangle$        & 3.81 $\uptriangle$  & 4.06 $\uptriangle$   \\ \midrule
\multirow{4}{*}{\rotatebox{90}{$\alpha=0.1$\quad}}  & \multirow{2}{*}{APS}  & Coverage & 0.90     & 0.90           & 0.90  & 0.90     & 0.90        & 0.90  & 0.90   \\ \cmidrule(l){3-10} 
&                       & Avg.size $\downarrow$ & 4.91     & 11.9 $\downtriangle$           & 12.5 $\downtriangle$  & 7.55 $\downtriangle$     & 6.69 $\downtriangle$        & 7.75 $\downtriangle$  & 7.35 $\downtriangle$   \\ \cmidrule(l){2-10} 
& \multirow{2}{*}{RAPS} & Coverage & 0.90     & 0.90           & 0.90  & 0.90     & 0.90        & 0.90  & 0.90   \\ \cmidrule(l){3-10} 
&                       & Avg.size $\downarrow$ & 2.56     & 9.50 $\downtriangle$            & 10.2 $\downtriangle$  & 6.46 $\downtriangle$     & 3.58 $\downtriangle$        & 3.72 $\downtriangle$  & 3.85 $\downtriangle$   \\ \midrule
\multirow{4}{*}{\rotatebox{90}{$\alpha=0.05$\quad}} & \multirow{2}{*}{APS}  & Coverage & 0.95     & 0.95           & 0.95  & 0.95     & 0.95        & 0.95  & 0.95   \\ \cmidrule(l){3-10} 
&                       & Avg.size $\downarrow$ & 11.1     & 19.8 $\downtriangle$           & 20.1 $\downtriangle$  & 15.6 $\downtriangle$     & 12.8 $\downtriangle$        & 13.9 $\downtriangle$  & 11.3 $\downtriangle$   \\ \cmidrule(l){2-10} 
& \multirow{2}{*}{RAPS} & Coverage & 0.95     & 0.95           & 0.95  & 0.95     & 0.95        & 0.95  & 0.95   \\ \cmidrule(l){3-10} 
&                       & Avg.size $\downarrow$ & 6.95     & 14.5 $\downtriangle$           & 15.5 $\downtriangle$  & 9.34 $\downtriangle$     & 10.4 $\downtriangle$        & 11.0 $\downtriangle$  & 8.70 $\downtriangle$   \\ \bottomrule
\end{tabular}}
\end{table*}

\textbf{Confidence calibration methods deteriorate the efficiency of prediction sets.} In Table~\ref{table:calibration}, we present the performance of confidence calibration and conformal prediction using APS and RAPS with various calibration methods for a ResNet50 model. 
The results show that the influences of those calibration methods are consistent: \textbf{models calibrated by both post-hoc and training calibration techniques generate large prediction sets} with lower ECE (i.e., better calibration). 
For example, on the ImageNet dataset, temperature scaling enlarges the average size of prediction sets of APS from 9.06 to 12.1, while decreasing the ECE from 3.69\% to 2.24\%. This finding demonstrates an inverse relationship between calibration performance and prediction set efficiency.
In addition, incorporating calibration methods into conformal prediction does not violate the $1-\alpha$ marginal coverage as the assumption of data exchangeability is still satisfied: we use a hold-out validation dataset for conducting confidence calibration methods. In addition, we present the results of the LAC score \citep{sadinle2019least} in Appendix~\ref{app:cali_thr}, where we observe no clear correlation between confidence calibration methods and conformal prediction.

Overall, we empirically show that current confidence calibration methods negatively impact the efficiency of prediction sets, challenging the conventional practice of employing temperature scaling in adaptive conformal prediction. While confidence calibration methods are primarily designed to address overconfidence, we conjecture that high confidence may enhance prediction sets in efficiency.

\subsection{Adaptive Conformal prediction with high-confidence prediction}\label{subsection:overconfidence} 
In this section, we investigate how the high-confidence prediction influences the adaptive conformal prediction.  In particular, we employ temperature scaling with different temperatures $t\in[0.4,0.5,\cdots,1.3]$ (defined in Eq.~(\ref{eq:prediction})) to control the confidence level. The analysis is conducted on the ImageNet dataset with various model architectures, using APS and RAPS at $\alpha=0.1$.

\textbf{High confidence enhances the efficiency of adaptive conformal prediction.} In Figures~\ref{pic:motivation_overconfidence_aps} and \ref{pic:motivation_overconfidence_raps}, we present the average size of prediction sets generated by APS and RAPS under various temperature values $t$. The results show that a highly-confident model, produced by a small temperature value, would decrease the average size of prediction sets. For example, using VGG16, the average size is reduced by four times -- from 20 to 5, with the decrease of the temperature value from 1.3 to 0.5. There naturally arises a question: \textit{is it always better for efficiency to take smaller temperature values?}

\begin{figure}[t]
\centering
\begin{subfigure}{0.32\textwidth}
\centering
\includegraphics[width=0.9\linewidth]{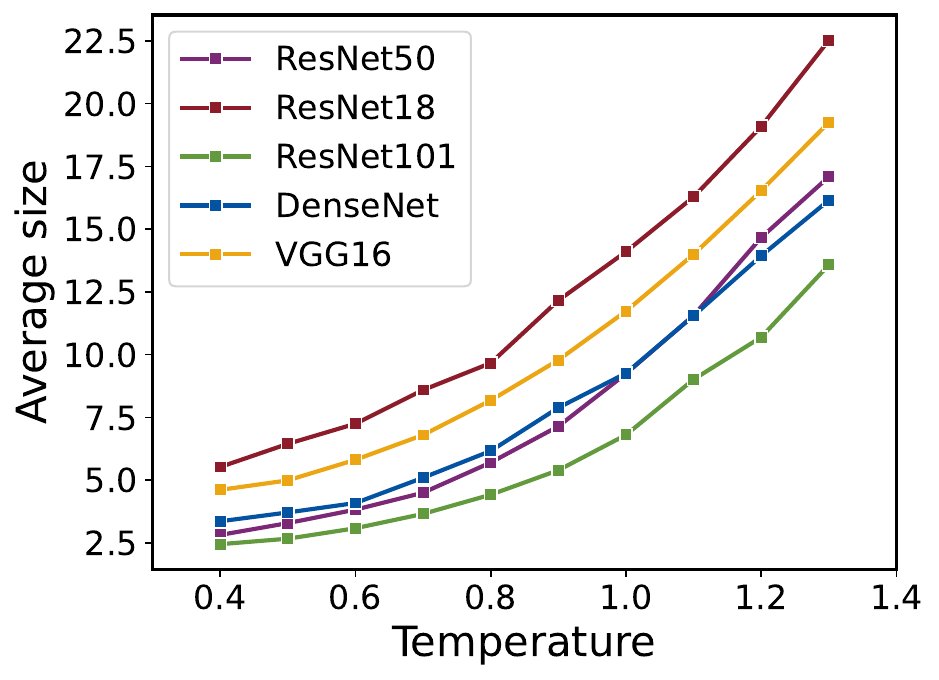}
\caption{APS with various $t$}
\label{pic:motivation_overconfidence_aps}
\end{subfigure}
\begin{subfigure}{0.31\textwidth}
\centering
\includegraphics[width=0.9\linewidth]{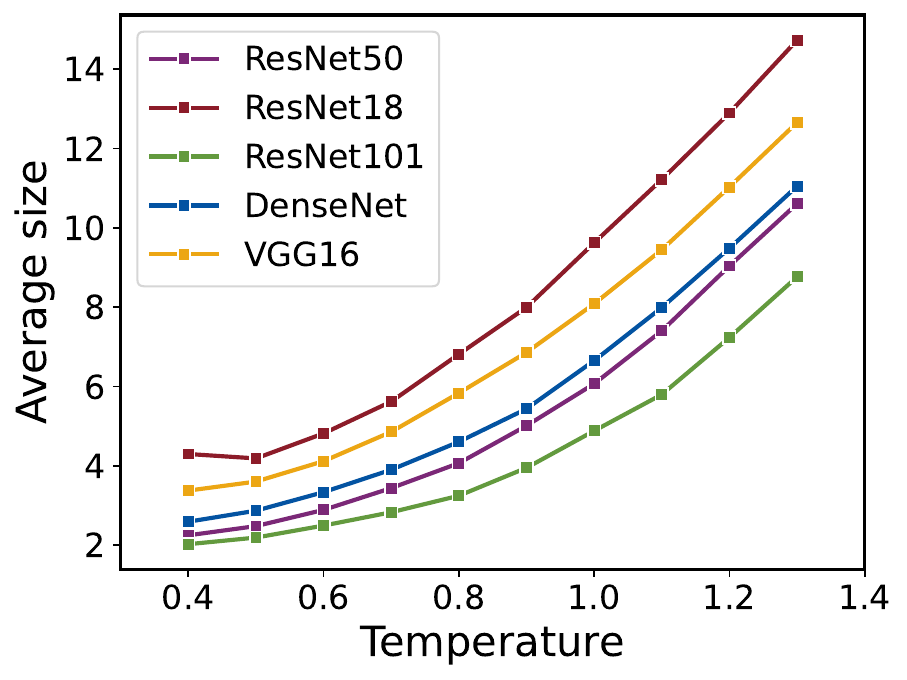}
\caption{RAPS with various $t$}
\label{pic:motivation_overconfidence_raps}
\end{subfigure}
\begin{subfigure}{0.35\textwidth}
\centering
\includegraphics[width=0.9\linewidth]{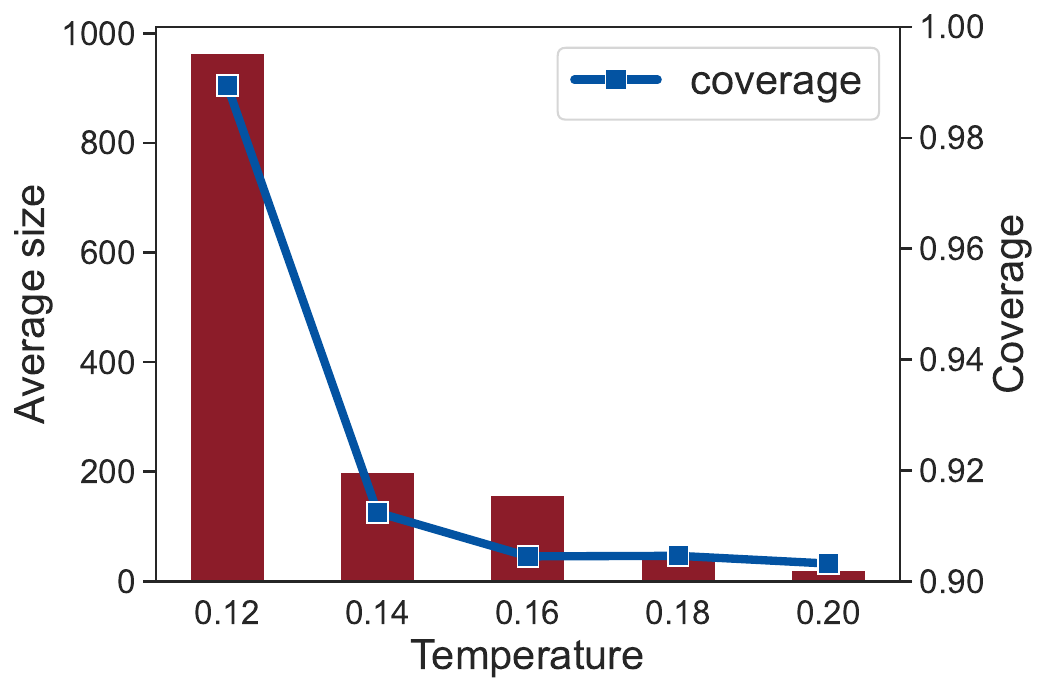}
\caption{APS with small $t$}\label{fig:overconfidence_small_t(a)}
\end{subfigure}
\caption{ (a) \& (b): The performance of APS and RAPS with different temperatures on ImageNet. The results show that high-confidence predictions, with a small temperature, lead to efficient prediction sets. (c): The performance of APS for ResNet18 on ImageNet with \textit{extremely} low temperatures. In this setting, APS generates large prediction sets with conservative coverage due to finite precision.}
\label{fig:overconfidence}
\end{figure}

In Figure~\ref{fig:overconfidence_small_t(a)}, we report the average size of prediction sets produced by APS on ImageNet with ResNet18, using \textit{extremely} small temperatures (i.e. $t\in\{0.12,0.14,\cdots,0.2\}$). Different from the above, APS generates larger prediction sets with smaller temperatures in this range, even leading to conservative coverage. This problem stems from floating point numerical errors caused by finite precision (see Appendix~\ref{appendix:num_err} for a detailed explanation). The phenomenon indicates that it is non-trivial to find the optimal temperature value for the highest efficiency of adaptive conformal prediction. 

\subsection{Theoretical explanation} \label{subsection:theory}
Intuitively, confident predictions are expected to yield smaller prediction sets than conservative ones. Here, we provide a theoretical justification for this by showing how the reduction of temperature decreases the average size of prediction sets in the case of non-randomized APS (simply omit the random term in Eq.~(\ref{eq:aps_score})). We start by analyzing the relationship between the temperature $t$ and the APS score. For simplicity, assuming the logits vector $\bm{f}(\bm{x}):=[f_1(\bm{x}), f_2(\bm{x}), \dots, f_K(\bm{x})]^T$ satisfies $f_1(\bm{x})>f_2(\bm{x})>\cdots>f_K(\bm{x})$, then, the non-randomized APS score for class $k \in \mathcal{Y} $ is given by:
\begin{equation}
\label{eq:score_t}
    \mathcal{S}(\bm{x}, k, t)=\sum_{i=1}^{k}\frac{e^{f_i(\bm{x})/t}}{\sum_{j=1}^K e^{f_j(\bm{x})/t}}.
\end{equation}
Then, we can derive the following proposition on the connection of the temperature and the score:
\begin{proposition}\label{proposition:temperature_score}
For instance $\boldsymbol{x}\in\mathcal{X}$, let $\mathcal{S}(\bm{x},k,t)$ be the non-conformity score function of an arbitrary class $k\in\mathcal{Y}$, defined as in Eq.~\ref{eq:score_t}. Then, for a fixed temperature $t_0\in\mathbb{R}^{+}$ and $\forall t\in(0,t_0)$, we have
$$\mathcal{S}(\boldsymbol{x},k,t_0)\leq\mathcal{S}(\boldsymbol{x},k,t).$$
\end{proposition}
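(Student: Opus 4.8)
The plan is to reparametrize by the inverse temperature $\beta = 1/t$ and to show that the cumulative top-$k$ softmax mass is monotone in $\beta$. Since the logits satisfy $f_1(\bm{x})>\cdots>f_K(\bm{x})$ and softmax is order-preserving, for every $t>0$ class $i$ carries the $i$-th largest probability; hence $\mathcal{S}(\bm{x},k,t)=\sum_{i=1}^{k}\pi_i(\bm{x};t)$ is exactly the sum of the $k$ largest softmax probabilities, and this ranking is independent of $t$. Writing $S_k(\beta):=\sum_{i=1}^{k}\pi_i(\beta)$ with $\pi_i(\beta)=e^{\beta f_i}/\sum_{j}e^{\beta f_j}$, the target inequality $\mathcal{S}(\bm{x},k,t_0)\leq\mathcal{S}(\bm{x},k,t)$ for $t\in(0,t_0)$ is equivalent to $S_k$ being non-decreasing in $\beta$, because $t\mapsto 1/t$ is order-reversing.

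First I would differentiate $S_k$ in $\beta$. Using the standard softmax identity $\frac{d\pi_i}{d\beta}=\pi_i(f_i-\bar{f})$, where $\bar{f}:=\sum_{j=1}^{K}\pi_j f_j$ is the mean logit under the softmax law, I obtain
\begin{equation*}
\frac{dS_k}{d\beta}=\sum_{i=1}^{k}\pi_i(f_i-\bar{f})=\sum_{i=1}^{k}\pi_i f_i-\bar{f}\,S_k,
\end{equation*}
which is precisely the covariance $\mathrm{Cov}_{\pi}(\mathbb{1}[i\leq k],f_i)$ under the distribution assigning mass $\pi_i$ to index $i$.

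The core step is to argue that this quantity is non-negative. Both the indicator $\mathbb{1}[i\leq k]$ and the logit $f_i$ are non-increasing functions of the index $i$ (the latter by the assumed strict ordering), so they are comonotone and their covariance is non-negative by Chebyshev's sum inequality. Equivalently, setting $A=\sum_{i\leq k}\pi_i f_i$ and $B=\sum_{i>k}\pi_i f_i$, one has $\frac{dS_k}{d\beta}=A(1-S_k)-B\,S_k$, and since the weighted average logit over the top group satisfies $A/S_k\geq f_k>f_{k+1}\geq B/(1-S_k)$ for $1\leq k<K$, the derivative is strictly positive; for $k=K$ it vanishes as $S_K\equiv 1$. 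I expect this sign determination to be the main obstacle, but the comonotonicity argument (or the explicit top-versus-bottom group comparison) resolves it cleanly.

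Finally I would integrate: $S_k$ is non-decreasing on $(0,\infty)$ in $\beta$, so for $t<t_0$, i.e. $\beta=1/t>1/t_0=\beta_0$, we get $S_k(\beta)\geq S_k(\beta_0)$, which is exactly $\mathcal{S}(\bm{x},k,t)\geq\mathcal{S}(\bm{x},k,t_0)$. Everything apart from Step three is routine bookkeeping: the differentiability of $S_k$ is immediate from the smoothness of the softmax, and the absence of ties guaranteed by the strict logit ordering means the ranking stays fixed throughout, so no case analysis on rearrangements is needed.
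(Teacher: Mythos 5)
Your proof is correct, and it takes a genuinely different route from the paper's. The paper argues entirely discretely: it first shows (Lemma~\ref{lemma:proof1.1}) that sharpening raises the top softmax probability and lowers the bottom one, then proves a propagation lemma (Lemma~\ref{lemma:proof1.2}) saying that if any class's probability increases under sharpening then so do all higher-ranked classes' probabilities, and finally establishes the partial-sum inequality (Lemma~\ref{lemma:proof1.3}) by a downward induction from $k=K$ combined with a proof by contradiction, before reducing general $t_0$ to $t_0=1$ via the substitution $\alpha=t/t_0$, $\tilde f_i=f_i/t_0$. You instead reparametrize by the inverse temperature $\beta=1/t$, compute $\frac{dS_k}{d\beta}=\sum_{i\le k}\pi_i f_i-\bar f\,S_k=\mathrm{Cov}_\pi\bigl(\mathds{1}\{i\le k\},f_i\bigr)$, and observe that this covariance of two comonotone functions of the index is non-negative (with your top-versus-bottom group comparison giving strict positivity for $1\le k<K$); integration then finishes the claim. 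Both arguments are sound, and both correctly handle general $t_0$ (yours because the derivative sign holds on all of $(0,\infty)$ in $\beta$). What your approach buys is brevity and extra information: the identification of the derivative as a covariance makes the monotonicity conceptually transparent, yields strict inequality for $k<K$ (matching the paper's "equality iff $k=K$" remark in Lemma~\ref{lemma:proof1.3}), and gives a quantitative rate that could feed into Corollary~\ref{coroll:epsilon_increase}. What the paper's approach buys is that it is purely algebraic, requiring no differentiation under the sum and no appeal to Chebyshev's sum inequality, at the cost of a longer three-lemma induction.
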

The proof is provided in Appendix~\ref{appendix:proof_temperature_score}. In Proposition~\ref{proposition:temperature_score}, we show that the APS score increases as temperature decreases, and vice versa. Then, for a fixed temperature $t_0\in\mathbb{R}^{+}$, we further define $\epsilon(k,t) =\mathcal{S}(\bm{x},k,t)-\mathcal{S}(\bm{x},k,t_0)\geq 0$ as the difference of the APS scores. As a corollary of Proposition~\ref{proposition:temperature_score}, we conclude that $\epsilon(k,t)$ is negatively correlated with the temperature $t$. We provide the proof for this corollary in Appendix~\ref{appendix:proof_epsilon_increase}. The corollary is formally stated as follows:
\begin{corollary}\label{coroll:epsilon_increase}
For any sample $\boldsymbol{x}\in\mathcal{X}$ and a fixed temperature $t_0$,
the difference \(\epsilon(k,t)\) is a decreasing function with respect to \( t \in (0,t_0)\).
\end{corollary}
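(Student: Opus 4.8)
The plan is to deduce the corollary directly from Proposition~\ref{proposition:temperature_score}, treating the latter as a black box. The key observation is that, for a fixed reference temperature $t_0 \in \mathbb{R}^+$, the quantity $\mathcal{S}(\bm{x}, k, t_0)$ appearing in the definition $\epsilon(k, t) = \mathcal{S}(\bm{x}, k, t) - \mathcal{S}(\bm{x}, k, t_0)$ is a constant that does not depend on the variable $t$. Consequently, establishing that $\epsilon(k, t)$ is decreasing in $t$ on $(0, t_0)$ is equivalent to establishing that the map $t \mapsto \mathcal{S}(\bm{x}, k, t)$ is itself decreasing on this interval.

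To prove this monotonicity, I would fix an arbitrary instance $\bm{x}$ and class $k$, pick any pair $t_1, t_2 \in (0, t_0)$ with $t_1 < t_2$, and apply Proposition~\ref{proposition:temperature_score} with the role of its fixed reference temperature played by $t_2$ (legitimate since $t_2 \in \mathbb{R}^+$) and its free variable played by $t_1 \in (0, t_2)$. This immediately yields $\mathcal{S}(\bm{x}, k, t_2) \leq \mathcal{S}(\bm{x}, k, t_1)$. Subtracting the constant $\mathcal{S}(\bm{x}, k, t_0)$ from both sides gives $\epsilon(k, t_2) \leq \epsilon(k, t_1)$, i.e., $\epsilon$ decreases as $t$ increases, which is exactly the claim. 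Because $t_1, t_2$ were arbitrary, this establishes monotonicity across the whole interval rather than only the pointwise comparison against $t_0$ that the Proposition states.

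Since the Proposition is available as a prior result, there is essentially no computational obstacle: the entire content is the reindexing that lets one reuse the Proposition with a sliding reference temperature. The one point requiring care is the intended meaning of ``decreasing''. The argument above, inheriting the weak inequality $\leq$ from the Proposition, delivers the monotone non-increasing version. To upgrade to a strict decrease one would need a strict form of the Proposition, which is plausible under the assumed strict ordering $f_1(\bm{x}) > \cdots > f_K(\bm{x})$ (so that the softmax mass genuinely shifts as $t$ varies); I would either invoke a strict inequality from the proof of Proposition~\ref{proposition:temperature_score} or, if a self-contained route were preferred, verify $\partial_t \mathcal{S}(\bm{x}, k, t) < 0$ by differentiating the cumulative softmax in Eq.~\ref{eq:score_t} and applying a rearrangement argument over the logit gaps. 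Under the black-box reading, however, the corollary is a one-line consequence of the Proposition.
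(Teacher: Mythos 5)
Your proposal is correct and matches the paper's own proof essentially verbatim: the paper also fixes $0<t_1<t_2<t_0$, applies Proposition~\ref{proposition:temperature_score} with $t_2$ as the reference temperature to get $\mathcal{S}(\bm{x},k,t_2)\leq\mathcal{S}(\bm{x},k,t_1)$, and subtracts the constant $\mathcal{S}(\bm{x},k,t_0)$. Your remark on the weak-versus-strict inequality is a fair observation — the paper silently upgrades to a strict inequality in the corollary's proof even though the proposition is stated with $\leq$ (its underlying Lemma~\ref{lemma:proof1.3} does give strictness for $k<K$) — but this does not change the argument's structure.
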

In the following, we further explore how the change in the APS score affects the average size of the prediction set. In the theorem, we make two continuity assumptions on the CDF of the non-conformity score (see Appendix~\ref{appendix:proof_thm_size_reduce}), following prior works \citep{lei2014classification, sadinle2019least}. Given these assumptions, we can derive an upper bound for the expected size of $\mathcal{C}(\bm{x},t)$ for any $t\in(0,t_0)$:
\begin{theorem}\label{thm:size_reduce}
Under assumptions in Appendix~\ref{appendix:proof_thm_size_reduce}, there exists constants $c_1,\gamma\in (0,1]$ such that
\begin{equation*}
\underset{\bm{x}\in\mathcal{X}}{\mathbb{E}}[|\mathcal{C}(\bm{x},t)|]\leq K-\sum_{k\in\mathcal{Y}} c_1[2\epsilon(k,t)]^{\gamma}, \quad \forall t\in (0, t_0).
\end{equation*}
\end{theorem}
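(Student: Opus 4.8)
The plan is to convert the expected cardinality into a sum of per-class exclusion probabilities and then bound each term using the score shift guaranteed by Proposition~\ref{proposition:temperature_score} together with the regularity of the score CDF. First I would write set membership through indicators: since $\mathcal{C}(\bm{x},t)=\{k\in\mathcal{Y}:\mathcal{S}(\bm{x},k,t)\leq\tau\}$, we have $|\mathcal{C}(\bm{x},t)|=\sum_{k\in\mathcal{Y}}\mathds{1}[\mathcal{S}(\bm{x},k,t)\leq\tau]$, and by linearity of expectation
$$\mathbb{E}_{\bm{x}}[|\mathcal{C}(\bm{x},t)|]=\sum_{k\in\mathcal{Y}}\mathbb{P}_{\bm{x}}(\mathcal{S}(\bm{x},k,t)\leq\tau)=K-\sum_{k\in\mathcal{Y}}\mathbb{P}_{\bm{x}}(\mathcal{S}(\bm{x},k,t)>\tau).$$
Thus it suffices to establish the per-class lower bound $\mathbb{P}_{\bm{x}}(\mathcal{S}(\bm{x},k,t)>\tau)\geq c_1[2\epsilon(k,t)]^{\gamma}$ and to sum over $k\in\mathcal{Y}$.

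Next I would use Proposition~\ref{proposition:temperature_score} and Corollary~\ref{coroll:epsilon_increase} to rewrite the exclusion event. Since $\mathcal{S}(\bm{x},k,t)=\mathcal{S}(\bm{x},k,t_0)+\epsilon(k,t)$ with $\epsilon(k,t)\geq 0$, the event $\{\mathcal{S}(\bm{x},k,t)>\tau\}$ coincides with $\{\mathcal{S}(\bm{x},k,t_0)>\tau-\epsilon(k,t)\}$, so the extra mass excluded at temperature $t$ is governed by the behaviour of the CDF of the $t_0$-score in a neighbourhood of $\tau$ whose width is set by $\epsilon(k,t)$. This is exactly where the two continuity assumptions on the score CDF enter: continuity lets me pass between the threshold and the shifted argument, while the anti-concentration (Hölder-type lower bound) assumption guarantees that an interval around $\tau$ carries at least a polynomial amount of probability mass. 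Applying this lower bound with window half-width $\epsilon(k,t)$ produces the factor $[2\epsilon(k,t)]^{\gamma}$, the $2$ reflecting the two-sided width of the interval and $\gamma\in(0,1]$ the Hölder exponent.

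Summing the per-class bounds over $\mathcal{Y}$ and substituting into the identity from the first step then yields the claimed inequality. The main obstacle is the second step: the gap $\epsilon(k,t)$ is itself a random function of $\bm{x}$ through the logits $\bm{f}(\bm{x})$, so I cannot simply treat it as a fixed window width when integrating over $\bm{x}$. I expect to resolve this by using Corollary~\ref{coroll:epsilon_increase} to control $\epsilon(k,t)$ monotonically in $t$ and by invoking the continuity assumptions to replace the random window by a deterministic one (e.g. a uniform lower bound on $\epsilon(k,t)$ over the relevant region), after which the anti-concentration inequality applies directly. Checking that the constants $c_1$ and $\gamma$ can be chosen uniformly in $k$ and independently of $t\in(0,t_0)$, so that a single bound holds for the whole sum, is the remaining delicate bookkeeping.
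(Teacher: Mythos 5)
Your opening decomposition is exactly the paper's: write $|\mathcal{C}(\bm{x},t)|=\sum_{k}\mathds{1}\{k\in\mathcal{C}(\bm{x},t)\}$, take expectations, and reduce to a per-class lower bound on the exclusion probability of the form $c_1[2\epsilon(k,t)]^{\gamma}$. The gap is in how you propose to get that per-class bound. You center a two-sided window of half-width $\epsilon(k,t)$ at $\tau$ and invoke the anti-concentration half of assumption (1) to claim the tail event $\{\mathcal{S}(\bm{x},k,t)>\tau\}$ carries mass at least $c_1[2\epsilon(k,t)]^{\gamma}$. But assumption (1) lower-bounds the mass of an \emph{interval}, and half of your interval lies below $\tau$, i.e.\ outside the tail event; an interval-mass bound does not transfer to a one-sided tail probability unless you can anchor the interval at a point where the CDF value is known. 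Moreover, you treat $\tau$ as a single fixed threshold, whereas the conformal threshold is recomputed at each temperature: the quantity that must be compared is $\tau(t)$ against $\tau(t_0)$, and quantifying that shift is where essentially all of the work in the paper's argument lives (its Lemma~\ref{lemma:prob_increase}). That lemma uses the quantile characterization $G_y^{t}(\tau(t))=G_y^{t_0}(\tau(t_0))$, the separation assumption (2) with its constant $\rho$, and the \emph{upper} Hölder constant $c_2$ to show $\tau(t_0)\le\tau(t)-\epsilon(k,t)-[c_2^{-1}\rho]^{1/\gamma}$; it then lower-bounds the probability of the \emph{joint} event $\{k\in\mathcal{C}(\bm{x},t_0),\,k\notin\mathcal{C}(\bm{x},t)\}$, which is a genuine interval event (both endpoints controlled), and only at the end passes to $\mathbb{P}\{k\notin\mathcal{C}(\bm{x},t)\}$ by monotonicity. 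Your sketch never uses assumption (2), never uses $c_2$, and never relates the two thresholds, so the step you label as "exactly where the two continuity assumptions enter" is the step that is actually missing.

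A symptom of the problem: if your direct argument worked as stated, it would work with $\epsilon(k,t)$ replaced by an arbitrary positive number, since nothing in "apply the Hölder lower bound on a window around $\tau$" uses the fact that $\epsilon(k,t)$ is the score difference between temperatures $t$ and $t_0$. That would prove a bound far too strong to be true, which signals that the anchoring via $t_0$ and the threshold comparison cannot be dispensed with. On the positive side, your closing worry --- that $\epsilon(k,t)$ depends on $\bm{x}$ through the logits and so cannot simply be used as a deterministic window width inside a CDF bound --- is a legitimate issue, and one the paper's own proof also glosses over rather than resolves; but identifying it does not substitute for the missing threshold-shift analysis.
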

\textbf{Interpretation.} The proof of Theorem~\ref{thm:size_reduce} is presented in Appendix~\ref{appendix:proof_thm_size_reduce}. Through Theorem~\ref{thm:size_reduce}, we show that for any temperature $t$, the expected size of the prediction set $\mathcal{C}(\bm{x},t)$ has an upper bound with respect to the non-conformity score deviation $\epsilon$. Recalling that $\epsilon$ increases with the decrease of temperature $t$, we conclude that a lower temperature $t$ results in a larger difference $\epsilon$, thereby narrowing the prediction set $\mathcal{C}(\bm{x},t)$. Overall, the analysis shows that tuning temperature values can potentially enhance the efficiency of adaptive conformal prediction. In practice, we may employ grid search to find the optimal $T$ for conformal prediction, but it requires defining the search range of $T$ and cannot be extended to post-hoc calibration methods with more parameters, like Platt scaling and Vector scaling. Thus, we propose an alternative solution for automatically optimizing the parameters to enhance the efficiency of conformal prediction.

\subsection{An alternative method for improving efficiency } \label{section:method}
In the previous analysis, we empirically and theoretically demonstrate that standard temperature scaling optimized by negative log-likelihood often leads to degraded efficiency, while searching for a relatively small temperature can potentially address this issue. In this work, we propose an alternative method, Conformal Temperature Scaling (ConfTS), to automatically optimize the parameters of post-hoc calibration methods. This is a variant of temperature scaling that directly optimizes the objective function toward generating efficient prediction sets, and it can be extended to other post-hoc calibration methods.

For a test example $(\boldsymbol{x},y)$, conformal prediction aims to construct an \textit{efficient} prediction set $\mathcal{C}(\boldsymbol{x})$ that contains the true label $y$. Thus, the \textit{optimal prediction set} meeting this requirement is defined as:
$$\mathcal{C}^*(\bm{x}) = \{k\in \mathcal{Y}:\mathcal{S}(\bm{x},k)\leq\mathcal{S}(\bm{x},y)\}.$$
Specifically, the optimal prediction set is the smallest set that allows the inclusion of the ground-truth label. Recall that the prediction set is established through the $\tau$ calculated from the calibration set (Eq.~(\ref{eq:threshold})), the optimal set can be attained if the threshold $\tau$ well approximates the non-conformity score of the ground-truth label $\mathcal{S}(\bm{x},y)$. Therefore, we can  measure the redundancy of the prediction set by the differences between thresholds $\tau$ and the score of true labels, defined as:
\begin{definition}[Efficiency Gap]
For an example $(\bm{x},y)$, a threshold $\tau$ and a non-conformity score function $\mathcal{S}(\cdot)$, the \textit{efficiency gap} of the instance $\bm{x}$ is given by:
$$\mathcal{G}(\bm{x}, y, \tau) = \tau-\mathcal{S}(\bm{x},y).$$
\end{definition}
In particular, a positive efficiency gap indicates that the ground-truth label $y$ is included in the prediction set $y \in \mathcal{C}(\boldsymbol{x})$, and vice versa. To optimize for the optimal prediction set, we expect to increase the efficiency gap for samples with negative gaps and decrease it for those with positive gaps. We propose to accomplish the optimization by tuning the temperature $t$. This allows us to optimize the efficiency gap since $\mathcal{S}(\bm{x}, y)$ and $\tau$ are functions with respect to the temperature $t$ (see Eq.~(\ref{eq:score_t})).

\textbf{Conformal Temperature Scaling.} To this end, we propose our method -- Conformal Temperature Scaling (dubbed \textbf{ConfTS}), which rectifies the objective function of temperature scaling through the efficiency gap. In particular, the loss function for ConfTS is formally given as follows: 
\begin{equation}\label{eq:confts_loss}
\mathcal{L}_{\mathrm{ConfTS}}(\bm{x},y;t)=(\tau(t)-\mathcal{S}(\bm{x},y,t))^2,
\end{equation}
where $\tau(t)$ is the conformal threshold and $\mathcal{S}(\bm{x},y,t)$ denotes the \textit{non-randomized} APS score of the example $(\bm{x},y)$ with respect to $t$ (see Eq.~(\ref{eq:score_t})). By minimizing the mean squared error, the ConfTS loss encourages smaller prediction sets for samples with positive efficiency gaps, and vice versa.  

\textbf{The optimization of ConfTS.} To preserve the exchangeability assumption, we tune the temperature to minimize the ConfTS loss on a held-out validation set. Following previous work \citep{stutz2021learning}, we split the validation set into two subsets: one to compute $\tau(t)$, and the other to calculate the ConfTS loss with the obtained $\tau(t)$. Specifically, the optimization problem can be formulated as:
\begin{equation}
\label{eq:optimization}
    t^{*}=\underset{t\in\mathbb{R}^{+}}{\arg\min}\ \frac{1}{|\mathcal{D}_{\mathrm{loss}}|}\sum_{(\bm{x}_i,y_i)\in\mathcal{D}_{\mathrm{loss}}}\mathcal{L}_{\mathrm{ConfTS}}(\bm{x}_i,y_i;t),
\end{equation}
where $\mathcal{D}_{\mathrm{loss}}$ denotes the subset for computing ConfTS loss. Trained with the ConfTS loss, we can optimize the temperature $t$ for adaptive prediction sets with high efficiency without violating coverage. Since the threshold $\tau$ is determined by the pre-defined $\alpha$, our ConfTS method can yield different temperature values for each $\alpha$. 

\textbf{Extensions to other post-hoc calibration methods.} Noteworthy, our ConfTS loss is a general method and can be easily incorporated into existing post-hoc calibration methods such as Platt scaling \citep{platt1999probabilistic} and vector scaling \citep{guo2017calibration}. Formally, for any rescaling function $\phi_{\bm{\theta}}$ with parameters $\bm{\theta}$, we can formulate the method as follows. First, we define the $k$-th softmax probability after rescaling as:
\begin{align*}
\pi_k(\bm{x};\bm{\theta})
=\sigma(\phi_{\bm{\theta}}\cdot f(\bm{x}))_k
=\frac{e^{[\phi_{\bm{\theta}}\cdot f(\bm{x})]_k}}{\sum_{i=1}^K e^{[\phi_{\bm{\theta}}\cdot f(\bm{x})]_i}}    
\end{align*}
The corresponding non-conformity score for each class $k\in\mathcal{Y}$ is given by $\mathcal{S}(\bm{x},k;\bm{\theta})=\sum_{i=1}^k\pi_k(\bm{x};\bm{\theta})$. With the threshold $\tau(\bm{\theta})$, we rewrite the ConfTS loss by
\begin{align*}
\mathcal{L}_{\mathrm{ConfTS}}(\bm{x},y;t)=(\tau(\bm{\theta})-\mathcal{S}(\bm{x},y,\bm{\theta}))^2,
\end{align*}
Then, the optimization objective can be formulated as:
\begin{align*}
\bm{\theta^{*}}=
\underset{\bm{\theta}}{\arg\min}\frac{1}{|\mathcal{D}_{\mathrm{loss}}|}
\sum_{(\bm{x}_i,y_i)\in\mathcal{D}_{\mathrm{loss}}}\mathcal{L}_{\mathrm{ConfTS}}(\bm{x}_i,y_i;\bm{\theta}).
\end{align*}

\section{Experiments}\label{section:experiments}

\subsection{Experimental setup}\label{subsection:experimental_setup}
\textbf{Datasets.} We evaluate ConfTS on both image and text classification tasks. For image classification, we employ CIFAR-100 \citep{krizhevsky2009learning}, ImageNet \citep{deng2009imagenet}, and ImageNet-V2 \citep{recht2019imagenet}. For text classification, we utilize AG news \citep{zhang2015character} and DBpedia \citep{auer2007dbpedia} datasets. For ImageNet, we split the test dataset containing 50,000 images into 10,000 images for calibration and 40,000 for testing. For CIFAR-100 and ImageNet-V2, we split their test datasets, each containing 10,000 figures, into 4,000 figures for calibration and 6,000 for testing. For text datasets, we split each test dataset equally between calibration and testing. Additionally, we split the calibration set into two subsets of equal size: one subset is the validation set to optimize the temperature value with ConfTS, while the other half is the conformal set for conformal calibration.

\textbf{Models.} For ImageNet and ImageNet-V2, we employ 6 pre-trained classifiers from TorchVision \citep{paszke2019pytorch} -- ResNet18, ResNet50, ResNet101 \citep{he2016deep}, DenseNet121 \citep{huang2017densely}, VGG16 \citep{simonyan2014very} and ViT-B-16 \citep{dosovitskiy2020image}. We also utilize the same model architectures for CIFAR-100 and train them from scratch. For text classification, we finetune a pre-trained BERT \citep{devlin2018bert} and GPT-Neo-1.3B \citep{black2021gpt} on each dataset. The model architecture consists of a frozen pre-trained encoder followed by a trainable linear classifier layer. For each dataset, we employ the AdamW optimizer with a learning rate of 2e-5. The training is conducted over 3 epochs with a batch size of 32. The models are trained for 100 epochs using SGD with a momentum of 0.9, a weight decay of 0.0005, and a batch size of 128. We set the initial learning rate as 0.1 and reduce it by a factor of 5 at 60 epochs. 

\textbf{Conformal prediction algorithms.} We leverage three adaptive conformal prediction methods, APS and RAPS, to generate prediction sets at error rate $\alpha\in\{0.1, 0.05\}$. In addition, we set the regularization hyper-parameter for RAPS to be: $k_{reg}=1$ and $\lambda\in\{0.001, 0.002, 0.004, 0.006, 0.01, 0.015, 0.02\}$. For the evaluation metrics, we employ \textit{coverage} and \textit{average size} to assess the performance of prediction sets. All experiments are repeated 20 times with different seeds, and we report average performances.

\begin{table*}[t]
\centering
\caption{Performance of ConfTS using APS and RAPS on ImageNet dataset. The tuned $T$ is the temperature value optimized by our loss function. We repeat each experiment for 20 times. ``$\downarrow$'' indicates smaller values are better. \textbf{Bold} numbers are superior results. Results show that our ConfTS can improve the performance of APS and RAPS, maintaining the desired coverage rate.} 
\label{table:method}
\renewcommand\arraystretch{0.80}
\resizebox{\textwidth}{!}{
\setlength{\tabcolsep}{1.5mm}{
\begin{tabular}{@{}cccccccccc@{}}
\toprule
\multirow{3}{*}{Model}       & \multirow{3}{*}{Error rate} & \multirow{3}{*}{Tuned $T$} & \multicolumn{3}{c}{APS}          &               & \multicolumn{3}{c}{RAPS}         \\ \cmidrule(lr){4-6} \cmidrule(l){8-10} 
&                             &                              & Coverage      &  & Average size $\downarrow$  &               & Coverage      &  & Average size $\downarrow$  \\ \cmidrule(l){4-10} 
&                             &                              & \multicolumn{7}{c}{Base / ConfTS}                                      \\ \midrule
\multirow{2}{*}{ResNet18}    & $\alpha=0.1$                   & 0.593                        & 0.900 / 0.900 &  & 14.09 / \textbf{7.531} &               & 0.900 / 0.900 &  & 9.605 / \textbf{5.003} \\ \cmidrule(l){2-10} 
& $\alpha=0.05$                  & 0.591                        & 0.951 / 0.952 &  & 29.58 / \textbf{19.59} &               & 0.950 / 0.950 &  & 14.72 / \textbf{11.08} \\ \midrule
\multirow{2}{*}{ResNet50}    & $\alpha=0.1$                   & 0.705                        & 0.899 / 0.900 &  & 9.062 / \textbf{4.791} &               & 0.899 / 0.900 &  & 5.992 / \textbf{3.561} \\ \cmidrule(l){2-10} 
& $\alpha=0.05$                  & 0.709                        & 0.950 / 0.951 &  & 20.03 / \textbf{12.22} &               & 0.950 / 0.951 &  & 9.423 / \textbf{5.517} \\ \midrule
\multirow{2}{*}{ResNet101}   & $\alpha=0.1$                   & 0.793                        & 0.900 / 0.899 &  & 6.947 / \textbf{4.328} &               & 0.900 / 0.899 &  & 4.819 / \textbf{3.289} \\ \cmidrule(l){2-10} 
& $\alpha=0.05$                  & 0.785                        & 0.950 / 0.950 &  & 15.73 / \textbf{10.51} &               & 0.950 / 0.950 &  & 7.523 / \textbf{5.091} \\ \midrule
\multirow{2}{*}{DenseNet121} & $\alpha=0.1$                   & 0.659                        & 0.900 / 0.899 &  & 9.271 / \textbf{4.746} &               & 0.900 / 0.900 &  & 6.602 / \textbf{3.667} \\ \cmidrule(l){2-10} 
& $\alpha=0.05$                  & 0.675                        & 0.950 / 0.949 &  & 20.37 / \textbf{11.47} &               & 0.949 / 0.949 &  & 10.39 / \textbf{6.203} \\ \midrule
\multirow{2}{*}{VGG16}       & $\alpha=0.1$                   & 0.604                        & 0.901 / 0.901 &  & 11.73 / \textbf{6.057} &               & 0.901 / 0.900 &  & 8.118 / \textbf{4.314} \\ \cmidrule(l){2-10} 
& $\alpha=0.05$                  & 0.627                        & 0.951 / 0.951 &  & 23.71 / \textbf{14.78} &               & 0.950 / 0.950 &  & 12.27 / \textbf{8.350} \\ \midrule
\multirow{2}{*}{ViT-B-16}    & $\alpha=0.1$                   & 0.517                        & 0.900 / 0.901 &  & 14.64 / \textbf{2.315} &               & 0.902 / 0.901 &  & 6.889 / \textbf{1.800} \\ \cmidrule(l){2-10} 
& $\alpha=0.05$                  & 0.482                        & 0.951 / 0.950 &  & 36.72 / \textbf{9.050} &               & 0.950 / 0.950 &  & 12.63 / \textbf{3.281} \\ \bottomrule
\end{tabular}}}
\vspace{-10pt}
\end{table*}

\subsection{Main results}\label{section:results}

\textbf{ConfTS improves current adaptive conformal prediction methods.} In Table \ref{table:method}, we present the performance of APS and RAPS ($\lambda=0.001$) with ConfTS on the ImageNet dataset. A salient observation is that ConfTS drastically improves the efficiency of adaptive conformal prediction, while maintaining the marginal coverage. For example, on the ViT model at $\alpha=0.05$, ConfTS reduces the average size of APS by 7 times - from 36.72 to 5.759. Averaged across six models, ConfTS improves the efficiency of APS by 58.3\% at $\alpha=0.1$. We observe similar results on CIFAR-100 and ImageNet-V2 dataset in Appendix~\ref{appendix:confts_cifar100} and Appendix~\ref{appendix:confts_new_distribution}. Moreover, our ConfTS remains effective for RAPS across various penalty terms on ImageNet as shown in Appendix~\ref{appendix:penalty}. Furthermore, in Appendix~\ref{appendix:saps}, we demonstrate that ConfTS can lead to small prediction sets for \textit{SAPS} \citep{huangconformal}, a recent technique of adaptive conformal prediction. In addition, we find that the tuned temperature values are generally smaller than 1.0 and different for various settings, which demonstrates the importance of the automatic method. Overall, empirical results show that ConfTS consistently improves the efficiency of existing adaptive conformal prediction methods.

\begin{table}[t]
\centering
\caption{The average size of APS and RAPS using ConfPS and ConfVS. ConfPS and ConfVS are the variants of Platt scaling and vector scaling, optimized by our ConfTS loss. ``$\uptriangle$'' and ``$\downtriangle$'' indicate the performance is superior/inferior to the baseline. The results show that by rescaling the logits with ConfPS and ConfVS, the algorithm can construct efficient prediction sets, demonstrating the generality of our loss function.}
\vspace{0.1cm}
\label{table:confps&confvs1}
\renewcommand\arraystretch{0.9}
\resizebox{\textwidth}{!}{
\setlength{\tabcolsep}{1.5mm}{
\begin{tabular}{@{}ccccccccccc@{}}
\toprule
\multirow{2}{*}{Dataset}  & \multirow{2}{*}{Model} & \multicolumn{4}{c}{APS}             &  & \multicolumn{4}{c}{RAPS}            \\ \cmidrule(lr){3-6} \cmidrule(l){8-11} 
&                        & Baseline & ConfTS & ConfPS & ConfVS &  & Baseline & ConfTS & ConfPS & ConfVS \\ \midrule
\multirow{4}{*}{ImageNet} & ResNet50               & 9.062    & 4.791 $\uptriangle$  & 2.571 $\uptriangle$  & 4.564 $\uptriangle$  &  & 5.992    & 3.561 $\uptriangle$  & 2.446 $\uptriangle$  & 3.303 $\uptriangle$  \\ \cmidrule(l){2-11} 
& DenseNet121            & 9.271    & 4.746 $\uptriangle$  & 3.169 $\uptriangle$  & 5.345 $\uptriangle$  &  & 6.602    & 3.667 $\uptriangle$  & 3.224 $\uptriangle$  & 3.683 $\uptriangle$  \\ \cmidrule(l){2-11} 
& VGG16                  & 11.73    & 6.057 $\uptriangle$  & 3.729 $\uptriangle$  & 7.020 $\uptriangle$   &  & 8.118    & 4.314 $\uptriangle$  & 3.558 $\uptriangle$  & 4.745 $\uptriangle$  \\ \cmidrule(l){2-11} 
& ViT-B-32               & 14.64    & 2.315 $\uptriangle$  & 1.743 $\uptriangle$  & 4.797 $\uptriangle$  &  & 6.899    & 1.800 $\uptriangle$  & 1.575 $\uptriangle$  & 2.549 $\uptriangle$  \\ \midrule
\multirow{2}{*}{AG news}  & BERT                   & 2.105    & 1.886 $\uptriangle$  & 1.808 $\uptriangle$  & 1.979 $\uptriangle$  &  & 2.004    & 1.802 $\uptriangle$  & 1.794 $\uptriangle$  & 1.949 $\uptriangle$  \\ \cmidrule(l){2-11} 
& GPT-Neo-1.3B           & 2.022    & 1.911 $\uptriangle$  & 1.749 $\uptriangle$  & 1.897 $\uptriangle$  &  & 2.018    & 1.909 $\uptriangle$  & 1.728 $\uptriangle$  & 1.884 $\uptriangle$  \\ \midrule
\multirow{2}{*}{Dbpedia}  & BERT                   & 3.557    & 2.905 $\uptriangle$  & 2.96 $\uptriangle$   & 3.869 $\downtriangle$  &  & 3.458    & 2.908 $\uptriangle$  & 2.837 $\uptriangle$  & 3.744 $\downtriangle$  \\ \cmidrule(l){2-11} 
& GPT-Neo-1.3B           & 3.171    & 2.178 $\uptriangle$  & 1.826 $\uptriangle$  & 1.884 $\uptriangle$  &  & 3.137    & 2.144 $\uptriangle$  & 1.768 $\uptriangle$  & 2.415 $\uptriangle$  \\ \midrule
\multicolumn{2}{c}{Average}                        & 13.890   & 6.697 $\uptriangle$  & 4.889 $\uptriangle$  & 7.839 $\uptriangle$  &  & 9.557    & 5.526 $\uptriangle$  & 4.733 $\uptriangle$  & 6.068 $\uptriangle$  \\ \bottomrule
\end{tabular}}}
\end{table}

\begin{table}[t]
\centering
\caption{The average size of APS and RAPS with various post-hoc calibration methods optimized by our loss and ConfTr loss. ``$\uptriangle$'' and ``$\downtriangle$'' indicate the performance is superior/inferior to the baseline. \textbf{Bold} numbers are superior results between two loss functions. The results show that ConfTS loss achieves better performance than ConfTr loss in most cases.}
\vspace{0.1cm}
\label{table:conftr}
\renewcommand\arraystretch{0.8}
\resizebox{\textwidth}{!}{
\setlength{\tabcolsep}{1mm}{
\begin{tabular}{@{}ccccccccccccccc@{}}
\toprule
\multicolumn{2}{c}{\multirow{2}{*}{Model}} & \multirow{2}{*}{Baseline} &  & \multicolumn{3}{c}{ConfTS}    &  & \multicolumn{3}{c}{ConfPS}    &  & \multicolumn{3}{c}{ConfVS}    \\ \cmidrule(lr){5-7} \cmidrule(lr){9-11} \cmidrule(l){13-15}  
\multicolumn{2}{c}{}                       &          &  & Our loss &   & ConfTr loss &  & Our loss &   & ConfTr loss &  & Our loss &   & ConfTr loss \\ \midrule
\multirow{2}{*}{ResNet50}      & APS       & 9.062    &  & \textbf{4.719} $\uptriangle$       & / & 8.864 $\uptriangle$       &  & \textbf{2.571} $\uptriangle$       & / & 2.657 $\uptriangle$       &  & 4.564 $\uptriangle$       & / & \textbf{4.471} $\uptriangle$       \\ \cmidrule(l){2-15} 
& RAPS      & 5.992    &  & \textbf{3.561} $\uptriangle$       & / & 5.980 $\uptriangle$       &  & \textbf{2.446} $\uptriangle$       & / & 2.500 $\uptriangle$       &  & \textbf{3.303} $\uptriangle$       & / & 3.333 $\uptriangle$       \\ \midrule
\multirow{2}{*}{VGG16}         & APS       & 11.73    &  & \textbf{6.057} $\uptriangle$       & / & 9.822 $\uptriangle$       &  & \textbf{3.729} $\uptriangle$       & / & 4.193 $\uptriangle$       &  & 7.020 $\uptriangle$       & / & \textbf{6.757} $\uptriangle$       \\ \cmidrule(l){2-15} 
& RAPS      & 8.118    &  & \textbf{4.314} $\uptriangle$       & / & 6.825 $\uptriangle$       &  & \textbf{3.558} $\uptriangle$       & / & 3.921 $\uptriangle$       &  & 4.745 $\uptriangle$       & / & \textbf{4.742} $\uptriangle$       \\ \midrule
Average                        &           & 8.726    &  & \textbf{4.663} $\uptriangle$       & / & 7.873 $\uptriangle$       &  & \textbf{3.076} $\uptriangle$       & / & 3.318 $\uptriangle$       &  & 4.908 $\uptriangle$       & / & \textbf{4.826} $\uptriangle$       \\ \bottomrule
\end{tabular}}}
\vspace{-10pt}
\end{table}

\textbf{Our method can work with other post-hoc calibration methods.} We extend the application of ConfTS loss (Eq.~(\ref{eq:confts_loss})) to other post-hoc calibration methods. We introduce conformal Platt scaling (dubbed ConfPS) and conformal vector scaling (dubbed ConfVS), where the parameters are optimized using ConfTS loss. We employ ResNet50 and VGG16 models on the ImageNet dataset for image task, as well as BERT and GPT-Neo-13B on the DBpedia dataset for text task. The error rate is $\alpha=0.1$. Table~\ref{table:confps&confvs1} shows that both ConfPS and ConfVS can help construct efficient prediction sets. This indicates that replacing cross-entropy loss with ConfTS loss in post-hoc calibration methods consistently enhances the efficiency of adaptive conformal prediction. Overall, these results validate the effectiveness of ConfTS loss across different calibration methods.

\textbf{Our loss function outperforms ConfTr loss.} Previous work \citep{stutz2021learning} proposes \textit{Conformal Training (ConfTr)}, which enhances prediction set efficiency during training through a novel ConfTr loss function. In this part, We compare the performance of ConfTS, ConfPS, and ConfVS when trained with both ConfTr loss and our proposed ConfTS loss. Using ResNet50 and VGG16 models on ImageNet, we generate prediction sets with APS and RAPS at an error rate $\alpha=0.1$. The results in Table~\ref{table:conftr} demonstrate that while both loss functions improve prediction set efficiency, our loss normally achieves better performance than ConfTr loss. For example, with the ResNet50 model, ConfTS loss reduces the average size of APS to 4.791, compared to 8.864 when using ConfTr loss. Overall, the proposed loss function is superior to the ConfTr loss in optimizing the calibration methods.

\begin{table}[t]
\centering
\caption{The performance of ConfTS using various non-conformity scores to compute the efficiency gap. 
We consider standard APS and RAPS score as well as their non-randomized variants. Each experiment is repeated 20 times. 
``Avg.size'' and ``Cov.'' represent the results of average size and coverage, and 'Base' presents the results without ConfTS. 
``$\downarrow$'' indicates smaller values are better. ``$\uptriangle$'' and ``$\downtriangle$'' indicate the performance is superior/inferior to the baseline. \textbf{Bold} numbers are superior results.} 
\label{table:dicussion_score}
\renewcommand\arraystretch{0.8}
\resizebox{\textwidth}{!}{
\setlength{\tabcolsep}{2mm}{
\begin{tabular}{@{}cccccccccccc@{}}
\toprule
\multirow{2}{*}{Model} & \multirow{2}{*}{Score} & \multicolumn{2}{c}{Base} & \multicolumn{2}{c}{APS\_no\_random} & \multicolumn{2}{c}{RAPS\_no\_random} & \multicolumn{2}{c}{APS\_random} & \multicolumn{2}{c}{RAPS\_random} \\ \cmidrule(l){3-4} \cmidrule(l){5-6} \cmidrule(l){7-8} \cmidrule(l){9-10} \cmidrule(l){11-12}                                   &  & Avg.size $\downarrow$           & Cov.           & Avg.size $\downarrow$           & Cov.             & Avg.size $\downarrow$         & Cov.          & Avg.size $\downarrow$         & Cov.           & Avg.size $\downarrow$       & Cov.         \\ \midrule     
\multirow{2}{*}{\centering ResNet18}              & APS              & 14.09           & 0.900      & \textbf{7.531} $\uptriangle$              & 0.900          & 7.752 $\uptriangle$               & 0.900          & 13.67 $\uptriangle$            & 0.900        & 13.97 $\uptriangle$             & 0.900        \\ \cmidrule(l){2-12} 
& RAPS             & 9.605           & 0.900      & \textbf{5.003} $\uptriangle$              & 0.900          & 5.346 $\uptriangle$               & 0.900          & 11.36 $\downtriangle$           & 0.900        & 11.58 $\downtriangle$             & 0.900        \\ \midrule
\multirow{2}{*}{\centering ResNet50}              & APS              & 9.062           & 0.900      & \textbf{4.791} $\uptriangle$              & 0.900          & 5.201 $\uptriangle$               & 0.900          & 12.92 $\downtriangle$            & 0.900        & 16.43 $\downtriangle$             & 0.900        \\ \cmidrule(l){2-12} 
& RAPS             & 5.992           & 0.900      & \textbf{3.561} $\uptriangle$              & 0.900          & 3.782 $\uptriangle$               & 0.900          & 9.838 $\downtriangle$            & 0.900        & 11.70 $\downtriangle$             & 0.900        \\ \bottomrule
\end{tabular}}}
\end{table}

\textbf{Ablation study on the non-conformity score in ConfTS.} In this ablation, we compare the performance of ConfTS trained with various non-conformity scores in Eq.~(\ref{eq:confts_loss}), including standard APS and RAPS, as well as their non-randomized variants. Table~\ref{table:dicussion_score} presents the performance of prediction sets generated by standard APS and RAPS ($\lambda=0.001$) methods with different variants of ConfTS, employing ResNet18 and ResNet50 on ImageNet. The results show that ConfTS with randomized scores fails to produce efficient prediction sets, while non-randomized scores result in small prediction sets. This is because the inclusion of the random variable $u$ leads to the wrong estimation of the efficiency gap, thereby posing challenges to the optimization process in ConfTS. Moreover, randomized APS consistently performs better than randomized RAPS, even in the case of using the standard RAPS to generate prediction sets. Overall, our findings show that ConfTS with the non-randomized APS outperforms the other scores in enhancing the efficiency of prediction sets.

\section{Conclusion}\label{section:conclusion}
In this paper, we investigate the relationship between two uncertainty estimation frameworks: confidence calibration and conformal prediction. We make two discoveries about this relationship: firstly, existing confidence calibration methods would lead to larger prediction sets for adaptive conformal prediction; secondly, high-confidence prediction could enhance the efficiency of adaptive conformal prediction. We prove that applying a smaller temperature to a prediction could lead to more efficient prediction sets on expectation.  
Inspired by this, we propose a variant of temperature scaling, Conformal Temperature Scaling (ConfTS), which rectifies the optimization objective toward generating efficient prediction sets. Our method can be extended to other post-hoc calibrators for improving conformal predictors. Extensive experiments demonstrate that our method can enhance existing adaptive conformal prediction methods, in both image and text classification tasks. Our work challenges the conventional wisdom of utilizing confidence calibration for conformal prediction, and we hope it can inspire specially-designed methods to improve the two frameworks of uncertainty estimation.

\textbf{Limitations.} In this work, the conclusions of our analysis are mostly for adaptive conformal prediction methods, without generalizing to the LAC score. In addition, the proposed method only focuses on enhancing the efficiency of prediction sets and may not help in conditional coverage, similar to current training methods for conformal prediction. We believe it can be interesting to design loss functions specifically tailored for improving conditional coverage, in future works.

\bibliography{main}

\begin{thebibliography}{65}
\providecommand{\natexlab}[1]{#1}
\providecommand{\url}[1]{\texttt{#1}}
\expandafter\ifx\csname urlstyle\endcsname\relax
  \providecommand{\doi}[1]{doi: #1}\else
  \providecommand{\doi}{doi: \begingroup \urlstyle{rm}\Url}\fi

\bibitem[Angelopoulos \& Bates(2021)Angelopoulos and Bates]{angelopoulos2021gentle}
Anastasios~N Angelopoulos and Stephen Bates.
\newblock A gentle introduction to conformal prediction and distribution-free uncertainty quantification.
\newblock \emph{arXiv preprint arXiv:2107.07511}, 2021.

\bibitem[Angelopoulos et~al.(2021{\natexlab{a}})Angelopoulos, Bates, Cand{\`e}s, Jordan, and Lei]{angelopoulos2021learn}
Anastasios~N Angelopoulos, Stephen Bates, Emmanuel~J Cand{\`e}s, Michael~I Jordan, and Lihua Lei.
\newblock Learn then test: Calibrating predictive algorithms to achieve risk control.
\newblock \emph{arXiv preprint arXiv:2110.01052}, 2021{\natexlab{a}}.

\bibitem[Angelopoulos et~al.(2021{\natexlab{b}})Angelopoulos, Bates, Jordan, and Malik]{angelopoulos2020uncertainty}
Anastasios~Nikolas Angelopoulos, Stephen Bates, Michael~I. Jordan, and Jitendra Malik.
\newblock Uncertainty sets for image classifiers using conformal prediction.
\newblock In \emph{9th International Conference on Learning Representations, {ICLR} 2021, Virtual Event, Austria, May 3-7, 2021}. OpenReview.net, 2021{\natexlab{b}}.

\bibitem[Auer et~al.(2007)Auer, Bizer, Kobilarov, Lehmann, Cyganiak, and Ives]{auer2007dbpedia}
S{\"o}ren Auer, Christian Bizer, Georgi Kobilarov, Jens Lehmann, Richard Cyganiak, and Zachary Ives.
\newblock Dbpedia: A nucleus for a web of open data.
\newblock In \emph{international semantic web conference}, pp.\  722--735. Springer, 2007.

\bibitem[Bai et~al.()Bai, Mei, Wang, Zhou, and Xiong]{baiefficient}
Yu~Bai, Song Mei, Huan Wang, Yingbo Zhou, and Caiming Xiong.
\newblock Efficient and differentiable conformal prediction with general function classes.
\newblock In \emph{International Conference on Learning Representations}.

\bibitem[Balasubramanian et~al.(2014)Balasubramanian, Ho, and Vovk]{balasubramanian2014conformal}
Vineeth Balasubramanian, Shen-Shyang Ho, and Vladimir Vovk.
\newblock \emph{Conformal prediction for reliable machine learning: theory, adaptations and applications}.
\newblock Newnes, 2014.

\bibitem[Bang et~al.(2024)Bang, Dave, and Malikopoulos]{bang2024safe}
Heeseung Bang, Aditya Dave, and Andreas~A Malikopoulos.
\newblock Safe merging in mixed traffic with confidence.
\newblock \emph{arXiv preprint arXiv:2403.05742}, 2024.

\bibitem[Bates et~al.(2023)Bates, Cand{\`e}s, Lei, Romano, and Sesia]{bates2023testing}
Stephen Bates, Emmanuel Cand{\`e}s, Lihua Lei, Yaniv Romano, and Matteo Sesia.
\newblock Testing for outliers with conformal p-values.
\newblock \emph{The Annals of Statistics}, 51\penalty0 (1):\penalty0 149--178, 2023.

\bibitem[Black et~al.(2021)Black, Gao, Wang, Leahy, and Biderman]{black2021gpt}
Sid Black, Leo Gao, Phil Wang, Connor Leahy, and Stella Biderman.
\newblock Gpt-neo: Large scale autoregressive language modeling with mesh-tensorflow.
\newblock \emph{If you use this software, please cite it using these metadata}, 58\penalty0 (2), 2021.

\bibitem[Bojarski et~al.(2016)Bojarski, Del~Testa, Dworakowski, Firner, Flepp, Goyal, Jackel, Monfort, Muller, Zhang, et~al.]{bojarski2016end}
Mariusz Bojarski, Davide Del~Testa, Daniel Dworakowski, Bernhard Firner, Beat Flepp, Prasoon Goyal, Lawrence~D Jackel, Mathew Monfort, Urs Muller, Jiakai Zhang, et~al.
\newblock End to end learning for self-driving cars.
\newblock \emph{arXiv preprint arXiv:1604.07316}, 2016.

\bibitem[Caruana et~al.(2015)Caruana, Lou, Gehrke, Koch, Sturm, and Elhadad]{caruana2015intelligible}
Rich Caruana, Yin Lou, Johannes Gehrke, Paul Koch, Marc Sturm, and Noemie Elhadad.
\newblock Intelligible models for healthcare: Predicting pneumonia risk and hospital 30-day readmission.
\newblock In \emph{Proceedings of the 21th ACM SIGKDD International Conference on Knowledge Discovery and Data Mining}, pp.\  1721--1730, 2015.

\bibitem[Colombo \& Vovk(2020)Colombo and Vovk]{colombo2020training}
Nicolo Colombo and Vladimir Vovk.
\newblock Training conformal predictors.
\newblock In \emph{Conformal and Probabilistic Prediction and Applications}, pp.\  55--64. PMLR, 2020.

\bibitem[Correia et~al.(2024)Correia, Massoli, Louizos, and Behboodi]{correia2024information}
Alvaro~HC Correia, Fabio~Valerio Massoli, Christos Louizos, and Arash Behboodi.
\newblock An information theoretic perspective on conformal prediction.
\newblock \emph{arXiv preprint arXiv:2405.02140}, 2024.

\bibitem[Cresswell et~al.(2024)Cresswell, Sui, Kumar, and Vouitsis]{cresswell2024conformal}
Jesse~C Cresswell, Yi~Sui, Bhargava Kumar, and No{\"e}l Vouitsis.
\newblock Conformal prediction sets improve human decision making.
\newblock \emph{arXiv preprint arXiv:2401.13744}, 2024.

\bibitem[Dabah \& Tirer(2024)Dabah and Tirer]{dabah2024calibration}
Lahav Dabah and Tom Tirer.
\newblock On calibration and conformal prediction of deep classifiers.
\newblock \emph{arXiv preprint arXiv:2402.05806}, 2024.

\bibitem[Daxberger et~al.(2021)Daxberger, Kristiadi, Immer, Eschenhagen, Bauer, and Hennig]{daxberger2021laplace}
Erik Daxberger, Agustinus Kristiadi, Alexander Immer, Runa Eschenhagen, Matthias Bauer, and Philipp Hennig.
\newblock Laplace redux-effortless bayesian deep learning.
\newblock \emph{Advances in Neural Information Processing Systems}, 34:\penalty0 20089--20103, 2021.

\bibitem[Deng et~al.(2009)Deng, Dong, Socher, Li, Li, and Fei-Fei]{deng2009imagenet}
Jia Deng, Wei Dong, Richard Socher, Li-Jia Li, Kai Li, and Li~Fei-Fei.
\newblock Imagenet: A large-scale hierarchical image database.
\newblock In \emph{2009 IEEE Conference on Computer Vision and Pattern Recognition}, pp.\  248--255. Ieee, 2009.

\bibitem[Devlin(2018)]{devlin2018bert}
Jacob Devlin.
\newblock Bert: Pre-training of deep bidirectional transformers for language understanding.
\newblock \emph{arXiv preprint arXiv:1810.04805}, 2018.

\bibitem[Dosovitskiy et~al.(2021)Dosovitskiy, Beyer, Kolesnikov, Weissenborn, Zhai, Unterthiner, Dehghani, Minderer, Heigold, Gelly, Uszkoreit, and Houlsby]{dosovitskiy2020image}
Alexey Dosovitskiy, Lucas Beyer, Alexander Kolesnikov, Dirk Weissenborn, Xiaohua Zhai, Thomas Unterthiner, Mostafa Dehghani, Matthias Minderer, Georg Heigold, Sylvain Gelly, Jakob Uszkoreit, and Neil Houlsby.
\newblock An image is worth 16x16 words: Transformers for image recognition at scale.
\newblock In \emph{9th International Conference on Learning Representations, {ICLR} 2021, Virtual Event, Austria, May 3-7, 2021}. OpenReview.net, 2021.

\bibitem[Einbinder et~al.(2022)Einbinder, Romano, Sesia, and Zhou]{einbinder2022training}
Bat-Sheva Einbinder, Yaniv Romano, Matteo Sesia, and Yanfei Zhou.
\newblock Training uncertainty-aware classifiers with conformalized deep learning.
\newblock \emph{Advances in Neural Information Processing Systems}, 35:\penalty0 22380--22395, 2022.

\bibitem[Ghosh et~al.(2023)Ghosh, Belkhouja, Yan, and Doppa]{ghosh2023improving}
Subhankar Ghosh, Taha Belkhouja, Yan Yan, and Janardhan~Rao Doppa.
\newblock Improving uncertainty quantification of deep classifiers via neighborhood conformal prediction: Novel algorithm and theoretical analysis.
\newblock In \emph{Proceedings of the AAAI Conference on Artificial Intelligence}, volume~37, pp.\  7722--7730, 2023.

\bibitem[Gibbs et~al.(2023)Gibbs, Cherian, and Cand{\`e}s]{gibbs2023conformal}
Isaac Gibbs, John~J Cherian, and Emmanuel~J Cand{\`e}s.
\newblock Conformal prediction with conditional guarantees.
\newblock \emph{arXiv preprint arXiv:2305.12616}, 2023.

\bibitem[Guan \& Tibshirani(2022)Guan and Tibshirani]{guan2022prediction}
Leying Guan and Robert Tibshirani.
\newblock Prediction and outlier detection in classification problems.
\newblock \emph{Journal of the Royal Statistical Society Series B: Statistical Methodology}, 84\penalty0 (2):\penalty0 524--546, 2022.

\bibitem[Guo et~al.(2017)Guo, Pleiss, Sun, and Weinberger]{guo2017calibration}
Chuan Guo, Geoff Pleiss, Yu~Sun, and Kilian~Q Weinberger.
\newblock On calibration of modern neural networks.
\newblock In \emph{International Conference on Machine Learning}, pp.\  1321--1330. PMLR, 2017.

\bibitem[He et~al.(2016)He, Zhang, Ren, and Sun]{he2016deep}
Kaiming He, Xiangyu Zhang, Shaoqing Ren, and Jian Sun.
\newblock Deep residual learning for image recognition.
\newblock In \emph{Proceedings of the IEEE Conference on Computer Vision and Pattern Recognition}, pp.\  770--778, 2016.

\bibitem[Huang et~al.(2017)Huang, Liu, Van Der~Maaten, and Weinberger]{huang2017densely}
Gao Huang, Zhuang Liu, Laurens Van Der~Maaten, and Kilian~Q Weinberger.
\newblock Densely connected convolutional networks.
\newblock In \emph{Proceedings of the IEEE Conference on Computer Vision and Pattern Recognition}, pp.\  4700--4708, 2017.

\bibitem[Huang et~al.(2024)Huang, Xi, Zhang, Yao, Qiu, and Wei]{huangconformal}
Jianguo Huang, HuaJun Xi, Linjun Zhang, Huaxiu Yao, Yue Qiu, and Hongxin Wei.
\newblock Conformal prediction for deep classifier via label ranking.
\newblock In \emph{Forty-first International Conference on Machine Learning}, 2024.

\bibitem[Krizhevsky et~al.(2009)Krizhevsky, Hinton, et~al.]{krizhevsky2009learning}
Alex Krizhevsky, Geoffrey Hinton, et~al.
\newblock Learning multiple layers of features from tiny images.
\newblock 2009.

\bibitem[Kumar et~al.(2023)Kumar, Lu, Gupta, Palepu, Bellamy, Raskar, and Beam]{kumar2023conformal}
Bhawesh Kumar, Charlie Lu, Gauri Gupta, Anil Palepu, David~R. Bellamy, Ramesh Raskar, and Andrew Beam.
\newblock Conformal prediction with large language models for multi-choice question answering.
\newblock \emph{CoRR}, abs/2305.18404, 2023.
\newblock \doi{10.48550/ARXIV.2305.18404}.

\bibitem[Lei(2014)]{lei2014classification}
Jing Lei.
\newblock Classification with confidence.
\newblock \emph{Biometrika}, 101\penalty0 (4):\penalty0 755--769, 2014.

\bibitem[Lei \& Wasserman(2014)Lei and Wasserman]{lei2014distribution}
Jing Lei and Larry Wasserman.
\newblock Distribution-free prediction bands for non-parametric regression.
\newblock \emph{Journal of the Royal Statistical Society Series B: Statistical Methodology}, 76\penalty0 (1):\penalty0 71--96, 2014.

\bibitem[Liang et~al.(2022)Liang, Sesia, and Sun]{liang2022integrative}
Ziyi Liang, Matteo Sesia, and Wenguang Sun.
\newblock Integrative conformal p-values for powerful out-of-distribution testing with labeled outliers.
\newblock \emph{arXiv preprint arXiv:2208.11111}, 2022.

\bibitem[Liu et~al.(2024)Liu, Sun, Zeng, Zhang, Pun, and Vong]{liu2024spatial}
Kangdao Liu, Tianhao Sun, Hao Zeng, Yongshan Zhang, Chi-Man Pun, and Chi-Man Vong.
\newblock Spatial-aware conformal prediction for trustworthy hyperspectral image classification.
\newblock \emph{arXiv preprint arXiv:2409.01236}, 2024.

\bibitem[Lu et~al.(2022)Lu, Ahmed, Singh, and Kalpathy{-}Cramer]{lu2022estimating}
Charles Lu, Syed~Rakin Ahmed, Praveer Singh, and Jayashree Kalpathy{-}Cramer.
\newblock Estimating test performance for {AI} medical devices under distribution shift with conformal prediction.
\newblock \emph{CoRR}, abs/2207.05796, 2022.
\newblock \doi{10.48550/ARXIV.2207.05796}.

\bibitem[Lu et~al.(2023)Lu, Yu, Karimireddy, Jordan, and Raskar]{lu2023federated}
Charles Lu, Yaodong Yu, Sai~Praneeth Karimireddy, Michael Jordan, and Ramesh Raskar.
\newblock Federated conformal predictors for distributed uncertainty quantification.
\newblock In \emph{International Conference on Machine Learning}, pp.\  22942--22964. PMLR, 2023.

\bibitem[Moya et~al.(2024)Moya, Mollaali, Zhang, Lu, and Lin]{moya2024conformalized}
Christian Moya, Amirhossein Mollaali, Zecheng Zhang, Lu~Lu, and Guang Lin.
\newblock Conformalized-deeponet: A distribution-free framework for uncertainty quantification in deep operator networks.
\newblock \emph{arXiv preprint arXiv:2402.15406}, 2024.

\bibitem[Mukhoti et~al.(2020)Mukhoti, Kulharia, Sanyal, Golodetz, Torr, and Dokania]{mukhoti2020calibrating}
Jishnu Mukhoti, Viveka Kulharia, Amartya Sanyal, Stuart Golodetz, Philip Torr, and Puneet Dokania.
\newblock Calibrating deep neural networks using focal loss.
\newblock \emph{Advances in Neural Information Processing Systems}, 33:\penalty0 15288--15299, 2020.

\bibitem[M{\"u}ller et~al.(2019)M{\"u}ller, Kornblith, and Hinton]{muller2019does}
Rafael M{\"u}ller, Simon Kornblith, and Geoffrey~E Hinton.
\newblock When does label smoothing help?
\newblock \emph{Advances in Neural Information Processing Systems}, 32, 2019.

\bibitem[Naeini et~al.(2015)Naeini, Cooper, and Hauskrecht]{naeini2015obtaining}
Mahdi~Pakdaman Naeini, Gregory Cooper, and Milos Hauskrecht.
\newblock Obtaining well calibrated probabilities using bayesian binning.
\newblock In \emph{Proceedings of the AAAI Conference on Artificial Intelligence}, volume~29, 2015.

\bibitem[Papadopoulos et~al.(2002)Papadopoulos, Proedrou, Vovk, and Gammerman]{papadopoulos2002inductive}
Harris Papadopoulos, Kostas Proedrou, Volodya Vovk, and Alex Gammerman.
\newblock Inductive confidence machines for regression.
\newblock In \emph{Machine Learning: ECML 2002: 13th European Conference on Machine Learning Helsinki, Finland, August 19--23, 2002 Proceedings 13}, pp.\  345--356. Springer, 2002.

\bibitem[Paszke et~al.(2019)Paszke, Gross, Massa, Lerer, Bradbury, Chanan, Killeen, Lin, Gimelshein, Antiga, et~al.]{paszke2019pytorch}
Adam Paszke, Sam Gross, Francisco Massa, Adam Lerer, James Bradbury, Gregory Chanan, Trevor Killeen, Zeming Lin, Natalia Gimelshein, Luca Antiga, et~al.
\newblock Pytorch: An imperative style, high-performance deep learning library.
\newblock \emph{Advances in Neural Information Processing Systems}, 32, 2019.

\bibitem[Pereyra et~al.(2017)Pereyra, Tucker, Chorowski, Kaiser, and Hinton]{pereyra2017regularizing}
Gabriel Pereyra, George Tucker, Jan Chorowski, Lukasz Kaiser, and Geoffrey~E. Hinton.
\newblock Regularizing neural networks by penalizing confident output distributions.
\newblock In \emph{5th International Conference on Learning Representations, {ICLR} 2017, Toulon, France, April 24-26, 2017, Workshop Track Proceedings}. OpenReview.net, 2017.

\bibitem[Platt et~al.(1999)]{platt1999probabilistic}
John Platt et~al.
\newblock Probabilistic outputs for support vector machines and comparisons to regularized likelihood methods.
\newblock \emph{Advances in Large Margin Classifiers}, 10\penalty0 (3):\penalty0 61--74, 1999.

\bibitem[Recht et~al.(2019)Recht, Roelofs, Schmidt, and Shankar]{recht2019imagenet}
Benjamin Recht, Rebecca Roelofs, Ludwig Schmidt, and Vaishaal Shankar.
\newblock Do imagenet classifiers generalize to imagenet?
\newblock In \emph{International Conference on Machine Learning}, pp.\  5389--5400. PMLR, 2019.

\bibitem[Romano et~al.(2019)Romano, Patterson, and Candes]{romano2019conformalized}
Yaniv Romano, Evan Patterson, and Emmanuel Candes.
\newblock Conformalized quantile regression.
\newblock \emph{Advances in Neural Information Processing Systems}, 32, 2019.

\bibitem[Romano et~al.(2020)Romano, Sesia, and Candes]{romano2020classification}
Yaniv Romano, Matteo Sesia, and Emmanuel Candes.
\newblock Classification with valid and adaptive coverage.
\newblock \emph{Advances in Neural Information Processing Systems}, 33:\penalty0 3581--3591, 2020.

\bibitem[Sadinle et~al.(2019)Sadinle, Lei, and Wasserman]{sadinle2019least}
Mauricio Sadinle, Jing Lei, and Larry Wasserman.
\newblock Least ambiguous set-valued classifiers with bounded error levels.
\newblock \emph{Journal of the American Statistical Association}, 114\penalty0 (525):\penalty0 223--234, 2019.

\bibitem[Seedat et~al.(2023)Seedat, Jeffares, Imrie, and van~der Schaar]{seedat2023improving}
Nabeel Seedat, Alan Jeffares, Fergus Imrie, and Mihaela van~der Schaar.
\newblock Improving adaptive conformal prediction using self-supervised learning.
\newblock In \emph{International Conference on Artificial Intelligence and Statistics}, pp.\  10160--10177. PMLR, 2023.

\bibitem[Shafer \& Vovk(2008)Shafer and Vovk]{shafer2008tutorial}
Glenn Shafer and Vladimir Vovk.
\newblock A tutorial on conformal prediction.
\newblock \emph{Journal of Machine Learning Research}, 9\penalty0 (3), 2008.

\bibitem[Simonyan \& Zisserman(2015)Simonyan and Zisserman]{simonyan2014very}
Karen Simonyan and Andrew Zisserman.
\newblock Very deep convolutional networks for large-scale image recognition.
\newblock In Yoshua Bengio and Yann LeCun (eds.), \emph{3rd International Conference on Learning Representations, {ICLR} 2015, San Diego, CA, USA, May 7-9, 2015, Conference Track Proceedings}, 2015.

\bibitem[Smith(2013)]{smith2013uncertainty}
Ralph~C Smith.
\newblock \emph{Uncertainty quantification: theory, implementation, and applications}, volume~12.
\newblock Siam, 2013.

\bibitem[Straitouri et~al.(2023)Straitouri, Wang, Okati, and Rodriguez]{straitouri2022improving}
Eleni Straitouri, Lequn Wang, Nastaran Okati, and Manuel~Gomez Rodriguez.
\newblock Improving expert predictions with conformal prediction.
\newblock In \emph{Proceedings of the 40th International Conference on Machine Learning}, ICML'23. JMLR.org, 2023.

\bibitem[Stutz et~al.(2022)Stutz, Dvijotham, Cemgil, and Doucet]{stutz2021learning}
David Stutz, Krishnamurthy Dvijotham, Ali~Taylan Cemgil, and Arnaud Doucet.
\newblock Learning optimal conformal classifiers.
\newblock In \emph{The Tenth International Conference on Learning Representations, {ICLR} 2022, Virtual Event, April 25-29, 2022}. OpenReview.net, 2022.

\bibitem[Szegedy et~al.(2016)Szegedy, Vanhoucke, Ioffe, Shlens, and Wojna]{szegedy2016rethinking}
Christian Szegedy, Vincent Vanhoucke, Sergey Ioffe, Jon Shlens, and Zbigniew Wojna.
\newblock Rethinking the inception architecture for computer vision.
\newblock In \emph{Proceedings of the IEEE Conference on Computer Vision and Pattern Recognition}, pp.\  2818--2826, 2016.

\bibitem[Teng et~al.()Teng, Wen, Zhang, Bengio, Gao, and Yuan]{tengpredictive}
Jiaye Teng, Chuan Wen, Dinghuai Zhang, Yoshua Bengio, Yang Gao, and Yang Yuan.
\newblock Predictive inference with feature conformal prediction.
\newblock In \emph{The Eleventh International Conference on Learning Representations}.

\bibitem[Thulasidasan et~al.(2019)Thulasidasan, Chennupati, Bilmes, Bhattacharya, and Michalak]{thulasidasan2019mixup}
Sunil Thulasidasan, Gopinath Chennupati, Jeff~A Bilmes, Tanmoy Bhattacharya, and Sarah Michalak.
\newblock On mixup training: Improved calibration and predictive uncertainty for deep neural networks.
\newblock \emph{Advances in Neural Information Processing Systems}, 32, 2019.

\bibitem[Vovk(2012)]{vovk2012conditional}
Vladimir Vovk.
\newblock Conditional validity of inductive conformal predictors.
\newblock In \emph{Asian Conference on Machine Learning}, pp.\  475--490. PMLR, 2012.

\bibitem[Vovk et~al.(2005)Vovk, Gammerman, and Shafer]{vovk2005algorithmic}
Vladimir Vovk, Alexander Gammerman, and Glenn Shafer.
\newblock \emph{Algorithmic learning in a random world}, volume~29.
\newblock Springer, 2005.

\bibitem[Wang(2023)]{wang2023calibration}
Cheng Wang.
\newblock Calibration in deep learning: A survey of the state-of-the-art.
\newblock \emph{arXiv preprint arXiv:2308.01222}, 2023.

\bibitem[Wang et~al.(2024)Wang, Wang, Wang, Zhang, Zhou, and Wei]{DBLP:conf/icml/Wang0W0ZW24}
Shuoyuan Wang, Jindong Wang, Guoqing Wang, Bob Zhang, Kaiyang Zhou, and Hongxin Wei.
\newblock Open-vocabulary calibration for fine-tuned {CLIP}.
\newblock In \emph{Forty-first International Conference on Machine Learning, {ICML} 2024, Vienna, Austria, July 21-27, 2024}. OpenReview.net, 2024.

\bibitem[Wei et~al.(2022)Wei, Xie, Cheng, Feng, An, and Li]{wei2022mitigating}
Hongxin Wei, Renchunzi Xie, Hao Cheng, Lei Feng, Bo~An, and Yixuan Li.
\newblock Mitigating neural network overconfidence with logit normalization.
\newblock In \emph{International Conference on Machine Learning}, pp.\  23631--23644. PMLR, 2022.

\bibitem[Yuksekgonul et~al.(2023)Yuksekgonul, Zhang, Zou, and Guestrin]{yuksekgonul2023beyond}
Mert Yuksekgonul, Linjun Zhang, James Zou, and Carlos Guestrin.
\newblock Beyond confidence: Reliable models should also consider atypicality.
\newblock In \emph{Thirty-seventh Conference on Neural Information Processing Systems}, 2023.

\bibitem[Zadrozny \& Elkan(2001)Zadrozny and Elkan]{zadrozny2001obtaining}
Bianca Zadrozny and Charles Elkan.
\newblock Obtaining calibrated probability estimates from decision trees and naive bayesian classifiers.
\newblock In \emph{International Conference on Machine Learning}, volume~1, pp.\  609--616, 2001.

\bibitem[Zhang et~al.(2018)Zhang, Ciss{\'{e}}, Dauphin, and Lopez{-}Paz]{zhang2017mixup}
Hongyi Zhang, Moustapha Ciss{\'{e}}, Yann~N. Dauphin, and David Lopez{-}Paz.
\newblock mixup: Beyond empirical risk minimization.
\newblock In \emph{6th International Conference on Learning Representations, {ICLR} 2018, Vancouver, BC, Canada, April 30 - May 3, 2018, Conference Track Proceedings}. OpenReview.net, 2018.

\bibitem[Zhang et~al.(2015)Zhang, Zhao, and LeCun]{zhang2015character}
Xiang Zhang, Junbo Zhao, and Yann LeCun.
\newblock Character-level convolutional networks for text classification.
\newblock \emph{Advances in neural information processing systems}, 28, 2015.

\end{thebibliography}
\bibliographystyle{tmlr}

\newpage
\appendix

\section{Related Work}
\textbf{Conformal prediction.} Conformal prediction \citep{papadopoulos2002inductive,vovk2005algorithmic} is a statistical framework for uncertainty qualification. Previous works have applied conformal prediction across various domains, including regression \citep{lei2014distribution, romano2019conformalized}, image classification \citep{angelopoulos2020uncertainty,huangconformal},  hyperspectral image classification \citep{liu2024spatial}, object detection \citep{angelopoulos2021learn,tengpredictive}, outlier detection \citep{bates2023testing,guan2022prediction,liang2022integrative}, and large language models \citep{kumar2023conformal}. Conformal prediction is also deployed in other real-world applications, such as human-in-the-loop decision-making \citep{straitouri2022improving,cresswell2024conformal}, automated vehicles~\citep{bang2024safe}, and scientific computation~\citep{moya2024conformalized}. 

Some methods leverage post-hoc techniques to enhance prediction sets \citep{romano2020classification,angelopoulos2020uncertainty,ghosh2023improving,huangconformal}. For example, Adaptive Prediction Sets (APS) \citep{romano2020classification} calculates the score by accumulating the sorted softmax values in descending order. However, the softmax probabilities typically exhibit a long-tailed distribution, and thus those tail classes are often included in the prediction sets. To alleviate this issue, Regularized Adaptive Prediction Sets (RAPS) \citep{angelopoulos2020uncertainty} exclude tail classes by appending a penalty to these classes, resulting in efficient prediction sets. These post-hoc methods often employ temperature scaling for better calibration performance \citep{angelopoulos2020uncertainty,lu2022estimating,gibbs2023conformal, lu2023federated}. In our work, we show that existing confidence calibration methods harm the efficiency of adaptive conformal prediction.

Some works propose training time regularizations to improve the efficiency of conformal prediction \citep{colombo2020training, stutz2021learning,einbinder2022training,baiefficient, correia2024information}. For example, uncertainty-aware conformal loss function \citep{einbinder2022training} optimizes the efficiency of prediction sets by encouraging the non-conformity scores to follow a uniform distribution. Moreover, conformal training \citep{stutz2021learning} constructs efficient prediction sets by prompting the threshold to be less than the non-conformity scores. In addition, information-based conformal training \citep{correia2024information} incorporates side information into the construction of prediction sets. In this work, we focus on post-hoc training methods, which only require the pre-trained models for conformal prediction. ConfTS is easy to implement and requires low computation resources.

\paragraph{Confidence calibration.} Confidence calibration has been studied in various contexts in recent years. Some works address the miscalibration problem by post-hoc methods, including histogram binning \citep{zadrozny2001obtaining} and Platt scaling \citep{platt1999probabilistic}. Besides, regularization methods like entropy regularization \citep{pereyra2017regularizing} and focal loss \citep{mukhoti2020calibrating} are also proposed to improve the calibration performance of deep neural networks. In addition, data augmentation techniques also play a role in confidence calibration, including label smoothing \citep{muller2019does} and mixup \citep{thulasidasan2019mixup}. A concurrent work \citep{dabah2024calibration} also investigates the effects of temperature scaling on conformal prediction. However, they only focus on the temperature scaling and do not extend the conclusion to other post-hoc and training methods of confidence calibration. In this work, we provide a more comprehensive analysis with both post-hoc and training methods of confidence calibration. In addition to the analysis, we also provide a theoretical explanation and introduce a novel method to automatically optimize the parameters of post-hoc calibrators.

\section{Conformal prediction metrics}\label{appendix:conformal_prediction}
In practice, we often use \textit{coverage} and \textit{average size} to evaluate prediction sets, defined as:
\begin{align}\label{eq.6}
    & \text{Coverage}=\frac{1}{|\mathcal{D}_{test}|}\sum_{(\bm{x}_i,y_i)\in\mathcal{D}_{test}}\mathds{1}\{y_i\in\mathcal{C}(\bm{x}_i)\}, \\
    & \text{Average size}=\frac{1}{|\mathcal{D}_{test}|}\sum_{(\bm{x}_i,y_i)\in\mathcal{D}_{test}}|\mathcal{C}(\bm{x}_i)|,
\end{align}
where $\mathds{1}(\cdot)$ is the indicator function and $\mathcal{D}_{test}$ denotes the test dataset. The coverage rate measures the percentage of samples whose prediction set contains the true label, i.e., an empirical estimation for $\mathbb{P}\{Y\in\mathcal{C}(X)\}$. The average size measures the efficiency of prediction sets. For informative predictions \citep{vovk2012conditional,angelopoulos2020uncertainty}, the prediction sets are preferred to be efficient (i.e., small prediction sets) while satisfying the valid coverage (defined in Eq.~(\ref{eq:validity})).

\section{Confidence calibration methods} \label{appendix:calibration_methods}
Here, we briefly review three post-hoc calibration methods, whose parameters are optimized with respect to negative log-likelihood (NLL) on the calibration set, and three training calibration methods. Let $\sigma$ be the softmax function and $\bm{f}\in\mathbb{R}^K$ be an arbitrary logits vector.

\paragraph{Platt Scaling \citep{platt1999probabilistic}} is a parametric approach for calibration. Platt Scaling learns two scalar parameters $a,b\in\mathbb{R}$ and outputs 
\begin{align} \label{eq:platt_scaling}
    \pi =\sigma(a\bm{f}+b).
\end{align}
 
\paragraph{Temperature Scaling \citep{guo2017calibration}} is inspired by Platt scaling \citep{platt1999probabilistic}, using a scalar parameter $t$ for all logits vectors. Formally, for any given logits vector $\bm{f}$, the new prediction is defined by
\begin{align*}
    \pi =\sigma(\bm{f}/t).
\end{align*}
Intuitively, $t$ softens the softmax probabilities when $t>1$ so that it alleviates over-confidence. 
 
\paragraph{Vector Scaling \citep{guo2017calibration}} is a simple extension of Platt scaling. Let $\bm{f}$ be an arbitrary logit vector, which is produced before the softmax layer. Vector scaling applies a linear transformation:
\begin{align*}
 \pi=\sigma(M\bm{f}+b),
\end{align*}
where $M \in \mathbb{R}^{K\times K}$ and  $b \in \mathbb{R}^{ K}$.

\paragraph{Label Smoothing \citep{szegedy2016rethinking}} softens hard labels with an introduced smoothing parameter $\alpha$ in the standard loss function (e.g., cross-entropy):
$$\mathcal{L}=-\sum_{k=1}^{K}y_i^{*}\log p_i,\quad y_k^{*}=y_k(1-\alpha)+\alpha/K,$$
where $y_k$ is the soft label for $k$-th class. It is shown that label smoothing encourages the differences between the logits of the correct class and the logits of the incorrect class to be a constant depending on $\alpha$.

\paragraph{Mixup \citep{zhang2017mixup}} is another classical work in the line of training calibration. Mixup generates synthetic samples during training by convexly combining random pairs of inputs and labels as well. To mix up two random samples $(x_i,y_i)$ and $(x_j,y_j)$, the following rules are used: 
$$\bar{x}=\alpha x_i+(1-\alpha)x_j,\quad\bar{y}=\alpha y_i+(1-\alpha)y_j,$$
where $(\bar{x}_i,\bar{y}_i)$ is the virtual feature-target of original pairs. Previous work \citep{thulasidasan2019mixup} observed that compared to the standard models, mixup-trained models are better calibrated and less prone to overconfidence in prediction on out-of-distribution and noise data.

\paragraph{Bayesian Method \citep{daxberger2021laplace}.} 
Bayesian modeling provides a principled and unified approach to mitigate poor calibration and overconfidence by equipping models with robust uncertainty estimates. Specifically, Bayesian modeling handles uncertainty in neural networks by modeling the distribution over the weights. In this approach, given observed data $\mathcal{D} = \{X, y\}$, we aim to infer a posterior distribution over the model parameters $\theta$ using Bayes' theorem:
\begin{equation}
    p(\theta | \mathcal{D}) = \frac{p(\mathcal{D} | \theta) p(\theta)}{p(\mathcal{D})}.
\end{equation}
Here, $p(\mathcal{D} | \theta)$ represents the likelihood, $p(\theta)$ is the prior over the model parameters, and $p(\mathcal{D})$ is the evidence (marginal likelihood). However, the exact posterior $p(\theta | \mathcal{D})$ is often intractable for deep neural networks due to the high-dimensional parameter space, which makes approximate inference techniques necessary.

One common method for approximating the posterior is \textit{Laplace approximation} (LA). The Laplace approximation assumes that the posterior is approximately Gaussian in the vicinity of the optimal parameters $\theta_{\text{MAP}}$, which simplifies inference.
Mathematically, LA begins by finding the MAP estimate:
\begin{equation}
    \theta_{\text{MAP}} = \arg\max_{\theta} \log p(\mathcal{D} | \theta) + \log p(\theta).
\end{equation}
Then, the posterior is approximated by a Gaussian distribution:
\begin{equation}
    p(\theta | \mathcal{D}) \approx \mathcal{N}(\theta_{\text{MAP}}, H^{-1}),\quad H = -\nabla^2_{\theta} \log p(\theta | \mathcal{D}) \bigg|_{\theta = \theta_{\text{MAP}}}.
\end{equation}
The LA provides an efficient and scalable method to capture uncertainty around the MAP estimate, making it a widely used baseline in Bayesian deep learning models.

\section{Experimental setups for motivation experiments}\label{appendix:motivation_experiment_setup}
We conduct the experiments on CIFAR-100 \citep{krizhevsky2009learning}. We split the test dataset including 10,000 images into 4,000 images for the calibration set and 6,000 for the test set. Then, we split the calibration set into two subsets of equal size: one is the validation set used for confidence calibration, while the other half is the conformal set used for conformal calibration. We train a ResNet50 model from scratch. For post-hoc methods, we train the model using standard cross-entropy loss, while for training methods, we use their corresponding specific loss functions. The training detail is presented in Section~\ref{subsection:experimental_setup}. We leverage APS and RAPS to generate prediction sets at an error rate $\alpha=0.1$, and the hyperparameters are set to be $k_{reg}=1$ and $\lambda=0.001$. 

\section{Experiment results of LAC}
\subsection{How does confidence calibration affects LAC?}\label{app:cali_thr}
In this part, we investigate the connection between and confidence calibration methods. We employ three pre-trained classifiers: ResNet18, ResNet101 \citep{he2016deep}, DenseNet121 \citep{huang2017densely} on CIFAR-100, generating LAC prediction sets with $\alpha=0.1$. In Table \ref{table:thr_calibration}, the results show that different post-hoc methods have varying impacts on LAC prediction sets, while all of them can maintain the desired coverage rate. For example, the original average size of ResNet18 with respect to THR is 2.23, increases to 2.40 with vector scaling, 2.34 with temperature scaling, and decreases to 2.20 with Platt scaling.

\begin{table}[H]
\caption{The performance of LAC prediction sets employed with different calibration methods: baseline (BS), vector scaling (VS), Platt scaling (PS), and temperature scaling (TS). $\downarrow$ indicates smaller values are better.}
\vspace{0.2cm}
\label{table:thr_calibration}
\centering
\resizebox{0.5\textwidth}{!}{
\setlength{\tabcolsep}{2mm}{
\begin{tabular}{@{}cccccc@{}}
\toprule
Model                        & Metrics  & BS   & VS   & PS   & TS   \\ \midrule
\multirow{2}{*}{ResNet18}    & Avg.size $\downarrow$ & \textbf{2.23} & 2.42  & 2.26  & 2.34 \\ \cmidrule(l){2-6} 
& Coverage & 0.90 & 0.90 & 0.90 & 0.90 \\ \midrule
\multirow{2}{*}{ResNet101}   & Avg.size $\downarrow$ & 1.88 & 1.98 & \textbf{1.83} & \textbf{1.83} \\ \cmidrule(l){2-6} 
& Coverage & 0.90 & 0.90 & 0.90 & 0.90 \\ \midrule
\multirow{2}{*}{DenseNet121} & Avg.size $\downarrow$ & 1.68 & 1.68 & 1.69 & \textbf{1.65} \\ \cmidrule(l){2-6} 
& Coverage & 0.90 & 0.90 & 0.90 & 0.90 \\ \bottomrule
\end{tabular}}}
\end{table}

\subsection{LAC with high-confidence prediction}\label{subsection:overconfidence_thr}

In Figure \ref{fig:overconfidence_thr}, we compare the performance of LAC prediction sets deployed with different temperatures. We observe that when used with a small temperature, models tend to generate large prediction sets, while the coverage rate stabilizes at about 0.9, maintaining the marginal coverage. Moreover, we observe that models typically construct the smallest prediction set when the temperature approximates 1. Therefore, we cannot search for an appropriate temperature that benefits LAC.

\begin{figure}[H]
\centering
  \begin{subfigure}{0.35\textwidth}
    \centering
    \includegraphics[width=0.9\linewidth]{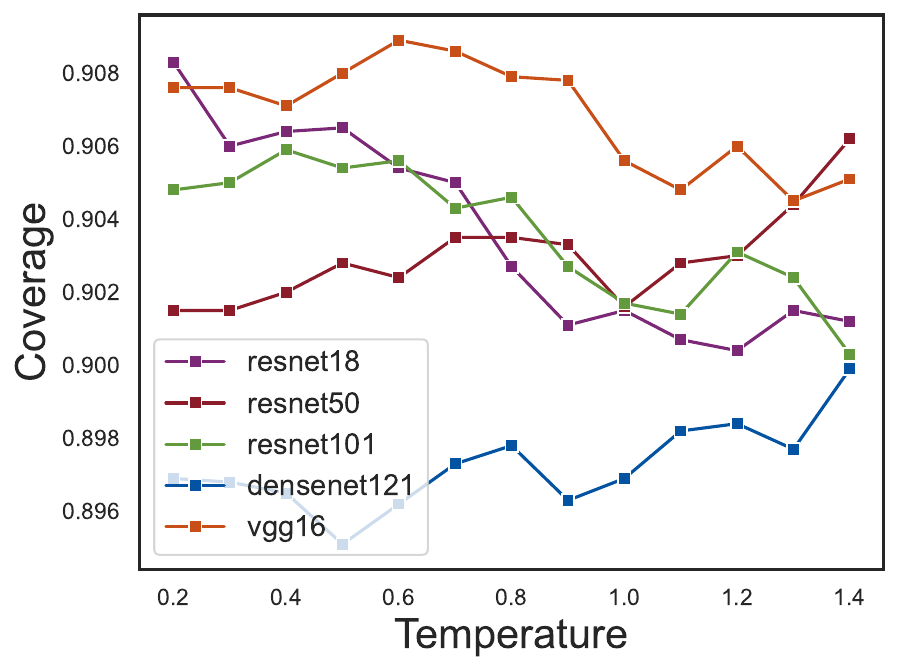}
    \caption{Coverage}
  \end{subfigure}
  \begin{subfigure}{0.35\textwidth}
    \centering
    \includegraphics[width=0.9\linewidth]{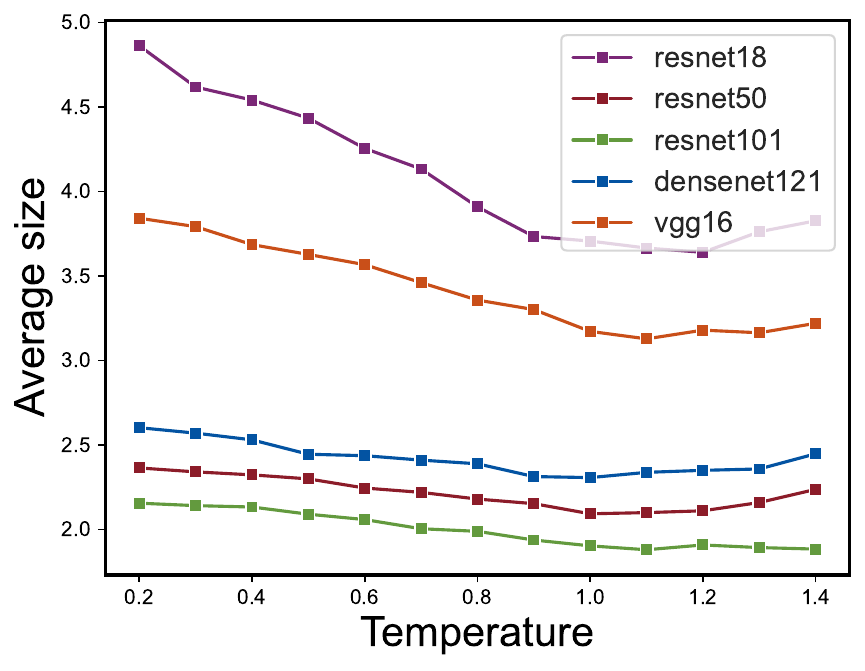}
    \caption{LAC}
  \end{subfigure}
  \caption{The performance comparison of prediction sets with different temperatures.}
  \label{fig:overconfidence_thr}
\end{figure}

\section{Why numerical error occurs under an exceedingly small temperature?} \label{appendix:num_err}
\begin{figure}[h]
\centering
\includegraphics[width=0.5\linewidth]{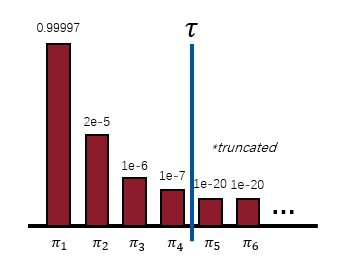}
\caption{An example of softmax probabilities produced by a small temperature. }
\label{fig:overconfidence_small_t(b)}
\end{figure}
In Section~\ref{subsection:theory}, we show that an exceedingly low temperature could pose challenges for prediction sets. This problem can be attributed to numerical errors. Specifically, in Proposition~\ref{proposition:temperature_score}, we show that the softmax probability tends to concentrate in top classes with a small temperature, resulting in a long-tail distribution. Thus, the tail probabilities of some samples could be small and truncated, eventually becoming zero. For example, in Figure~\ref{fig:overconfidence_small_t(b)}, the softmax probability is given by $\bm{\pi}(\bm{x})=[0.999997, 2\times 10^{-5}, 1\times 10^{-6},\cdots]$, and the prediction set size should be 4, following Eq.~(\ref{eq:cp_set}). However, due to numerical error, the tail probabilities, i.e., $\pi_5,\pi_6$ are truncated to be zero. This numerical error causes the conformal threshold to exceed the non-conformity scores for all classes, leading to a trivial set. Furthermore, as the temperature decreases, numerical errors occur in more data samples, resulting in increased trivial sets and consequently raising the average set size. 

\section{Proofs}\label{appendix:proof}
\subsection{Proof for Proposition \ref{proposition:temperature_score}}\label{appendix:proof_temperature_score}
We start by showing several lemmas: the Lemma~\ref{appendix:proof_temperature_score}, Lemma~\ref{lemma:proof1.2} and Lemma~\ref{lemma:proof1.3}.

\begin{lemma}\label{lemma:proof1.1}
For any given logits $(f_1,\cdots,f_K)$ with $f_1>f_2>\cdots>f_K$, and a constant $0<t<1$, we have:
\begin{align*}
    &\text{(a) }\frac{e^{f_1/t}}{\sum_{i=1}^Ke^{f_i/t}}>\frac{e^{f_1}}{\sum_{i=1}^Ke^{f_i}}, \\
    &\text{(b) }\frac{e^{f_K/t}}{\sum_{i=1}^Ke^{f_i/t}}<\frac{e^{f_K}}{\sum_{i=1}^Ke^{f_i}}.
\end{align*}
\end{lemma}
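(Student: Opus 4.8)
The plan is to rewrite each softmax probability in a normalized form by factoring out the relevant dominant exponential, thereby reducing both claims to a termwise comparison of exponentials that is governed entirely by the single inequality $1/t > 1$ (which holds because $0 < t < 1$).

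For part (a), I would divide the numerator and denominator of the left-hand side by $e^{f_1/t}$ (and likewise the right-hand side by $e^{f_1}$), obtaining the identity
$$\frac{e^{f_1/t}}{\sum_{i=1}^K e^{f_i/t}} = \frac{1}{1 + \sum_{i=2}^K e^{(f_i - f_1)/t}}.$$
Since $f_i - f_1 < 0$ for every $i \geq 2$ by the strict ordering of the logits, and $1/t > 1$, each exponent satisfies $(f_i - f_1)/t < f_i - f_1$, so monotonicity of $\exp$ gives $e^{(f_i - f_1)/t} < e^{f_i - f_1}$. Summing these strict inequalities shows the denominator at temperature $t$ is strictly smaller than the corresponding denominator at temperature $1$; since $d \mapsto 1/(1+d)$ is decreasing for $d > 0$, the desired strict inequality follows.

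For part (b), I would apply the same factoring but relative to the smallest logit, writing
$$\frac{e^{f_K/t}}{\sum_{i=1}^K e^{f_i/t}} = \frac{1}{1 + \sum_{i=1}^{K-1} e^{(f_i - f_K)/t}}.$$
Now the gaps $f_i - f_K > 0$ for $i \leq K-1$, so combined with $1/t > 1$ the exponents \emph{increase} under rescaling, $(f_i - f_K)/t > f_i - f_K$, yielding $e^{(f_i - f_K)/t} > e^{f_i - f_K}$. Hence the denominator grows when passing to temperature $t$, and the same monotone-reciprocal argument delivers the reversed inequality claimed in (b).

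The only point requiring genuine care — and the single place where (a) and (b) diverge — is tracking the sign of the logit gaps: relative to the top class all gaps are negative, whereas relative to the bottom class all gaps are positive, so the uniform factor $1/t > 1$ shrinks the off-diagonal mass in (a) but inflates it in (b). Once the signs are pinned down the argument is a pure monotonicity-of-$\exp$ calculation, so I anticipate no substantive obstacle; the role of the strict ordering $f_1 > \cdots > f_K$ is merely to ensure the gaps are nonzero, which is what makes every inequality strict rather than weak.
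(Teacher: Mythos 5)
Your proof is correct and is essentially the same argument as the paper's: the paper writes $1/t = 1+s$ with $s>0$ and factors $e^{f_1}$ (resp.\ $e^{f_K}$) out of the numerator, reducing each claim to the observation that $e^{s(f_i-f_1)}<1$ while $e^{s(f_i-f_K)}>1$, which is exactly your termwise comparison of $e^{(f_i-f_j)/t}$ against $e^{f_i-f_j}$ governed by the sign of the logit gaps. The only cosmetic difference is that you normalize to the form $1/(1+\sum e^{\,\cdot\,})$ before comparing, whereas the paper compares the denominators $\sum_i e^{f_i}e^{s(f_i-f_j)}$ and $\sum_i e^{f_i}$ directly.
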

\begin{proof}
Let $s=\frac{1}{t}-1$. Then, we have
\begin{align*}
    \frac{e^{f_1/t}}{\sum_{i=1}^Ke^{f_i/t}}=\frac{e^{(1+s)f_1}}{\sum_{i=1}^Ke^{(1+s)f_i}}=\frac{e^{f_1}}{\sum_{i=1}^Ke^{f_i}e^{s(f_i-f_1)}}>\frac{e^{f_1}}{\sum_{i=1}^Ke^{f_i}}. \\
    \frac{e^{f_K/t}}{\sum_{i=1}^Ke^{f_i/t}}=\frac{e^{(1+s)f_K}}{\sum_{i=1}^Ke^{(1+s)f_i}}=\frac{e^{f_K}}{\sum_{i=1}^Ke^{f_i}e^{s(f_i-f_K)}}<\frac{e^{f_1}}{\sum_{i=1}^Ke^{f_i}}.
\end{align*}
\end{proof}

\begin{lemma} \label{lemma:proof1.2}
For any given logits $(f_1,\cdots,f_K)$ with $f_1>f_2>\cdots>f_K$, and a constant $0<t<1$, if there exists $j>1$ such that 
$$\frac{e^{f_j/t}}{\sum_{i=1}^Ke^{f_i/t}}>\frac{e^{f_j}}{\sum_{i=1}^Ke^{f_i}},$$
then, for all $k=1,2,\cdots,j$, we have
\begin{equation}\label{eq:proof_lemmaf.2(1)}
    \frac{e^{f_k/t}}{\sum_{i=1}^Ke^{f_i/t}}>\frac{e^{f_k}}{\sum_{i=1}^Ke^{f_i}}.
\end{equation}
\end{lemma}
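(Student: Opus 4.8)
The plan is to reduce the claim to a single monotonicity observation about the \emph{ratio} of the two softmax probabilities. For each class $k$, write $p_k = e^{f_k}/\sum_{i=1}^K e^{f_i}$ for the untempered probability and $q_k = e^{f_k/t}/\sum_{i=1}^K e^{f_i/t}$ for the tempered one, so that the hypothesis reads $q_j > p_j$ and the conclusion to be shown is $q_k > p_k$ for every $k\le j$.

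First I would compute the ratio and factor out the normalizers. Since both denominators are common to all classes, they collapse into a single constant:
\begin{equation*}
\frac{q_k}{p_k} = \frac{e^{f_k/t}}{e^{f_k}}\cdot\frac{\sum_{i=1}^K e^{f_i}}{\sum_{i=1}^K e^{f_i/t}} = e^{(1/t-1)f_k}\cdot C, \qquad C := \frac{\sum_{i=1}^K e^{f_i}}{\sum_{i=1}^K e^{f_i/t}},
\end{equation*}
where $C>0$ does not depend on $k$. This leaves only the per-class factor $e^{(1/t-1)f_k}$, which carries all the dependence on $k$.

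The key step is then monotonicity. Because $0<t<1$ we have $1/t-1>0$, and because the logits are strictly decreasing, $f_1>f_2>\cdots>f_K$, the map $k\mapsto e^{(1/t-1)f_k}$ is strictly decreasing; hence so is $k\mapsto q_k/p_k$. Consequently, for every $k\le j$ we have $q_k/p_k \ge q_j/p_j$, with strict inequality when $k<j$ and equality at $k=j$. The hypothesis $q_j>p_j$ is precisely $q_j/p_j>1$, so in either case $q_k/p_k>1$, i.e.\ $q_k>p_k$, for all $k=1,\dots,j$, which is exactly the asserted inequality~\eqref{eq:proof_lemmaf.2(1)}.

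There is no substantial obstacle: the lemma becomes elementary once the ratio is written in the factored form above, and the only thing to watch is the bookkeeping of strict versus non-strict inequalities (strict for $k<j$ thanks to the strict ordering of the logits, trivial equality at $k=j$), both of which yield the strict conclusion $q_k>p_k$. I note as a sanity check that this argument also recovers Lemma~\ref{lemma:proof1.1}(a) as the special case $k=1$, since $q_1/p_1$ is the largest of all the ratios.
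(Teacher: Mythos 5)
Your proof is correct, and it takes a genuinely different route from the paper's. The paper argues by downward induction on the index: from the hypothesis at $j$ it extracts the inequality $\sum_{i=1}^K e^{f_i}e^{s(f_i-f_j)}<\sum_{i=1}^K e^{f_i}$ (with $s=1/t-1$), then compares the denominator at index $j-1$ to the one at index $j$ term by term using $f_{j-1}>f_j$ and $s>0$, and finally invokes induction to reach all $k<j$. You instead collapse the whole statement into the closed-form ratio $q_k/p_k=e^{(1/t-1)f_k}\cdot C$ with a $k$-independent constant $C$, observe that this is strictly decreasing in $k$, and conclude in one line that $q_k/p_k\geq q_j/p_j>1$ for $k\leq j$. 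Your version is shorter, avoids induction entirely, and makes the mechanism transparent: the entire content of the lemma is that the tempered-to-untempered ratio is monotone in the logit ordering. It also exposes extra structure the paper's argument hides, e.g.\ that there is a single crossover index below which all ratios exceed $1$ and above which they fall below $1$ (which, combined with $\sum_k q_k=\sum_k p_k=1$, gives Lemma~\ref{lemma:proof1.1} as well, as you note -- though your sanity-check remark would need that normalization step to actually yield $q_1>p_1$). The paper's inductive phrasing buys nothing here beyond matching the style of the surrounding lemmas; your argument is the cleaner one.
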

\begin{proof}
It suffices to show that 
\begin{equation}\label{eq:proof_lemmaf.2(2)}
    \frac{e^{f_{j-1}/t}}{\sum_{i=1}^Ke^{f_i/t}}>\frac{e^{f_{j-1}}}{\sum_{i=1}^Ke^{f_i}},
\end{equation}
since the rest cases where $k=1,2,\cdots,j-1$ would hold by induction. The assumption gives us
$$\frac{e^{f_j/t}}{\sum_{i=1}^Ke^{f_i/t}}>\frac{e^{f_j}}{\sum_{i=1}^Ke^{f_i}}.$$
Let $s=\frac{1}{t}-1$, which follows that
$$\frac{e^{f_j/t}}{\sum_{i=1}^Ke^{f_i/t}}=\frac{e^{(1+s)f_j}}{\sum_{i=1}^Ke^{(1+s)f_i}}=\frac{e^{f_j}}{\sum_{i=1}^Ke^{f_i}e^{s(f_i-f_j)}}\overset{(a)}{>}\frac{e^{f_j}}{\sum_{i=1}^Ke^{f_i}}.$$
The inequality (a) indicates that
$$\sum_{i=1}^Ke^{f_i}e^{s(f_i-f_j)}<\sum_{i=1}^Ke^{f_i}.$$
Therefore, we can have
$$\frac{e^{f_{j-1}/t}}{\sum_{i=1}^Ke^{f_i/t}}=\frac{e^{(1+s)f_{j-1}}}{\sum_{i=1}^Ke^{(1+s)f_i}}=\frac{e^{f_{j-1}}}{\sum_{i=1}^Ke^{f_i}e^{s(f_i-f_{j-1})}}>\frac{e^{f_{j-1}}}{\sum_{i=1}^Ke^{f_i}e^{s(f_i-f_{j})}}>\frac{e^{f_{j-1}}}{\sum_{i=1}^Ke^{f_i}},$$
which proves the Eq.~(\ref{eq:proof_lemmaf.2(2)}). Then, by induction, the Eq.~(\ref{eq:proof_lemmaf.2(1)}) holds for all $1\leq k<j$.
\end{proof}

\begin{lemma}\label{lemma:proof1.3}
For any given logits $(f_1,\cdots,f_K)$, where $f_1>f_2>\cdots>f_K$, a constant $0<t<1$, and for all $k=1,2,\cdots,K$, we have
\begin{equation}\label{eq:lemmaf3(1)}
    \sum_{i=1}^{k}\frac{e^{f_i/t}}{\sum_{j=1}^Ke^{f_j/t}}\geq\sum_{i=1}^{k}\frac{e^{f_i}}{\sum_{j=1}^Ke^{f_j}}
\end{equation}
The equation holds if and only if $k=K$.
\end{lemma}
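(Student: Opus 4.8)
The plan is to read both sides as cumulative sums of the sorted softmax probabilities at temperatures $t$ and $1$, and to show that lowering the temperature below $1$ shifts probability mass toward the top (lowest-index) classes in the sense of a single cumulative dominance. Write $q_i = \frac{e^{f_i/t}}{\sum_{j=1}^K e^{f_j/t}}$ and $p_i = \frac{e^{f_i}}{\sum_{j=1}^K e^{f_j}}$, and set $d_i = q_i - p_i$. Since $(q_i)$ and $(p_i)$ are both probability vectors, $\sum_{i=1}^K d_i = 0$. Hence the claimed inequality $\sum_{i=1}^k q_i \geq \sum_{i=1}^k p_i$ is exactly the statement that the partial sums $D_k := \sum_{i=1}^k d_i$ are nonnegative, and the ``equality iff $k=K$'' clause amounts to $D_k > 0$ for $k < K$ together with $D_K = 0$ (the latter being automatic from $\sum_i d_i = 0$).

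The key structural fact I would extract from the two preceding lemmas is that the sign pattern of $(d_i)_{i=1}^K$ changes exactly once, from strictly positive to nonpositive. Lemma~\ref{lemma:proof1.1}(a) gives $d_1 > 0$ and Lemma~\ref{lemma:proof1.1}(b) gives $d_K < 0$. Lemma~\ref{lemma:proof1.2} states that the index set $S = \{i : d_i > 0\}$ is downward closed: if $d_j > 0$ for some $j > 1$, then $d_k > 0$ for every $k \leq j$. Combining these, $S$ must be a prefix $\{1, 2, \dots, m\}$ for some $m$ with $1 \leq m < K$ (the lower bound from $d_1 > 0$, the strict upper bound from $d_K < 0$). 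Consequently $d_i > 0$ for $i \leq m$ and $d_i \leq 0$ for $i > m$.

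With this single-crossover structure established, the partial-sum argument is short. For $1 \leq k \leq m$, $D_k$ is a sum of strictly positive terms, so $D_k > 0$. For $m < k \leq K-1$, I would instead use $\sum_{i=1}^K d_i = 0$ to write $D_k = -\sum_{i=k+1}^K d_i$; every index in the tail $\{k+1,\dots,K\}$ exceeds $m$, so each $d_i \leq 0$ there, and the term $d_K < 0$ is strict, forcing $\sum_{i=k+1}^K d_i < 0$ and hence $D_k > 0$. Finally $D_K = 0$. This yields $\sum_{i=1}^k q_i > \sum_{i=1}^k p_i$ for all $k < K$ with equality precisely at $k = K$, which is the claim.

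The main obstacle has effectively already been discharged by Lemmas~\ref{lemma:proof1.1} and~\ref{lemma:proof1.2}, which supply the one-crossover shape; the only delicate point is that for $i > m$ the differences $d_i$ are merely nonpositive rather than strictly negative, so I cannot conclude strictness of $D_k$ directly from those intermediate terms. This is exactly why, in the range $k > m$, I route the strict inequality through the tail identity $D_k = -\sum_{i=k+1}^K d_i$ and isolate the strictly negative endpoint term $d_K$; the remainder is a routine telescoping computation.
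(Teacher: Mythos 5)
Your proof is correct, and it leans on the same two supporting results (Lemma~\ref{lemma:proof1.1} and Lemma~\ref{lemma:proof1.2}) as the paper, but it assembles them differently. The paper proceeds by backward induction on $k$ starting from $k=K$: it verifies $k=K-1$ directly from Lemma~\ref{lemma:proof1.1}(b), and then at each earlier index assumes the reversed inequality, subtracts the already-established partial-sum inequality to deduce $d_{k+1}>0$, invokes Lemma~\ref{lemma:proof1.2} to propagate positivity down to all smaller indices, and derives a contradiction by re-summing. You instead make the underlying structure explicit up front: $\{i: d_i>0\}$ is a nonempty proper prefix $\{1,\dots,m\}$ (nonempty by Lemma~\ref{lemma:proof1.1}(a), proper by Lemma~\ref{lemma:proof1.1}(b)), so the sign sequence of $(d_i)$ crosses zero once, and the partial sums $D_k$ are then handled in two transparent cases --- summing strictly positive head terms for $k\le m$, and writing $D_k=-\sum_{i>k}d_i$ with the strictly negative anchor $d_K$ for $m<k<K$. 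This single-crossing formulation is cleaner: it avoids the contradiction step entirely, and it handles the strictness of the inequality for $k<K$ more carefully than the paper does (the paper's contradiction hypothesis is the strict reversal $<$ rather than the full negation $\le$ of what is being proved, a small looseness your direct argument does not incur). Both proofs are sound and of comparable length; yours trades the induction bookkeeping for an explicit description of where $q_i-p_i$ changes sign.
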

\begin{proof}
The Eq.~(\ref{eq:lemmaf3(1)}) holds trivially at $k=K$, since both sides are equal to 1:
\begin{equation}\label{eq:lemmaf3(2)}
    \sum_{i=1}^{K}\frac{e^{f_i/t}}{\sum_{j=1}^Ke^{f_j/t}}=\sum_{i=1}^{K}\frac{e^{f_i}}{\sum_{j=1}^Ke^{f_j}}=1,
\end{equation}
We continue by showing the Eq.~(\ref{eq:lemmaf3(1)}) at $k=K-1$. The Lemma~\ref{lemma:proof1.1} gives us that 
\begin{equation}\label{eq:lemmaf3(3)}
    \frac{e^{f_K/t}}{\sum_{i=1}^Ke^{f_i/t}}<\frac{e^{f_K}}{\sum_{i=1}^Ke^{f_i}},
\end{equation}
Subtracting the Eq.~(\ref{eq:lemmaf3(3)}) by the Eq.~(\ref{eq:lemmaf3(3)}) directly follows that
\begin{equation}\label{eq:lemmaf3(4)}
    \sum_{i=1}^{K-1}\frac{e^{f_i/t}}{\sum_{j=1}^Ke^{f_j/t}}>\sum_{i=1}^{K-1}\frac{e^{f_i}}{\sum_{j=1}^Ke^{f_j}},
\end{equation}
which prove the Eq.~(\ref{eq:lemmaf3(1)}) at $k=K-1$. We then show that the Eq.~(\ref{eq:lemmaf3(1)}) holds at $k=K-2$, which follows that the Eq.~(\ref{eq:lemmaf3(1)}) remains true for all $k=1,2,\cdots K-1$ by induction. Here, we assume that 
\begin{equation}\label{eq:lemmaf3(5)}
    \sum_{i=1}^{K-2}\frac{e^{f_i/t}}{\sum_{j=1}^Ke^{f_j/t}}<\sum_{i=1}^{K-2}\frac{e^{f_i}}{\sum_{j=1}^Ke^{f_j}},
\end{equation}
and we will show that the Eq.~(\ref{eq:lemmaf3(5)}) leads to a contradiction. Subtracting Eq.~(\ref{eq:lemmaf3(5)}) by the Eq.~(\ref{eq:lemmaf3(4)}) gives us that
\begin{equation}\label{eq:lemmaf3(6)}
    \frac{e^{f_{K-1}/t}}{\sum_{i=1}^Ke^{f_i/t}}>\frac{e^{f_{K-1}}}{\sum_{i=1}^Ke^{f_i}}.
\end{equation}
Considering the Lemma~\ref{lemma:proof1.2}, the Eq.~(\ref{eq:lemmaf3(6)}) implies that
\begin{equation}\label{eq:lemmaf3(7)}
    \frac{e^{f_k/t}}{\sum_{i=1}^Ke^{f_i/t}}>\frac{e^{f_k}}{\sum_{i=1}^Ke^{f_i}}
\end{equation}
holds for all $k=1,2,\cdots,K-2$. Accumulating the Eq.~(\ref{eq:lemmaf3(7)}) from $k=1$ to $K-2$ gives us that
\begin{equation*}\label{eq:lemmaf3(8)}
    \sum_{i=1}^{K-2}\frac{e^{f_i/t}}{\sum_{j=1}^Ke^{f_j/t}}>\sum_{i=1}^{K-2}\frac{e^{f_i}}{\sum_{j=1}^Ke^{f_j}}.
\end{equation*}

This contradicts our assumption (Eq.~(\ref{eq:lemmaf3(5)})). It follows that Eq.~(\ref{eq:lemmaf3(1)}) holds at $k=K-2$. Then, by induction, the Eq.~(\ref{eq:lemmaf3(1)}) remains true for all $k=1,2,\cdots K-1$. Combining with the Eq.~(\ref{eq:lemmaf3(2)}), we can complete our proof.
\end{proof}

\begin{proposition}[Restatement of Proposition \ref{proposition:temperature_score}]
For any sample $\boldsymbol{x}\in\mathcal{X}$, let $\mathcal{S}(\bm{x},k,t)$ be the non-conformity score function with respect to an arbitrary class $k\in\mathcal{Y}$, defined as in Eq.~\ref{eq:score_t}. 
Then, for a fixed temperature $t_0$ and $\forall t\in(0,t_0)$, we have
$$\mathcal{S}(\boldsymbol{x},k,t_0)\leq\mathcal{S}(\boldsymbol{x},k,t).$$
\end{proposition}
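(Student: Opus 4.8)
The plan is to reduce the general statement to the special case $t_0 = 1$, which Lemma~\ref{lemma:proof1.3} has already settled, by means of a simple rescaling of the logits. First I would note that Lemma~\ref{lemma:proof1.3} proves exactly the desired monotonicity when the reference temperature equals $1$: for logits ordered as $f_1 > f_2 > \cdots > f_K$ and any $t \in (0,1)$, the cumulative top-$k$ softmax mass under temperature $t$ is at least the corresponding mass under temperature $1$. The remaining task is therefore purely to transfer this conclusion from the baseline $1$ to an arbitrary fixed $t_0 \in \mathbb{R}^+$.

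The key step is a change of variables. I would define rescaled logits $g_i = f_i(\bm{x})/t_0$ for $i = 1, \dots, K$. Because $t_0 > 0$, the strict ordering is preserved, so $g_1 > g_2 > \cdots > g_K$, and Lemma~\ref{lemma:proof1.3} applies verbatim to the tuple $(g_1, \dots, g_K)$. The two identities that drive the argument are that $\mathcal{S}(\bm{x}, k, t_0)$ is the cumulative top-$k$ softmax mass of $(g_i)$ evaluated at temperature $1$, while $\mathcal{S}(\bm{x}, k, t)$ is the cumulative top-$k$ softmax mass of the \emph{same} tuple evaluated at temperature $\tilde{t} := t/t_0$; both follow immediately from the elementary relations $f_i/t_0 = g_i$ and $f_i/t = g_i/\tilde{t}$.

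Since $t \in (0, t_0)$ forces $\tilde{t} \in (0,1)$, I would then invoke Lemma~\ref{lemma:proof1.3} with logits $(g_i)$ and temperature $\tilde{t}$ to obtain
$$\sum_{i=1}^{k} \frac{e^{g_i/\tilde{t}}}{\sum_{j=1}^K e^{g_j/\tilde{t}}} \geq \sum_{i=1}^{k} \frac{e^{g_i}}{\sum_{j=1}^K e^{g_j}},$$
which is precisely $\mathcal{S}(\bm{x}, k, t) \geq \mathcal{S}(\bm{x}, k, t_0)$, as required, and holds for every $k \in \mathcal{Y}$.

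The main obstacle is not this final reduction, which is essentially immediate once the rescaling is spotted, but rather the proof of Lemma~\ref{lemma:proof1.3} itself, i.e.\ establishing the monotonicity of the cumulative softmax mass in the temperature at the baseline. That is where the genuine work lies: one must rule out that any partial sum dips below its baseline value, which the excerpt handles through a careful induction (via Lemmas~\ref{lemma:proof1.1} and~\ref{lemma:proof1.2}) showing that once a cumulative curve rises above the baseline it cannot cross back before reaching the top. Granting those lemmas, the change of variables $g_i = f_i/t_0$ cleanly lifts the $t_0 = 1$ result to arbitrary $t_0$, completing the argument.
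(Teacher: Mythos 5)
Your proposal is correct and follows essentially the same route as the paper: the paper likewise sets $\tilde{f}_i = f_i/t_0$ and $\alpha = t/t_0 \in (0,1)$, observes that the ordering of the rescaled logits is preserved, and invokes Lemma~\ref{lemma:proof1.3} to conclude. Your identification of Lemma~\ref{lemma:proof1.3} as the locus of the real work, with the reduction to $t_0=1$ being a clean change of variables, matches the paper's argument exactly.
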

\begin{proof}
We restate the definition of non-randomized APS score in Eq.~\ref{eq:score_t}:
\begin{equation*}
    \mathcal{S}(\bm{x},y,t)=\sum_{i=1}^k\frac{e^{f_i}}{\sum_{j=1}^Ke^{f_j}}
\end{equation*}
Let $\alpha=t/t_0\in(0,1)$ and $\tilde{f}_i=f_i/t_0$. We rewrite the formulation of $\mathcal{S}(\boldsymbol{x},k,t_0)$ and $\mathcal{S}(\boldsymbol{x},k,t)$ by
\begin{equation*}
\begin{aligned}
    \mathcal{S}(\boldsymbol{x},y,t_0)  &=\sum_{i=1}^{k}\frac{e^{\tilde{f}_i}}{\sum_{j=1}^K e^{\tilde{f}_j}} ,\\
    \mathcal{S}(\boldsymbol{x},y,t)  &=\sum_{i=1}^{k}\frac{e^{\tilde{f}_i/\alpha}}{\sum_{j=1}^K e^{\tilde{f}_j/\alpha}}.
\end{aligned}
\end{equation*}
Since the scaling parameter $t_0$ does not change the order of $(\tilde{f}_1,\tilde{f}_2,\cdots,\tilde{f}_K)$, i.e. $\tilde{f}_1>\tilde{f}_2>\cdots>\tilde{f}_K$ and $\alpha\in(0,1)$, then by the Lemma~\ref{lemma:proof1.3}, we have $\mathcal{S}(\boldsymbol{x},y,t_0)< \mathcal{S}(\boldsymbol{x},y,t)$.
\end{proof}

\subsection{Proof for Corollary~\ref{coroll:epsilon_increase}}\label{appendix:proof_epsilon_increase}
\begin{corollary}[Restatement of Corollary~\ref{coroll:epsilon_increase}]
For any sample $\boldsymbol{x}\in\mathcal{X}$ and a fixed temperature $t_0$,
the difference \(\epsilon(k,t)\) is a decreasing function with respect to \( t \in (0,t_0)\).
\end{corollary}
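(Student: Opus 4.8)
The plan is to reduce the claim about $\epsilon(k,t)$ directly to the monotonicity already established in Proposition~\ref{proposition:temperature_score}. Since $\epsilon(k,t)=\mathcal{S}(\bm{x},k,t)-\mathcal{S}(\bm{x},k,t_0)$ and the subtracted term $\mathcal{S}(\bm{x},k,t_0)$ does not depend on the variable $t$, the difference $\epsilon(k,t)$ is decreasing in $t$ if and only if $t\mapsto\mathcal{S}(\bm{x},k,t)$ is decreasing on $(0,t_0)$. So the entire corollary rests on establishing that the non-randomized APS score is a decreasing function of the temperature, after which the constant shift is irrelevant.

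To prove this monotonicity, I would fix any two temperatures $t_1,t_2$ with $0<t_1<t_2<t_0$ and apply Proposition~\ref{proposition:temperature_score} with its ``fixed temperature'' role played by $t_2$ (the proposition is stated for an \emph{arbitrary} $t_0\in\mathbb{R}^{+}$, so it may be instantiated at $t_2$) and with its variable temperature taken to be $t_1\in(0,t_2)$. This yields $\mathcal{S}(\bm{x},k,t_2)\leq\mathcal{S}(\bm{x},k,t_1)$, which is exactly the statement that $\mathcal{S}(\bm{x},k,\cdot)$ is weakly decreasing on the interval. Subtracting the constant $\mathcal{S}(\bm{x},k,t_0)$ from both sides gives $\epsilon(k,t_2)\leq\epsilon(k,t_1)$, the desired conclusion.

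The only point requiring care is conceptual rather than computational: Proposition~\ref{proposition:temperature_score} is phrased relative to a single fixed reference temperature, so one must recognise that re-instantiating it at every pair of temperatures upgrades the single comparison into full monotonicity across the whole interval. Beyond this observation there is no genuine obstacle, since all the analytic work---the logit-sorting and the inductive argument in Lemmas~\ref{lemma:proof1.1}--\ref{lemma:proof1.3}---has already been absorbed into the proposition. If instead a self-contained derivative argument were preferred, I would differentiate $\mathcal{S}(\bm{x},k,t)$ with respect to $t$ and show $\partial_t\mathcal{S}\leq 0$ by recognising the derivative as a nonpositive covariance between membership in the top-$k$ index set and the logits under the softmax distribution; but the reduction above is shorter and avoids repeating that computation.
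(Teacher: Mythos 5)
Your proof is correct and follows essentially the same route as the paper's: fix $0<t_1<t_2<t_0$, apply Proposition~\ref{proposition:temperature_score} with the reference temperature instantiated at $t_2$ to get $\mathcal{S}(\bm{x},k,t_2)\leq\mathcal{S}(\bm{x},k,t_1)$, and subtract the constant $\mathcal{S}(\bm{x},k,t_0)$. You are in fact slightly more careful than the paper, which invokes the proposition for the pair $(t_1,t_2)$ without explicitly noting that it must be re-instantiated with $t_2$ playing the role of the fixed reference temperature.
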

\begin{proof}
For all $t_1,t_2$ satisfying $0<t_1<t_2<t_0$, we will show that $\epsilon(k,t_1)>\epsilon(k,t_2)$. Continuing from Proposition~\ref{proposition:temperature_score}, we have $\mathcal{S}(\boldsymbol{x},y,t_2)< \mathcal{S}(\boldsymbol{x},y,t_1)$. It follows that
\begin{align*}
\epsilon(k,t_1)
&=\mathcal{S}(\bm{x},k,t_1)-\mathcal{S}(\bm{x},k,t_0) \\
&>\mathcal{S}(\bm{x},k,t_2)-\mathcal{S}(\bm{x},k,t_0) \\
&=\epsilon(k,t_2).
\end{align*}
\end{proof}

\subsection{Proof for Theorem~\ref{thm:size_reduce}}\label{appendix:proof_thm_size_reduce}

In the theorem, we make two continuity assumptions on the CDF of the non-conformity score following~\citep{lei2014classification, sadinle2019least}. We define $G_{k}^{t}(\cdot)$ as the CDF of $\mathcal{S}(\bm{x},k,t)$, assuming that
\begin{equation}\label{eq:assumtion}
\begin{aligned}
& (1) \exists\gamma,c_1,c_2\in(0,1]\ s.t.\ \forall k\in\mathcal{Y},\ c_1|\varepsilon|^{\gamma}\leq|G_k^{t}(s+\varepsilon)-G_k^t(s)|\leq c_2|\varepsilon|^{\gamma}, \\
& (2) \exists\rho>0\ s.t.\ \inf_{k,s}|G_k^{t_0}(s)-G_k^{t}(s)|\geq\rho.
\end{aligned}
\end{equation}
To prove Theorem~\ref{thm:size_reduce}, we start with a lemma:

\begin{lemma}\label{lemma:prob_increase}
Give a pre-trained model, data sample $\bm{x}$, and a temperature satisfying $t^{*}<t_0$. Then, under assumtion~(\ref{eq:assumtion}), we have
$$\mathbb{P}\{k\in\mathcal{C}(\bm{x},t_0),k\notin\mathcal{C}(\bm{x},t^{*})\}\geq c_1(2\epsilon(k,t^{*}))^\gamma.$$
\end{lemma}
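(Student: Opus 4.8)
The plan is to rewrite the event $\{k\in\mathcal{C}(\bm{x},t_0),\,k\notin\mathcal{C}(\bm{x},t^{*})\}$ purely in terms of the non-conformity score $\mathcal{S}(\bm{x},k,\cdot)$ and the thresholds, and then to read off its probability as an increment of the CDF $G_k$, which the first part of assumption~(\ref{eq:assumtion}) lower bounds by a power of the interval width. Concretely, set membership gives $k\in\mathcal{C}(\bm{x},t_0)\iff\mathcal{S}(\bm{x},k,t_0)\le\tau_0$ and $k\notin\mathcal{C}(\bm{x},t^{*})\iff\mathcal{S}(\bm{x},k,t^{*})>\tau^{*}$, where $\tau_0,\tau^{*}$ denote the thresholds at the two temperatures. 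First I would invoke Proposition~\ref{proposition:temperature_score} (equivalently the definition of $\epsilon$) to substitute $\mathcal{S}(\bm{x},k,t^{*})=\mathcal{S}(\bm{x},k,t_0)+\epsilon(k,t^{*})$, so that the joint event becomes the statement that $\mathcal{S}(\bm{x},k,t_0)$ is trapped in a half-open interval anchored at the threshold whose length is governed by $\epsilon(k,t^{*})$. A small observation that streamlines this is that, since $\mathcal{S}(\bm{x},k,t^{*})\ge\mathcal{S}(\bm{x},k,t_0)$ holds pointwise, the exclusion event at $t^{*}$ is essentially nested inside the inclusion event at $t_0$, so the joint probability collapses to a single difference of CDF values.

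Next I would express this probability as $G_k^{t_0}(\tau_0)-G_k^{t^{*}}(\tau^{*})$, and then as a difference $G_k^{t_0}(s+w)-G_k^{t_0}(s)$ on the single distribution $G_k^{t_0}$ over an interval of width $w$ determined by $\epsilon(k,t^{*})$ and the threshold displacement. Applying the lower Hölder bound in assumption~(\ref{eq:assumtion}), namely $G_k^{t_0}(s+\varepsilon)-G_k^{t_0}(s)\ge c_1|\varepsilon|^{\gamma}$ with $\varepsilon=w$, immediately yields a bound of the form $c_1 w^{\gamma}$. The entire content of the lemma is thereby reduced to certifying that the governing width satisfies $w\ge 2\epsilon(k,t^{*})$; here I would also use $\gamma\le 1$ so that the lower exponent is the binding one and $(2\epsilon(k,t^{*}))^{\gamma}$ is controlled correctly.

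The main obstacle is exactly this width accounting, and it is where both assumptions enter. To produce the factor $2\epsilon(k,t^{*})$ rather than $\epsilon(k,t^{*})$, I would account simultaneously for the increase of the class-$k$ score and the displacement of the threshold as the temperature drops from $t_0$ to $t^{*}$: Proposition~\ref{proposition:temperature_score} applied to the true-label scores controls the direction and size of the threshold shift, while the second part of assumption~(\ref{eq:assumtion}), which forces $\inf_{k,s}|G_k^{t_0}(s)-G_k^{t^{*}}(s)|\ge\rho$, guarantees a uniform minimal separation between the two temperatures' score distributions and prevents the governing interval from degenerating. Two technical points require care: first, $\epsilon(k,t^{*})$ is defined pointwise in $\bm{x}$, whereas the CDF argument needs a fixed interval, so I would either treat $\epsilon(k,t^{*})$ as the uniform gap supplied by the assumptions or pass to a uniform lower bound on it before invoking the Hölder inequality; second, I must verify that combining the score increment with the threshold displacement leaves an interval of length at least $2\epsilon(k,t^{*})$, which is the delicate quantitative step. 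Once this width is established, the lower Hölder bound closes the argument in a single line.
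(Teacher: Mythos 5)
Your outline reproduces the architecture of the paper's argument: translate the joint event into score--threshold inequalities, use $\mathcal{S}(\bm{x},k,t^{*})=\mathcal{S}(\bm{x},k,t_0)+\epsilon(k,t^{*})$ to collapse it to a single score trapped in an interval, and apply the lower H\"older bound $c_1|\varepsilon|^{\gamma}$ from assumption~(\ref{eq:assumtion}) to the resulting CDF increment. However, the proposal stops exactly where the proof actually lives. You correctly observe that the interval width is ``the score increment plus the threshold displacement'' and that certifying width $\geq 2\epsilon(k,t^{*})$ is ``the delicate quantitative step,'' but you never carry that step out. The paper does it by first proving a quantitative bound on the threshold shift, namely $\tau(t_0)\leq\tau(t^{*})-\epsilon(k,t^{*})-[c_2^{-1}\rho]^{1/\gamma}$. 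This is obtained by comparing the two CDFs at a shifted argument: assumption~(2) gives $G_k^{t_0}(\tau(t^{*})+\epsilon(k,t^{*}))\geq G_k^{t^{*}}(\tau(t^{*})+\epsilon(k,t^{*}))+\rho$, and the \emph{upper} H\"older bound with constant $c_2$ converts the probability surplus $\rho$ into an argument shift of $[c_2^{-1}\rho]^{1/\gamma}$, which together with the quantile characterization of $\tau$ (both thresholds sit at the same level $\alpha$ of their respective CDFs) forces the stated inequality. Your proposal invokes Proposition~\ref{proposition:temperature_score} and the $\rho$-separation only as qualitative guarantees (``controls the direction and size of the threshold shift,'' ``prevents the interval from degenerating''); neither of those, as stated, produces the extra $\epsilon(k,t^{*})+[c_2^{-1}\rho]^{1/\gamma}$ of width that makes the factor $2$ appear. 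Without that derivation the lemma's bound $c_1(2\epsilon(k,t^{*}))^{\gamma}$ is not established, so this is a genuine gap rather than a stylistic omission.

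A secondary inaccuracy: the claim that the exclusion event at $t^{*}$ is ``essentially nested'' inside the inclusion event at $t_0$, so that the joint probability collapses to a single CDF difference, is not automatic. The two events are referenced to two \emph{different} thresholds $\tau(t_0)$ and $\tau(t^{*})$, and whether the resulting interval is nonempty --- let alone of width $2\epsilon(k,t^{*})$ --- depends entirely on the relative position of those thresholds, which is precisely the content of the missing step. Asserting the nesting up front assumes the conclusion of the threshold comparison rather than deriving it.
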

\begin{proof}
    Let $\mathbb{P}^{t}(\cdot)$ be the probability measure corresponding to $G_y^{t}(\cdot)$, and $C_y^t(s)=\{x:S(\bm{x},y,t)<s\}$. Then, we have
    \begin{equation}\label{thm3.2:eq1}
        \begin{aligned}
        \mathbb{P}^{t_0}(C_y^{t_0}(\tau(t^{*})))
        &=\mathbb{P}^{t_0}(C_y^{t^{*}}(\tau(t^{*})+\epsilon(k,t^{*})) \\
        &=G_y^{t_0}(\tau(t^{*})+\epsilon(k,t^{*})) \\
        &\overset{(a)}{\geq} G_y^{t^{*}}(\tau(t^{*})+\epsilon(k,t^{*}))+\rho.
    \end{aligned}
    \end{equation}
    where (a) comes from the assumption (2). Let $\tau^{*}=\tau(t^{*})-\epsilon(k,t^{*})-[c_2^{-1}\rho]^{1/\gamma}$. Then, replacing the $\tau(t^{*})$ in Eq.~(\ref{thm3.2:eq1}) with $\tau^{*}$, we have
    \begin{equation}\label{thm3.2:eq2}
    \begin{aligned}
        \mathbb{P}^{t_0}(C_y^{t_0}(\tau^{*}))
        &\geq G_y^{t^{*}}(\tau(t^{*})-[c_2^{-1}\rho]^{1/\gamma})+\rho \\
        &\overset{(a)}{\geq} G_y^{t^{*}}(\tau(t^{*})) \\
        &\overset{(b)}{=}\alpha \\
        &\overset{(c)}{=}\mathbb{P}^{t_0}(C_y^{t_0}(\tau(t_0))).
    \end{aligned}
    \end{equation}
    where (a) is due to the assumption (1):
    $$G_y^{t^{*}}(\tau(t^{*}))-G_y^{t^{*}}(\tau(t^{*})-[c_2^{-1}\rho]^{1/\gamma})\leq c_2|[c_2^{-1}\rho]^{1/\gamma}|^{\gamma}=\rho.$$
    (b) and (c) is because of the definition of threshold $\tau$: $C_y^{t^{*}}(\tau(t^{*}))=C_y^{t_0}(\tau(t_0))=\alpha$.
    The Eq.~(\ref{thm3.2:eq2}) follows that
    \begin{equation}\label{thm3.2:eq2}
        \tau(t_0)\leq\tau^{*}=\tau(t^{*})-\epsilon(k,t^{*})-[c_2^{-1}\rho]^{1/\gamma}.
    \end{equation}
   Continuing from Eq.~(\ref{thm3.2:eq2}), it holds for all $y\in\mathcal{Y}$ that
    \begin{align*}
        \mathbb{P}\{k\in\mathcal{C}(\bm{x},t_0),k\notin\mathcal{C}(\bm{x},t^{*})\}
        &\overset{(a)}{=}P\{\mathcal{S}(\bm{x},y,t^{*})<\tau(t^{*}),\mathcal{S}(\bm{x},y,t_0)\geq\tau(t_0)\} \\
        &\overset{(b)}{=}P\{\tau(t^{*})>S(\bm{x},y,t^{*})\geq\tau(t_0)-\epsilon(k,t^{*})\} \\
        &\geq P\{\tau(t^{*})>S(\bm{x},y,t^{*})\geq\tau(t^{*})-2\epsilon(k,t^{*})-[c_2^{-1}\rho]^{1/\gamma}\} \\
        &\overset{(c)}{=}G_y^{t^{*}}(\tau(t^{*}))-G_y^{t^{*}}(\tau(t^{*})-2\epsilon(k,t^{*})-[c_2^{-1}\rho]^{1/\gamma}) \\
        &\overset{(d)}{\geq} c_1(2\epsilon(k,t^{*})+[c_2^{-1}\rho]^{1/\gamma})^\gamma \\
        &\geq c_1(2\epsilon(k,t^{*}))^\gamma.
    \end{align*}
    where (a) comes from the construction of prediction set: $y\in\mathcal{C}(\bm{x})$ if and only if $\mathcal{S}(\bm{x},y)\leq\tau$. (b) is because of the definition of $\epsilon$. (c) and (d) is due to the definition of $G_y^{t}(\cdot)$ and assumption (1).
\end{proof}

\begin{theorem}
Under the assumption~\eqref{eq:assumtion}, there exists constants $c_1,\gamma\in (0,1]$ such that
\begin{equation*}
\underset{\bm{x}\in\mathcal{X}}{\mathbb{E}}[|\mathcal{C}(\bm{x},t)|]\leq K-\sum_{k\in\mathcal{Y}} c_1[2\epsilon(k,t)]^{\gamma}, \quad \forall t\in (0, t_0).
\end{equation*}
\end{theorem}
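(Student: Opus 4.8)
The plan is to reduce the statement about expected set size to the per-class inclusion probabilities supplied by Lemma~\ref{lemma:prob_increase}. Writing the set size as a sum of indicators, $|\mathcal{C}(\bm{x},t)| = \sum_{k\in\mathcal{Y}} \mathds{1}\{k\in\mathcal{C}(\bm{x},t)\}$, linearity of expectation gives
$$\mathbb{E}_{\bm{x}\in\mathcal{X}}[|\mathcal{C}(\bm{x},t)|] = \sum_{k\in\mathcal{Y}} \mathbb{P}\{k\in\mathcal{C}(\bm{x},t)\}.$$
So it suffices to show that for each class $k$ the inclusion probability at the lower temperature $t$ is smaller than at $t_0$ by at least $c_1[2\epsilon(k,t)]^{\gamma}$, and then to bound each $t_0$-term by $1$.

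First I would establish that the prediction sets are nested in temperature, $\mathcal{C}(\bm{x},t)\subseteq\mathcal{C}(\bm{x},t_0)$ for $t\in(0,t_0)$. Intuitively, decreasing $t$ raises every score $\mathcal{S}(\bm{x},k,\cdot)$ (Proposition~\ref{proposition:temperature_score}) while simultaneously raising the calibration threshold $\tau(\cdot)$, and the argument behind Lemma~\ref{lemma:prob_increase} (via the separation assumption (2) in Eq.~\ref{eq:assumtion}) shows that the threshold grows fast enough that no new class enters at the lower temperature. Granting this nesting, the crossover event $\{k\notin\mathcal{C}(\bm{x},t_0),\,k\in\mathcal{C}(\bm{x},t)\}$ has probability zero, so
$$\mathbb{P}\{k\in\mathcal{C}(\bm{x},t)\} = \mathbb{P}\{k\in\mathcal{C}(\bm{x},t_0)\} - \mathbb{P}\{k\in\mathcal{C}(\bm{x},t_0),\,k\notin\mathcal{C}(\bm{x},t)\}.$$

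Next I would invoke Lemma~\ref{lemma:prob_increase} to lower-bound the drop-out term by $c_1[2\epsilon(k,t)]^{\gamma}$, yielding the per-class estimate $\mathbb{P}\{k\in\mathcal{C}(\bm{x},t)\} \leq \mathbb{P}\{k\in\mathcal{C}(\bm{x},t_0)\} - c_1[2\epsilon(k,t)]^{\gamma}$. Summing over $k\in\mathcal{Y}$ and using $\mathbb{P}\{k\in\mathcal{C}(\bm{x},t_0)\}\leq 1$ for each of the $K$ classes gives
$$\mathbb{E}_{\bm{x}\in\mathcal{X}}[|\mathcal{C}(\bm{x},t)|] \leq \sum_{k\in\mathcal{Y}}\left(1 - c_1[2\epsilon(k,t)]^{\gamma}\right) = K - \sum_{k\in\mathcal{Y}} c_1[2\epsilon(k,t)]^{\gamma},$$
which is the claim, with $c_1,\gamma\in(0,1]$ exactly the constants produced by the continuity assumption (1).

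The main obstacle I anticipate is justifying the nesting, i.e.\ controlling the crossover term in the second step. Because both the scores and the threshold move with temperature, inclusion at $t$ does not trivially imply inclusion at $t_0$; one must argue that $\tau(t)-\tau(t_0)$ grows at least as fast as the score gaps $\epsilon(k,t)$, which is precisely the role of the separation constant $\rho$ in assumption (2) and is the content packaged into Lemma~\ref{lemma:prob_increase}. Everything else -- the indicator decomposition and the final summation -- is routine. I would also note that the monotonicity of $\epsilon(k,t)$ in $t$ (Corollary~\ref{coroll:epsilon_increase}) enters only to interpret the bound (smaller $t$ yields larger $\epsilon$, hence a smaller upper bound), not in the derivation itself.
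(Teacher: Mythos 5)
Your overall skeleton matches the paper's: write $|\mathcal{C}(\bm{x},t)|=\sum_{k\in\mathcal{Y}}\mathds{1}\{k\in\mathcal{C}(\bm{x},t)\}$, pass to per-class inclusion probabilities, invoke Lemma~\ref{lemma:prob_increase} to extract the $c_1[2\epsilon(k,t)]^{\gamma}$ terms, and sum. However, the middle step of your argument has a genuine gap. Your exact identity
$\mathbb{P}\{k\in\mathcal{C}(\bm{x},t)\}=\mathbb{P}\{k\in\mathcal{C}(\bm{x},t_0)\}-\mathbb{P}\{k\in\mathcal{C}(\bm{x},t_0),\,k\notin\mathcal{C}(\bm{x},t)\}$
only holds if the crossover event $\{k\notin\mathcal{C}(\bm{x},t_0),\,k\in\mathcal{C}(\bm{x},t)\}$ is null, i.e.\ if the sets are nested. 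You flag this nesting as the main obstacle but never establish it, and it is not obviously true: lowering the temperature raises both the individual scores and the calibrated threshold, and the score increase $\epsilon(k,t)$ varies with the instance $\bm{x}$ and the class $k$ while the threshold shift is a single population-level quantity, so a pointwise containment $\mathcal{C}(\bm{x},t)\subseteq\mathcal{C}(\bm{x},t_0)$ does not follow from the ingredients you cite. As written, your proof rests on an unproven (and possibly false) claim.

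The paper avoids this entirely with a weaker and trivially true inequality: since $\{k\in\mathcal{C}(\bm{x},t_0)\}\cap\{k\notin\mathcal{C}(\bm{x},t)\}\subseteq\{k\notin\mathcal{C}(\bm{x},t)\}$, one has
$\mathbb{P}\{k\in\mathcal{C}(\bm{x},t)\}=1-\mathbb{P}\{k\notin\mathcal{C}(\bm{x},t)\}\leq 1-\mathbb{P}\{k\in\mathcal{C}(\bm{x},t_0),\,k\notin\mathcal{C}(\bm{x},t)\}$,
and then Lemma~\ref{lemma:prob_increase} finishes the bound. No nesting, no control of the crossover term, and no appeal to assumption (2) beyond what is already packaged inside the lemma. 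Your proof is repaired by replacing your exact decomposition with this one-line containment; alternatively, note that your own bound $\mathbb{P}\{k\in\mathcal{C}(\bm{x},t_0)\}\leq 1$ combined with the trivial bound on the crossover term, $\mathbb{P}\{k\in\mathcal{C}(\bm{x},t),\,k\notin\mathcal{C}(\bm{x},t_0)\}\leq 1-\mathbb{P}\{k\in\mathcal{C}(\bm{x},t_0)\}$, yields the same conclusion without nesting --- but you did not take that route. Your closing remark that Corollary~\ref{coroll:epsilon_increase} enters only in the interpretation, not the derivation, is correct and consistent with the paper.
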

\begin{proof}
For all $t<t_0$, we consider the expectation size of $\mathcal{C}(\bm{x},t)$:
\begin{align*}
\underset{x\in\mathcal{X}}{\mathbb{E}}[|\mathcal{C}(\bm{x},t)|]
&=\underset{x\in\mathcal{X}}{\mathbb{E}}[\sum_{k\in\mathcal{Y}}\mathds{1}\{k\in\mathcal{C}(\bm{x},t)\}] \\
&=\sum_{k\in\mathcal{Y}}\underset{x\in\mathcal{X}}{\mathbb{E}}[\mathds{1}\{k\in\mathcal{C}(\bm{x},t)\}] \\
&=\sum_{k\in\mathcal{Y}}\mathbb{P}\{k\in\mathcal{C}(\bm{x},t)\} \\
&=\sum_{k\in\mathcal{Y}}[1-\mathbb{P}\{k\notin\mathcal{C}(\bm{x},t)\}].
\end{align*}
Due to the fact that 
$$\mathbb{P}\{k\in\mathcal{C}(\bm{x},t_0),k\notin\mathcal{C}(\bm{x},t)\}\leq\mathbb{P}\{k\notin\mathcal{C}(\bm{x},t)\},$$
we have
$$\underset{x\in\mathcal{X}}{\mathbb{E}}[|\mathcal{C}(\bm{x},t)|]\leq\sum_{k\in\mathcal{Y}}[1-\mathbb{P}\{k\in\mathcal{C}(\bm{x},t_0),k\notin\mathcal{C}(\bm{x},t)\}].$$
Continuing from Lemma~\ref{lemma:prob_increase}, we can get
$$\underset{x\in\mathcal{X}}{\mathbb{E}}[|\mathcal{C}(\bm{x},t)|]\leq K(1-c_1(2\epsilon(k,t))^{\gamma})=K-\sum_{k\in\mathcal{Y}} c_1(2\epsilon(k,t))^{\gamma}.$$
\end{proof}

\section{Results of ConfTS on ImageNet-V2} \label{appendix:confts_new_distribution}
In this section, we show that ConfTS can effectively improve the efficiency of adaptive conformal prediction on the ImageNet-V2 dataset. In particular, we employ pre-trained ResNet50, DenseNet121, VGG16, and ViT-B-16 on ImageNet. We leverage APS and RAPS to construct prediction sets and the hyper-parameters of RAPS are set to be $k_{reg}=1$ and $\lambda=0.001$. In Table~\ref{table:method_ds}, results show that after being employed with ConfTS, APS, and RAPS tend to construct smaller prediction sets and maintain the desired coverage. 

\begin{table}[H]
\caption{The performance comparison of conformal prediction with baseline and ConfTS under distribution shifts. ``*'' denotes significant improvement (two-sample t-test at a 0.1 confidence level). ``$\downarrow$'' indicates smaller values are better. \textbf{Bold} numbers are superior results. Results show that ConfTS can improve the efficiency of APS and RAPS on a new distribution.}
\vspace{0.2cm}
\label{table:method_ds}
\centering
\resizebox{0.9\textwidth}{!}{
\setlength{\tabcolsep}{2mm}{
\begin{tabular}{@{}cccccccccccc@{}}
\toprule
\multirow{2}{*}{Metrics} & \multicolumn{2}{c}{ResNet50} &  & \multicolumn{2}{c}{DenseNet121} &  & \multicolumn{2}{c}{VGG16} &  & \multicolumn{2}{c}{ViT} \\ \cmidrule(lr){2-3} \cmidrule(lr){5-6} \cmidrule(lr){8-9} \cmidrule(l){11-12} 
                       & Baseline       & ConfTS      &  & Baseline        & ConfTS        &  & Baseline     & ConfTS     &  & Baseline    & ConfTS    \\ \midrule
Avg.size(APS) $\downarrow$          & 24.6           & \textbf{11.9}*        &  & 50.3            & \textbf{13.3}*          &  & 27.2         & \textbf{17.9}*       &  & 34.2        & \textbf{10.1}*      \\ \midrule
Coverage(APS)          & 0.90            & 0.90         &  & 0.90             & 0.90           &  & 0.90          & 0.90        &  & 0.90         & 0.90       \\ \midrule
Avg.size(RAPS) $\downarrow$         & 13.3           & \textbf{11.3}*        &  & 13.7            & \textbf{9.67}*          &  & 16.3         & \textbf{13.6}*       &  & 14.9        & \textbf{4.62}*      \\ \midrule
Coverage(RAPS)         & 0.90            & 0.90         &  & 0.90             & 0.90           &  & 0.90          & 0.90        &  & 0.90         & 0.90       \\ \bottomrule
\end{tabular}}}
\end{table}

\section{Results of ConfTS on CIFAR-100} \label{appendix:confts_cifar100}
In this section, we show that ConfTS can effectively improve the efficiency of adaptive conformal prediction on the CIFAR100 dataset. In particular, we train ResNet18, ResNet50, ResNet191, ResNext50, ResNext101, DenseNet121 and VGG16 from scratch on CIFAR-100 datasets. We leverage APS and RAPS to generate prediction sets at error rates $\alpha\in\{0.1,0.05\}$. The hyper-parameter for RAPS is set to be $k_{reg}=1$ and $\lambda=0.001$. In Table~\ref{table:confts_cifar100}, results show that after being employed with ConfTS, APS, and RAPS tend to construct smaller prediction sets and maintain the desired coverage. 

\newpage

\begin{table}[H]
\centering
\caption{The performance comparison of the baseline and ConfTS on CIFAR-100 dataset. We employ five models trained on CIFAR-100. ``*'' denotes significant improvement (two-sample t-test at a 0.1 confidence level). ``$\downarrow$'' indicates smaller values are better. \textbf{Bold} numbers are superior results. Results show that our
ConfTS can improve the performance of APS and RAPS, maintaining the desired coverage rate.} 
\label{table:confts_cifar100}
\vspace{0.2cm}
\renewcommand\arraystretch{0.8}
\resizebox{\textwidth}{!}{
\setlength{\tabcolsep}{5mm}{
\begin{tabular}{@{}ccccccc@{}}
\toprule
\multirow{3}{*}{Model}       & \multirow{3}{*}{Score} & \multicolumn{2}{c}{$\alpha=0.1$} &                   & \multicolumn{2}{c}{$\alpha=0.05$} \\ \cmidrule(lr){3-4} \cmidrule(l){6-7} 
 &                            & Coverage      & Average $\downarrow$ size  &                   & Coverage      & Average size $\downarrow$   \\ \cmidrule(l){3-7} 
 &                            & \multicolumn{5}{c}{Baseline / ConfTS}                 \\ \midrule
\multirow{2}{*}{ResNet18}    & APS                        & 0.902 / 0.901 & 7.049 / \textbf{6.547*} &                   & 0.949 / 0.949 & 12.58 / \textbf{11.91}* \\ \cmidrule(l){2-7} 
& RAPS                       & 0.900 / 0.901 & 5.745 / \textbf{4.948}* &                   & 0.949 / 0.949 & 8.180 / \textbf{7.689}*  \\ \midrule
\multirow{2}{*}{ResNet50}    & APS                        & 0.901 / 0.900 & 5.614 / \textbf{5.322}* &                   & 0.951 / 0.951 & 10.27 / \textbf{10.00}*  \\ \cmidrule(l){2-7} 
& RAPS                       & 0.900 / 0.900 & 4.707 / \textbf{4.409}* &                   & 0.951 / 0.950 & 7.041 / \textbf{6.811}*  \\ \midrule
\multirow{2}{*}{ResNet101}   & APS                        & 0.900 / 0.900 & 5.049 / \textbf{4.917}* &                   & 0.949 / 0.949 & 9.520 / \textbf{9.405}*  \\ \cmidrule(l){2-7} 
& RAPS                       & 0.901 / 0.900 & 4.324 / \textbf{4.145}* &                   & 0.950 / 0.950 & 6.515 / \textbf{6.450}*  \\ \midrule
\multirow{2}{*}{ResNext50}   & APS                        & 0.900 / 0.900 & 4.668 / \textbf{4.436}* &                   & 0.950 / 0.950 & 8.911 / \textbf{8.626}*  \\ \cmidrule(l){2-7} 
& RAPS                       & 0.901 / 0.901 & 4.050 / \textbf{3.811}* &                   & 0.951 / 0.951 & 6.109 / \textbf{5.854}*  \\ \midrule
\multirow{2}{*}{ResNext101}  & APS                        & 0.900 / 0.900 & 4.125 / \textbf{3.988}* &                   & 0.950 / 0.950 & 7.801 / \textbf{7.614}*  \\ \cmidrule(l){2-7} 
& RAPS                       & 0.901 / 0.901 & 3.631 / \textbf{3.492}* &                   & 0.950 / 0.950 & 5.469 / \textbf{5.253}*  \\ \midrule
\multirow{2}{*}{DenseNet121} & APS                        & 0.899 / 0.899 & 4.401 / \textbf{3.901}* &                   & 0.949 / 0.949 & 8.364 / \textbf{7.592}*  \\ \cmidrule(l){2-7} 
& RAPS                       & 0.898 / 0.898 & 3.961 / \textbf{3.434}* &                   & 0.950 / 0.949 & 6.336 / \textbf{5.222}*  \\ \midrule
\multirow{2}{*}{VGG16}       & APS                        & 0.900 / 0.900 & 7.681 / \textbf{6.658}* &                   & 0.949 / 0.950 & 12.36 / \textbf{11.70}*  \\ \cmidrule(l){2-7} 
& RAPS                       & 0.899 / 0.900 & 6.826 / \textbf{5.304}* &                   & 0.949 / 0.949 & \textbf{10.32}* / 11.70  \\ \bottomrule
\end{tabular}}}
\end{table}

\section{Results of ConfTS on RAPS with various penalty terms}\label{appendix:penalty}
Recall that the RAPS method modifies APS by including a penalty term $\lambda$ (see Eq.~(\ref{eq:raps_score})). In this section, we investigate the performance of ConfTS on RAPS with various penalty terms. In particular, we employ the same model architectures with the main experiment on ImageNet (see Section~\ref{subsection:experimental_setup}) and generate prediction sets with RAPS ($k_{reg}=1$) at an error rate $\alpha=0.1$, varying the penalty $\lambda\in\{0.002,0.004,0.006,0.01,0.015,0.02\}$ and setting $k_{reg}$ to 1. Table~\ref{table:confts_raps_penalty1} and \ref{table:confts_raps_penalty2} show that our ConfTS can enhance the efficiency of RAPS across various penalty values.

\begin{table}[H]
\centering
\caption{The performance of ConfTS on RAPS with various penalty terms $\lambda\in\{0.002,0.004,0.006\}$ at ImageNet. ``*'' denotes significant improvement (two-sample t-test at a 0.1 confidence level). ``$\downarrow$'' indicates smaller values are better. \textbf{Bold} numbers are superior results. Results show that our ConfTS can enhance the efficiency of RAPS across various penalty values.} 
\label{table:confts_raps_penalty1}
\vspace{0.2cm}
\renewcommand\arraystretch{0.8}
\resizebox{\textwidth}{!}{
\setlength{\tabcolsep}{2mm}{
\begin{tabular}{@{}ccccccccc@{}}
\toprule
\multirow{3}{*}{Model} & \multicolumn{2}{c}{$\lambda=0.002$} &  & \multicolumn{2}{c}{$\lambda=0.004$} &  & \multicolumn{2}{c}{$\lambda=0.006$} \\ \cmidrule(lr){2-3} \cmidrule(lr){5-6} \cmidrule(l){8-9} 
& Coverage        & Average size $\downarrow$    &  & Coverage        & Average size $\downarrow$    &  & Coverage        & Average size $\downarrow$    \\ \cmidrule(l){2-9} 
& \multicolumn{8}{c}{Baseline / ConfTS}                                                                           \\ \midrule
ResNet18               & 0.901 / 0.900   & 8.273 / \textbf{4.517}*   &  & 0.901 / 0.901   & 6.861 / \textbf{4.319}*   &  & 0.901 / 0.901   & 6.109 / \textbf{4.282}*   \\ \midrule
ResNet50               & 0.899 / 0.900   & 5.097 / \textbf{3.231}*   &  & 0.899 / 0.900   & 4.272 / \textbf{2.892}*   &  & 0.899 / 0.900   & 3.858 / \textbf{2.703}*   \\ \midrule
ResNet101              & 0.900 / 0.900   & 4.190 / \textbf{2.987}*   &  & 0.901 / 0.899   & 3.599 / \textbf{2.686}*   &  & 0.900 / 0.900   & 3.267 / \textbf{2.516}*   \\ \midrule
DenseNet121            & 0.901 / 0.901   & 5.780 / \textbf{3.340}*   &  & 0.900 / 0.900   & 4.888 / \textbf{3.014}*   &  & 0.900 / 0.900   & 4.408 / \textbf{2.836}*   \\ \midrule
VGG16                  & 0.901 / 0.900   & 7.030 / \textbf{3.902}*   &  & 0.901 / 0.900   & 5.864 / \textbf{3.514}*   &  & 0.901 / 0.900   & 5.241 / \textbf{3.344}*   \\ \midrule
ViT-B-16               & 0.901 / 0.900   & 5.308 / \textbf{1.731}*   &  & 0.901 / 0.901   & 4.023 / \textbf{1.655}*   &  & 0.901 / 0.901   & 3.453 / \textbf{1.611}*   \\ \bottomrule
\end{tabular}}}
\end{table}

\begin{table}[H]
\centering
\caption{The performance of ConfTS on RAPS with various penalty terms $\lambda\in\{0.01,0.015,0.02\}$ at ImageNet. ``*'' denotes significant improvement (two-sample t-test at a 0.1 confidence level). ``$\downarrow$'' indicates smaller values are better. \textbf{Bold} numbers are superior results. Results show that our ConfTS can enhance the efficiency of RAPS across various penalty values.} 
\label{table:confts_raps_penalty2}
\vspace{0.2cm}
\renewcommand\arraystretch{0.8}
\resizebox{\textwidth}{!}{
\setlength{\tabcolsep}{2mm}{
\begin{tabular}{@{}ccccccccc@{}}
\toprule
\multirow{3}{*}{Model} & \multicolumn{2}{c}{$\lambda=0.01$} &  & \multicolumn{2}{c}{$\lambda=0.015$} &  & \multicolumn{2}{c}{$\lambda=0.02$} \\ \cmidrule(lr){2-3} \cmidrule(lr){5-6} \cmidrule(l){8-9} 
& Coverage       & Average size $\downarrow$    &  & Coverage        & Average size $\downarrow$    &  & Coverage        & Average size $\downarrow$   \\ \cmidrule(l){2-9} 
& \multicolumn{8}{c}{Baseline / ConfTS}                                                                         \\ \midrule
ResNet18               & 0.901 / 0.901  & 5.281 / \textbf{4.449}*   &  & 0.901 / 0.901   & 4.712 / \textbf{4.683}*   &  & 0.900 / 0.900   & \textbf{4.452}* / 4.917  \\ \midrule
ResNet50               & 0.899 / 0.900  & 3.380 /  \textbf{2.505}*  &  & 0.900 / 0.901   & 3.048 / \textbf{2.373}*   &  & 0.901 / 0.901   & 2.860 / \textbf{2.321}*   \\ \midrule
ResNet101              & 0.900 / 0.900  & 2.902 / \textbf{2.317}*   &  & 0.900 / 0.899   & 2.643 / \textbf{2.168}*   &  & 0.900 / 0.900   & 2.484 / \textbf{2.096}*  \\ \midrule
DenseNet121            & 0.900 / 0.900  & 3.843 / \textbf{2.657}*   &  & 0.900 / 0.900   & 3.452 / \textbf{2.587}*   &  & 0.901 / 0.899   & 3.213 / \textbf{2.750}*  \\ \midrule
VGG16                  & 0.900 / 0.900  & 4.537 / \textbf{3.371}*   &  & 0.900 / 0.900   & 4.060 / \textbf{3.423}*   &  & 0.899 / 0.899   & 3.744 / \textbf{3.530}*  \\ \midrule
ViT-B-16               & 0.901 / 0.900  & 2.872 / \textbf{1.564}*   &  & 0.901 / 0.900   & 2.508 / \textbf{1.543}*   &  & 0.900 / 0.900   & 2.285 / \textbf{1.535}*  \\ \bottomrule
\end{tabular}}}
\end{table}

\section{Results of ConfTS on SAPS}\label{appendix:saps}
Recall that APS calculates the non-conformity score by accumulating the sorted softmax values in descending order. However, the softmax probabilities typically exhibit a long-tailed distribution, allowing for easy inclusion of those tail classes in the prediction sets. To alleviate this issue, \textit{Sorted Adaptive Prediction Sets (SAPS)} \citep{huangconformal} discards all the probability values except for the maximum softmax probability when computing the non-conformity score. Formally, the non-conformity score of SAPS for a data pair $(\boldsymbol{x},y)$ can be calculated as 
\begin{equation*}\label{eq:SAPS_score}\scalebox{0.85}{$
 S_{saps}(\boldsymbol{x},y,u;\hat{\pi}) := \left\{ 
  \begin{array}{ll}
    u \cdot\hat{\pi}_{max} (\boldsymbol{x}), \qquad \text{if\quad} o(y,\hat{\pi}(\boldsymbol{x}))=1,\\
    \hat{\pi}_{max} (\boldsymbol{x}) +(o(y,\hat{\pi}(\boldsymbol{x}))-2+u) \cdot \lambda, \quad \text{else},
  \end{array}
\right.$}
\end{equation*} 
where $\lambda$ is a hyperparameter representing the weight of ranking information,  $\hat{\pi}_{max} (\boldsymbol{x})$ denotes the maximum softmax probability and $u$ is a uniform random variable. 

In this section, we investigate the performance of ConfTS on SAPS with various weight terms. In particular, we employ the same model architectures with the main experiment on ImageNet (see Section~\ref{subsection:experimental_setup}) and generate prediction sets with SAPS at an error rate $\alpha=0.1$, varying the weight $\lambda\in\{0.01,0.02,0.03,0.05,0.1,0.12\}$. Table~\ref{table:confts_saps1} and Table~\ref{table:confts_saps2} show that our ConfTS can enhance the efficiency of SAPS across various weights.

\begin{table}[H]
\centering
\caption{The Performance of ConfTS on SAPS with various penalty terms $\lambda\in[0.005,0.01,0.02]$. ``*'' denotes significant improvement (two-sample t-test at a 0.1 confidence level). ``$\downarrow$'' indicates smaller values are better. \textbf{Bold} numbers are superior results. Results show that our ConfTS can enhance the efficiency of SAPS across various penalty values.} 
\label{table:confts_saps1}
\vspace{0.2cm}
\renewcommand\arraystretch{0.8}
\resizebox{\textwidth}{!}{
\setlength{\tabcolsep}{2mm}{
\begin{tabular}{@{}ccccccccc@{}}
\toprule
\multirow{3}{*}{Model} & \multicolumn{2}{c}{$\lambda=0.005$} &  & \multicolumn{2}{c}{$\lambda=0.01$} &  & \multicolumn{2}{c}{$\lambda=0.02$} \\ \cmidrule(l){2-9} 
& Coverage        & Average size $\downarrow$   &  & Coverage       & Average size $\downarrow$   &  & Coverage       & Average size $\downarrow$   \\ \cmidrule(l){2-9} 
& \multicolumn{8}{c}{Baseline / ConfTS}                                                                      \\ \midrule
ResNet18               & 0.901 / 0.900   & 37.03 / \textbf{27.38}*  &  & 0.901 / 0.902  & 19.91 / \textbf{14.81}*  &  & 0.900 / 0.901  & 11.21 / \textbf{8.469}*  \\ \midrule
ResNet50               & 0.899 / 0.899   & 27.13 / \textbf{21.37}*  &  & 0.899 / 0.899  & 14.45 / \textbf{11.48}*  &  & 0.899 / 0.899  & 8.016 / \textbf{6.510}*  \\ \midrule
ResNet101              & 0.901 / 0.901   & 24.89 / \textbf{20.78}*  &  & 0.901 / 0.901  & 13.21 / \textbf{11.16}*  &  & 0.901 / 0.901  & 7.350 / \textbf{6.287}*  \\ \midrule
DenseNet121            & 0.900 / 0.901   & 30.54 / \textbf{22.67}*  &  & 0.900 / 0.901  & 16.28 / \textbf{12.30}*  &  & 0.901 / 0.901  & 9.085 / \textbf{6.968}*  \\ \midrule
VGG16                  & 0.900 / 0.900   & 34.88 / \textbf{25.57}*  &  & 0.900 / 0.900  & 18.56 / \textbf{13.71}*  &  & 0.901 / 0.900  & 10.34 / \textbf{7.788}*  \\ \midrule
ViT-B-16               & 0.901 / 0.900   & 18.90 / \textbf{11.51}*  &  & 0.901 / 0.900  & 10.11 / \textbf{6.379}*  &  & 0.900 / 0.900  & 5.669 / \textbf{3.784}*  \\ \midrule
Average                & 0.900 / 0.900   & 28.89 / \textbf{21.54}*  &  & 0.900 / 0.900  & 15.42 / \textbf{11.63}*  &  & 0.900 / 0.900  & 8.611 / \textbf{6.634}*  \\ \bottomrule
\end{tabular}}}
\end{table}

\begin{table}[H]
\centering
\caption{The performance of ConfTS on SAPS with various penalty terms $\lambda\in\{0.03,0.05,0.1\}$. ``*'' denotes significant improvement (two-sample t-test at a 0.1 confidence level). ``$\downarrow$'' indicates smaller values are better. \textbf{Bold} numbers are superior results. Results show that our ConfTS can enhance the efficiency of SAPS across various penalty values.} 
\label{table:confts_saps2}
\vspace{0.2cm}
\renewcommand\arraystretch{0.8}
\resizebox{\textwidth}{!}{
\setlength{\tabcolsep}{2mm}{
\begin{tabular}{@{}ccccccccc@{}}
\toprule
\multirow{3}{*}{Model} & \multicolumn{2}{c}{$\lambda=0.03$} &  & \multicolumn{2}{c}{$\lambda=0.05$} &  & \multicolumn{2}{c}{$\lambda=0.1$} \\ \cmidrule(l){2-9} 
& Coverage       & Average size $\downarrow$  &  & Coverage       & Average size $\downarrow$  &  & Coverage       & Average size $\downarrow$   \\ \cmidrule(l){2-9} 
& \multicolumn{8}{c}{Baseline / ConfTS}                                                                   \\ \midrule
ResNet18    & 0.900 / 0.900 & 8.206 / \textbf{6.269}* &  & 0.900 / 0.900 & 5.747 / \textbf{4.716}* &  & 0.901 / 0.901     & \textbf{4.143}* / 4.581 \\ \midrule
ResNet50    & 0.899 / 0.899 & 5.853 / \textbf{4.838}* &  & 0.899 / 0.900 & 4.122 / \textbf{3.464}* &  & 0.899 / 0.900     & 2.753 / \textbf{2.460}* \\ \midrule
ResNet101   & 0.901 / 0.901 & 5.364 / \textbf{4.640}* &  & 0.901 / 0.901 & 3.756 / \textbf{3.293}* &  & 0.899 / 0.900     & 2.511 / \textbf{2.286}* \\ \midrule
DenseNet121 & 0.900 / 0.900 & 6.600 / \textbf{5.151}* &  & 0.900 / 0.900 & 4.601 / \textbf{3.672}* &  & 0.900 / 0.900     & 3.063 / \textbf{2.811}* \\ \midrule
VGG16       & 0.900 / 0.900 & 7.504 / \textbf{5.785}* &  & 0.900 / 0.900 & 5.225 / \textbf{4.173}* &  & 0.900 / 0.900     & \textbf{3.483}* / 3.551 \\ \midrule
ViT-B-16    & 0.900 / 0.900 & 4.197 / \textbf{2.905}* &  & 0.900 / 0.900 & 2.995 / \textbf{2.212}* &  & 0.901 / 0.900     & 2.114 / \textbf{1.768}* \\ \midrule
Average     & 0.900 / 0.900 & 6.287 / \textbf{4.931}* &  & 0.900 / 0.900 & 4.407 / \textbf{3.588}* &  & 0.900 / 0.900     & 3.011 / \textbf{2.909}*  \\ \bottomrule
\end{tabular}}}
\end{table}

\begin{figure}[H]
\centering
\begin{subfigure}{0.24\textwidth}
    \centering
    \includegraphics[width=\linewidth]{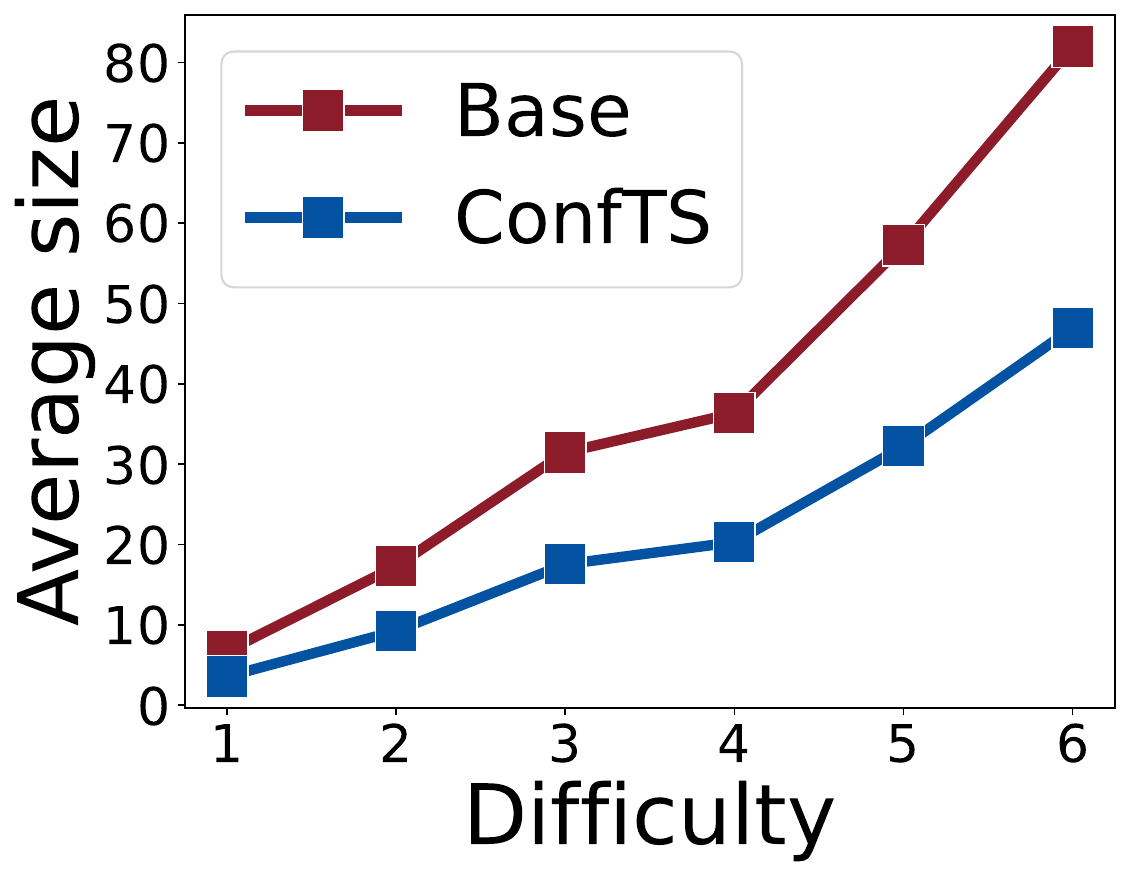}
    \caption{ResNet18}
    \label{fig:adaptiveness1}
\end{subfigure}
\begin{subfigure}{0.24\textwidth}
    \centering
    \includegraphics[width=\linewidth]{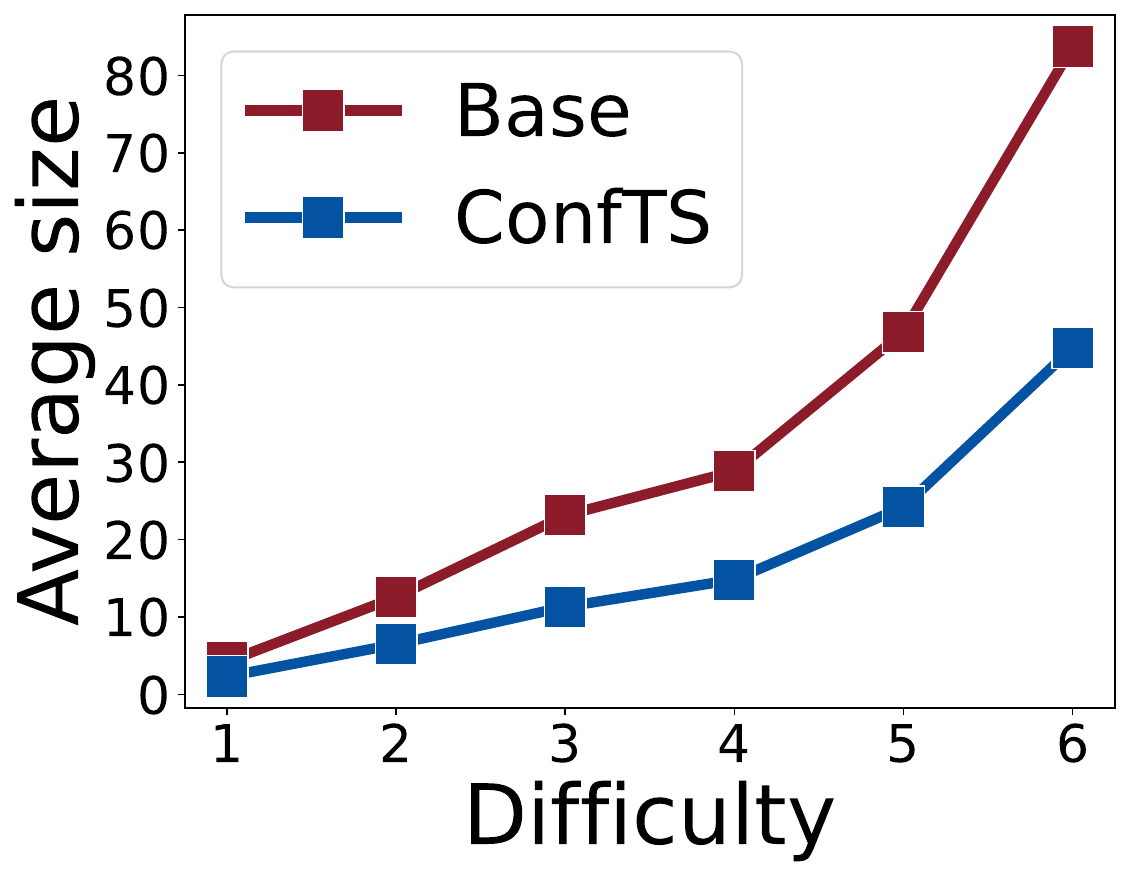}
    \caption{ResNet50}
    \label{fig:adaptiveness2}
\end{subfigure}
\begin{subfigure}{0.24\textwidth}
    \centering
    \includegraphics[width=\linewidth]{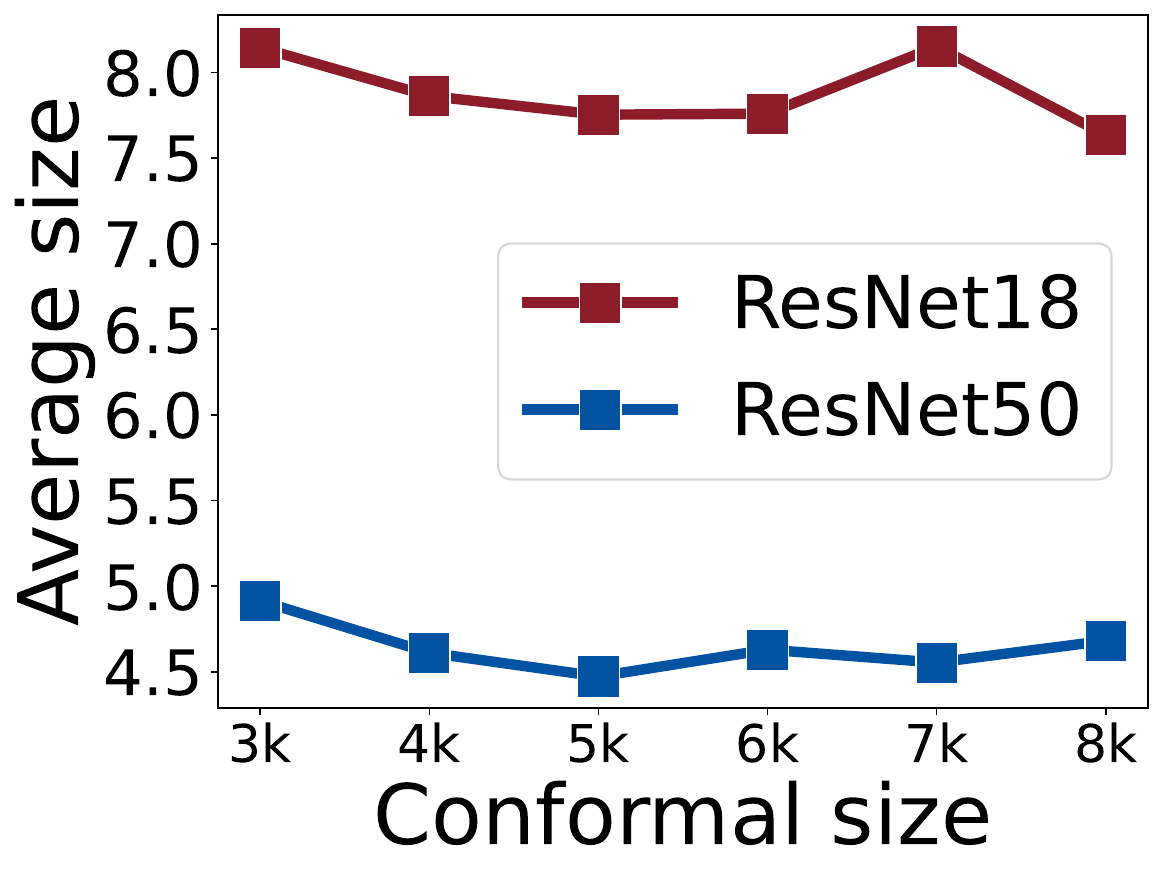}
    \caption{}
    \label{fig:conformal_size}
\end{subfigure}
\begin{subfigure}{0.24\textwidth}
    \centering
    \includegraphics[width=\linewidth]{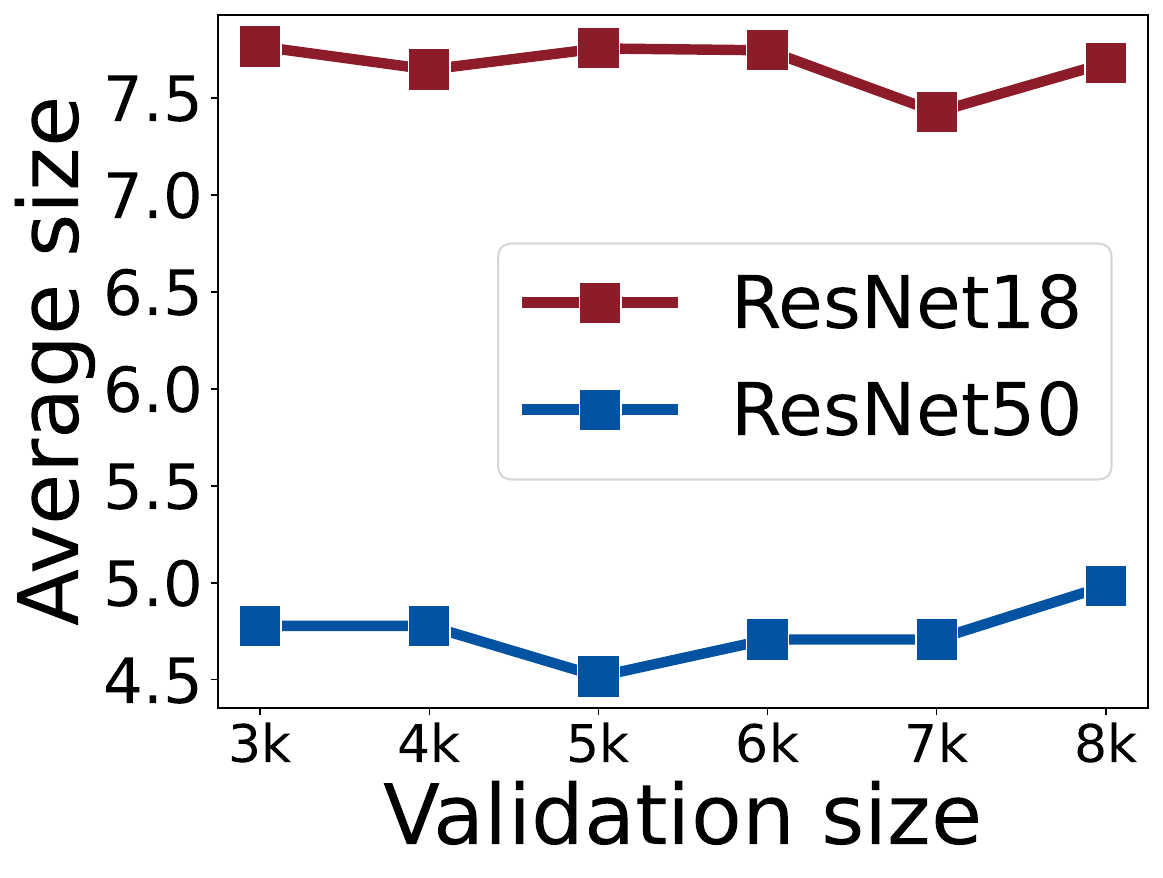}
    \caption{}
    \label{fig:temperature_size}
\end{subfigure}
  \caption{(a)\&(b): Average sizes of examples with different difficulties using APS on ResNet18 and ResNet50 respectively. Results show that ConfTS can maintain adaptiveness. 
  (c)\&(d) Average sizes of APS employed with ConfTS under various sizes of (c) conformal dataset (d) validation dataset. Results show that our ConfTS is robust to variations in the validation and conformal dataset size.}
\end{figure}

\section{ConfTS maintains the adaptiveness.}\label{appendix:adaptiveness}
Adaptiveness \citep{romano2020classification,angelopoulos2020uncertainty,seedat2023improving} requires prediction sets to communicate instance-wise uncertainty: easy examples should obtain smaller sets than hard ones. In this part, we examine the impact of ConfTS on the adaptiveness of prediction sets and measure the instance difficulty by the order of the ground truth $o(y,\pi(\bm{x}))$. Specifically, we partition the sample by label order: 1, 2-3, 4-6, 7-10, 11-100, 101-1000, following \citep{angelopoulos2020uncertainty}. Figure~\ref{fig:adaptiveness1} and Figure~\ref{fig:adaptiveness2} show that prediction sets, when applied with ConfTS, satisfy the adaptiveness property. Notably, employing ConfTS can promote smaller prediction sets for all examples ranging from easy to hard. Overall, the results demonstrate that APS with ConfTS succeeds in producing adaptive prediction sets: examples with lower difficulty obtain smaller prediction sets on average.

\section{Ablation study on the size of validation and calibration set.} 
In the experiment, ConfTS splits the calibration data into two subsets: validation set for tuning the temperature and conformal set for conformal calibration. In this part, we analyze the impact of this split on the performance of ConfTS by varying the validation and conformal dataset sizes from 3,000 to 8,000 samples while maintaining the other part at 5,000 samples. We use ResNet18 and ResNet50 on ImageNet, with APS at $\alpha=0.1$. Figure~\ref{fig:conformal_size} and \ref{fig:temperature_size} show that the performance of ConfTS remains consistent across different conformal dataset sizes and validation dataset sizes. Based on these results, we choose a calibration set including 10000 samples and split it into two equal subsets for the validation and conformal set. In summary, the performance of ConfTS is robust to variations in the validation dataset and conformal dataset size.

\end{document}